
\documentclass{article}

\usepackage{amsmath}
\usepackage{subfigure}
\usepackage[dvipsnames]{xcolor}
\usepackage{graphicx}
\usepackage{stmaryrd}
\usepackage{multirow}
\usepackage{subfigure}
\PassOptionsToPackage{numbers, compress}{natbib}

\usepackage{lipsum}
\usepackage[skip=0.333\baselineskip]{caption}
\usepackage{subcaption}
\usepackage{amsthm}
\usepackage[linesnumbered,ruled,vlined]{algorithm2e}

\DeclareMathOperator*{\argmax}{arg\,max}

\newcommand{\bigO}{\mathcal{O}}

\newcommand{\dotproduct}[2]{\langle #1 , #2 \rangle}
\newcommand{\VorX}[1]{\mathcal{S}_{\text{Voronoï}}(#1 \mid \mathcal{X}_n)}
\newcommand{\VorXone}[1]{\mathcal{S}_{\text{Voronoï}}(#1 \mid \mathcal{X}_{n+1})}
\newcommand{\Vorb}[2]{\mathcal{S}_{\text{Voronoï}}(#1 \mid #2)}

\newcommand{\VS}{\tilde{V}}
\newcommand*\diff{\mathop{}\!\mathrm{d}}

\def\Var{{\textrm{Var}}\,}
\def\E{\mathop{\mathbb{E}}\,}

\usepackage{microtype}
\usepackage{graphicx}
\usepackage{subfigure}
\usepackage{booktabs} 

\usepackage{hyperref}



\usepackage[accepted]{icml2025}

\usepackage{amsmath}
\usepackage{amssymb}
\usepackage{mathtools}
\usepackage{amsthm}

\usepackage[capitalize,noabbrev]{cleveref}

\theoremstyle{plain}
\newtheorem{theorem}{Theorem}[section]
\newtheorem{proposition}[theorem]{Proposition}
\newtheorem{lemma}[theorem]{Lemma}
\newtheorem{corollary}[theorem]{Corollary}
\theoremstyle{definition}

\theoremstyle{remark}

\usepackage[textsize=tiny]{todonotes}

\icmltitlerunning{Exploring Large Action Sets with Hyperspherical Embeddings using von Mises-Fisher Sampling}

\begin{document}

\twocolumn[
\icmltitle{Exploring Large Action Sets with Hyperspherical Embeddings \\ using von Mises-Fisher Sampling}



\icmlsetsymbol{equal}{*}

\begin{icmlauthorlist}
\icmlauthor{Walid Bendada}{deezer,dauphine}
\icmlauthor{Guillaume Salha-Galvan}{shanghai}
\icmlauthor{Romain Hennequin}{deezer}

\icmlauthor{Théo Bontempelli}{deezer}
\icmlauthor{Thomas Bouabça}{deezer}
\icmlauthor{Tristan Cazenave}{dauphine}
\end{icmlauthorlist}

\icmlaffiliation{deezer}{Deezer Research, Paris, France.}
\icmlaffiliation{dauphine}{LAMSADE, Université Paris Dauphine, PSL, Paris, France.}
\icmlaffiliation{shanghai}{SPEIT, Shanghai Jiao Tong University, Shanghai, China}

\icmlcorrespondingauthor{Walid Bendada}{bendadaw@gmail.com}

\icmlkeywords{Machine Learning, ICML}

\vskip 0.3in
]



\printAffiliationsAndNotice{}  

\begin{abstract}
This paper introduces von Mises-Fisher exploration (vMF-exp), a scalable method for exploring large action sets in reinforcement learning problems where hyperspherical embedding vectors represent these actions. vMF-exp involves initially sampling a state embedding representation using a von Mises-Fisher distribution, then exploring this representation's nearest neighbors, which scales to virtually unlimited numbers of candidate actions.
We show that, under theoretical assumptions, vMF-exp asymptotically maintains the same probability of exploring each action as Boltzmann Exploration (B-exp), a popular alternative that, nonetheless, suffers from scalability issues as it requires computing softmax values for each action.
Consequently, vMF-exp serves as a scalable alternative to B-exp for exploring large action sets with hyperspherical embeddings. 
Experiments on simulated data, real-world public data, and the successful large-scale deployment of vMF-exp on the recommender system of a global music streaming service empirically validate the key properties of the proposed method.
\end{abstract}

\section{Introduction}
\label{s1}

Exploration is a fundamental component of the reinforcement learning (RL) paradigm \citep{amin2021survey,mcfarlane2018survey,sutton2018reinforcement}. It~allows RL agents to gather valuable information about their environment and identify optimal actions that maximize rewards~\citep{amin2021survey,chiappa2024latent,dulac2015deep,jin2020reward,ladosz2022exploration,mcfarlane2018survey,reynolds2002reinforcement,slivkins2019introduction,sutton2018reinforcement,tang2017exploration}. However, as the set of actions to explore grows larger, the exploration process becomes increasingly challenging. Indeed, large action sets can lead to higher computational costs, longer learning times, and the risk of inadequate exploration and suboptimal policy development~\citep{amin2021survey,chen2021user,dulac2015deep,lillicrap2016continuous,sutton2018reinforcement,tomasi2023automatic}.

As an illustration, consider a recommender system on a music streaming service like Apple Music or Spotify, curating playlists of songs ``inspired by'' an initial selection to help users discover~music~\citep{bendada2023track}.
In practice, these services often generate such playlists all at once, using efficient nearest neighbor search systems~\citep{johnson2019billion,li2019approximate} to retrieve songs most similar to the initial one, in a song embedding vector space learned using collaborative filtering or content-based methods~\citep{bendada2023track,bendada2023scalable,bontempelli2022flow,jacobson2016music,schedl2018current,zamani2019analysis}. Alternatively, one could formalize this task as an RL problem~\citep{tomasi2023automatic}, where the recommender system (i.e., the agent) would adaptively select the next song to recommend (i.e., the next action) based on user feedback on previously recommended songs (i.e., the rewards, such as likes or skips). Using an RL approach instead of generating the playlist at once would have the advantage of dynamically learning from user feedback to identify the best recommendations~\citep{afsar2022reinforcement,tomasi2023automatic}. However, music streaming services offer access to large catalogs with several millions of songs~\citep{bendada2020carousel,jacobson2016music,schedl2018current}. Therefore, the agent would need to consider millions of possible actions for exploration, increasing the complexity of~this~task.

In particular, Boltzmann Exploration (B-exp)~\citep{cesa2017boltzmann,sutton2018reinforcement}, one of the prevalent exploration strategies to sample actions based on embedding similarities, would become practically intractable as it would require computing softmax values over millions of elements (see Section~\ref{sec:preliminaries}). Furthermore, in large action sets, many actions are often irrelevant; in our example, most songs would constitute poor recommendations~\citep{tomasi2023automatic}. Therefore, random exploration methods like $\varepsilon$-greedy~\citep{dann2022guarantees,sutton2018reinforcement}, although more efficient than B-exp, would also be unsuitable for production use. Since these methods ignore song similarities, each song, including inappropriate ones, would have an equal chance of being selected for exploration. This could result in negative user feedback and a poor perception of the service~\citep{tomasi2023automatic}. Lastly,  deterministic exploration strategies would also be ineffective. Systems serving millions of users often rely on batch RL~\citep{lange2012batch} since updating models after every trajectory is impractical. Batch RL, unlike on-policy learning, requires exploring actions non-deterministically given a state, and deterministic exploration would result in redundant trajectories and slow convergence~\citep{bendada2020carousel}.

In summary, exploration remains challenging in RL problems characterized by large action sets and where accounting for embedding similarities is crucial, like our recommendation example. Overall, although a growing body of scientific research has been dedicated to adapting RL models for recommendation (see, e.g., the survey by \citet{afsar2022reinforcement}), evidence of RL adoption in commercial recommender systems exists but remains limited~\citep{chen2019top, chen2021user, chen2022off, tomasi2023automatic}.
The few existing solutions typically settle for a workaround by using a truncated version of B-exp (TB-exp). In TB-exp, a small subset of candidate actions is first selected, e.g., using approximate nearest neighbor search (a framework sometimes referred to as the Wolpertinger architecture \citep{dulac2015deep}). Softmax values are then computed among those candidates only~\citep{chen2019top, chen2021user, chen2022off}. YouTube, for instance, employs this technique for video recommendation~\citep{chen2019top}. TB-exp allows for exploration in the close embedding neighborhood of a given state; however, it restricts the number of candidate actions based on technical considerations rather than optimal convergence properties.
Although exploring beyond this restricted neighborhood might be beneficial, finding the best way to do so in large-scale settings remains an open~research~question.

In this paper\footnote{Parts of the content published in this ICML 2025 conference paper were previously presented at two non-archival workshops: ICML 2024 ARLET \citep{bendada2024vmf} and ICLR 2025 FPI \citep{bendada2025mises}.}, we propose to address this important question with a focus on its theoretical foundations.
Our work examines the specific setting where actions are represented by embedding vectors with unit Euclidean norm, i.e., vectors lying on a unit hypersphere. As detailed in Section~\ref{sec:preliminaries}, this setting aligns with many real-world applications. Specifically, our contributions in this paper are as follows:
\begin{itemize}
    \item We present von Mises-Fisher exploration (vMF-exp), a scalable sampling method for exploring large sets of actions represented by hyperspherical embedding vectors. vMF-exp involves initially sampling a state embedding vector on the unit hypersphere using a von Mises-Fisher distribution~\citep{fisher1953dispersion}, then exploring the approximate nearest neighbors of this representation. This strategy effectively scales to large sets with millions of candidate actions to explore.
    \item We provide a comprehensive mathematical analysis of the behavior of vMF-exp, demonstrating that it exhibits desirable properties for effective exploration in large-scale RL problems. Notably, vMF-exp does not restrict exploration to a specific neighborhood and effectively preserves order while leveraging information from embedding vectors to assess action relevance.
    \item We also show that, under some theoretical assumptions, vMF-exp asymptotically maintains the same probability of exploring each action as the popular B-exp method while overcoming its scalability issues. This positions vMF-exp as a scalable alternative to B-exp.
    \item The primary objective of this paper is the introduction and mathematical analysis of the theoretical properties of vMF-exp. Nonetheless, as a complement to this analysis, we also report empirical validations of the method, including experiments on simulated data, on real-world publicly available data, and a discussion of the recent and successful deployment of vMF-exp on a global music streaming service to recommend music to millions of users daily. These experiments validate the key properties of the proposed method and its potential.
\item We publicly release a Python implementation of vMF-exp on GitHub to enable reproducibility of our experiments and to encourage future use of the method: \url{https://github.com/deezer/vMF-exploration}.
\end{itemize}

The remainder of this paper is organized as follows. Section~\ref{sec:preliminaries} formalizes the problem. Section~\ref{sec:vMF-exp} introduces vMF-exp, Section~\ref{sec:math_results} details our theoretical analysis, Section~\ref{sec:live_exp} discusses our experiments, and Section~\ref{sec:conclusion} concludes.

\section{Preliminaries}
\label{sec:preliminaries}

We begin this section by formally introducing the problem addressed in this paper, followed by an explanation of the limitations of existing and popular RL exploration strategies.

\subsection{Problem Formulation}
\label{sec:problem}


\paragraph{Notation} Throughout this paper, we consider an RL agent sequentially selecting actions within a set of $n \in \mathbb{N}^{\star}$ actions:
\begin{equation}
\mathcal{I}_n = \{1, 2, \ldots, n\}.
\end{equation}
Each action $i \in \mathcal{I}_n$ is represented by a distinct low-dimensional vectorial representation $X_i \in \mathbb{R}^{d}$, i.e., by an embedding vector or simply an embedding\footnote{At this stage, we do not make assumptions regarding the specific methods used to learn these embedding vectors, nor the interpretation of proximity between vectors in the embedding space.}, for some fixed dimension $d \in \mathbb{N}$ with $d \geq 2$ and $d \ll n$.
Additionally, we assume all vectors have a unit Euclidean norm, i.e., $\|X_i\|_2 = 1, \forall i \in \mathcal{I}_n$. They form a set of embeddings noted $\mathcal{X}_n = \{X_i\}_{1\leq i \leq n} \in (\mathcal{S}^{d-1})^{n}$, where $\mathcal{S}^{d-1}$ is the $d$-dimensional unit hypersphere~\citep{fisher1953dispersion}:
\begin{equation}
\mathcal{S}^{d-1} = \Big\{x \in \mathbb{R}^{d}: \|x\|_2 = 1\Big\}.
\end{equation}

We also assume the availability of an approximate nearest neighbor (ANN)~\citep{johnson2019billion,li2019approximate} search engine. Using this engine, for any vector $V \in \mathcal{S}^{d-1}$, the nearest neighbor of $V$ among $\mathcal{X}_n$ in terms of inner product similarity (equal to the cosine similarity, for unit vectors~\citep{tan2016introduction}), called $X_{i_{V}^{\star}}$, can be retrieved in a sublinear time complexity with respect to $n$. Although ANN engines are parameterized based on a trade-off between efficiency and accuracy, we make the simplifying assumption that $X_{i_{V}^{\star}}$ is the actual nearest neighbor of $V$, which we later discuss in Section~\ref{discussion}.
Formally:
\begin{equation}
i_{V}^{\star} = \argmax_{i \in \mathcal{I}_n} \dotproduct{V}{X_i}.
\end{equation}

Returning to the illustrative example of Section~\ref{s1}, $\mathcal{X}_n$ would represent embeddings associated with each song of the catalog $\mathcal{I}_n$ of the music streaming service. In this case, $n$ would be on the order of several millions~\citep{bendada2020carousel,briand2021semi,jacobson2016music}. The RL agent would be the recommender system sequentially recommending these songs to users. Normalizing embeddings is a common practice in both academic and industrial recommender systems~\citep{afchar2023spiky,bontempelli2022flow,kim2023test,schedl2018current} to mitigate popularity biases, as vector norms often encode popularity information on items~\citep{afchar2023spiky,chen2023adap}. Normalizing embeddings also prevents inner products from being unbounded, avoiding overflow and underflow numerical instabilities~\citep{lecun2015deep}.

 At time $t$, the agent considers a state vector $V_t \in \mathcal{S}^{d-1}$, noted $V$ for brevity. It selects the next action in $\mathcal{I}_n$, whose relevance is evaluated by a reward provided by the environment. In our example, the agent would recommend the next song to continue the playlist, based on the previous song whose embedding $V$ acts as the current state. In this case, the reward might be based on user feedback, such as liking or skipping the song~\citep{bontempelli2022flow}. The agent may select $i_{V}^{\star}$, i.e., exploit $i_{V}^{\star}$~\citep{sutton2018reinforcement}. Alternatively, it may rely on an exploration strategy to select another $\mathcal{I}_n$ element. Formally, an exploration strategy $P$ is a policy function~\citep{sutton2018reinforcement} that, given $V$, selects each action $i \in \mathcal{I}_n$ with a probability $P(i \mid V) \in [0, 1]$.

 \paragraph{Objective} Our goal in this paper is to develop a suitable exploration strategy for our specific setting, where hyperspherical embedding vectors represent actions, and the number of actions can reach millions. Precisely, we aim to obtain an exploration scheme meeting the following properties:
 
 \begin{itemize}
 \item Scalability (\textbf{P1}): we consider an exploration scalable if the time required to sample actions given a vector $V$ is at most the time needed for the ANN engine to retrieve the nearest neighbor, which is typically achieved in a sublinear time complexity with respect to $n$. Scalability is a mandatory requirement for exploring large action sets with millions of~elements.
    \item Unrestricted radius (\textbf{P2}): $\text{Radius}(P \mid V)$ is the number of actions with a non-zero probability of being explored given a state $V$. While exploring actions too far from $V$ might be suboptimal (e.g., resulting in poor recommendations), it is crucial that exploration is not restricted to a specific radius by construction. Such a restriction could prevent the agent from exploring relevant actions that lie beyond this radius. An unrestricted radius ensures that the exploration strategy remains flexible and capable of adapting to various contexts, allowing for the exploration of relevant actions regardless of their embedding position.
    \item Order preservation (\textbf{P3}): order is preserved when the probability of selecting the action $i$ given the state $V$ is a strictly increasing function of $\dotproduct{V}{X_{i}}$. More formally, order preservation requires that, $\forall (i,j) \in \mathcal{I}_n^{2}$,
    \begin{equation}
     \dotproduct{V}{X_i} > \dotproduct{V}{X_j}
    \implies  P(i \mid V) > P(j \mid V).\end{equation} \textbf{P3} ensures that the exploration strategy properly leverages the information captured in the embedding vectors to assess the relevance of an action given a state.
    
 \end{itemize}

\subsection{Limitations of Existing Exploration Strategies}
\label{sec:limitations}
Finding an strategy that simultaneously meets the three properties \textbf{P1}, \textbf{P2} and \textbf{P3} is essential for effective exploration in RL problems with large action sets and embedding representations. Nonetheless, existing exploration methods fail to achieve this, which motivates our work in this paper.
 
\paragraph{Random and $\varepsilon$-greedy Exploration}
The most straightforward example of an action exploration strategy would be the random (uniform) policy, where:
\begin{equation}
   \forall i \in \mathcal{I}_n,  P_{\text{rand}}(i\mid V) = \frac{1}{n}.
\end{equation} 
A popular variant is the $\varepsilon$-greedy strategy~\citep{sutton2018reinforcement}. With a probability $\varepsilon \in [0,1]$, the agent would choose the next action uniformly at random. With a probability $1 -\varepsilon$, it would exploit the most relevant action based on its knowledge.
Random and $\varepsilon$-greedy exploration strategies are scalable (\textbf{P1}), as elements of $\mathcal{I}_{n}$ can be uniformly sampled in $\bigO(1)$ time~\citep{cormen2022introduction}. Additionally, they verify \textbf{P2}. Indeed, $\text{Radius}(P_{\text{rand}}|V) = n$ since every action can be selected. However, these strategies ignore embeddings at the sampling phase and do not achieve order preservation (\textbf{P3}).
This is a significant limitation, reinforced by the fact that these policies have a maximal radius. As explained in Section~\ref{s1}, in large action sets, many actions are often irrelevant, e.g., most songs from the musical catalog would constitute poor recommendations given an initial state~\citep{tomasi2023automatic}. Exploring each action/song with equal probability, including inappropriate ones, could result in negative user feedback and a poor perception of the service~\citep{tomasi2023automatic}.

\paragraph{Boltzmann Exploration} 
To overcome the limitations of random exploration, actions can be sampled based on their embedding similarity with $V$, typically measured using dot products. The prevalent approach in RL is Boltzmann Exploration (B-exp)~\citep{amin2021survey,cesa2017boltzmann,chen2021user,sutton2018reinforcement}, which employs the Boltzmann distribution for action sampling:
\begin{equation}
\label{eq:p_boltz}
   \forall i \in \mathcal{I}_n, P_{\text{B-exp}}(i \mid V, \mathcal{X}_n, \kappa) = \frac{{\rm{e}}^{\kappa \dotproduct{V}{X_i}}}{\sum_{j=1}^{n} {\rm{e}}^{ \kappa \dotproduct{V}{X_j}}},
\end{equation}

where the hyperparameter $\kappa \in \mathbb{R}^{+}$  controls the entropy of the distribution. B-exp samples actions according to a strictly increasing function of their inner product similarity with $V$ for $\kappa > 0$, guaranteeing order preservation (\textbf{P3}). By carefully tuning $\kappa$, one can ensure that irrelevant actions are practically never selected while maintaining a non-zero probability of recommending actions with less than maximal similarity, thereby indirectly controlling the radius of the policy (\textbf{P2}). Unfortunately, B-exp does not satisfy \textbf{P1}, i.e., it is not scalable to large action sets. Indeed, evaluating Equation~\eqref{eq:p_boltz} requires explicitly computing the probability of sampling each individual action before actually sampling from them, which is prohibitively expensive for large values of $n$~\citep{chen2021user}.
Note that, while we focus on B-exp, these concerns would remain valid for any other sampling distribution requiring explicitly computing similarities and probabilities for each of the $n$ actions~\citep{amin2021survey}.

\paragraph{Truncated Boltzmann Exploration}
Due to these scalability concerns, previous work sometimes settled for a workaround consisting in sampling actions from a Truncated version of B-exp~\citep{chen2021user}, which we refer to as TB-exp. A small number $m \ll n$ of candidate actions, usually around hundreds or thousands, is first retrieved using the ANN search engine, leading to a candidate action set $\mathcal{I}_{m}(V)$. The sampling step is subsequently performed only within $\mathcal{I}_{m}(V)$. More formally, for all $i \in \mathcal{I}_{m}(V)$:
\begin{equation}
    P_{\text{TB-exp},m}(i \mid V, \mathcal{X}_n,  \kappa) = \frac{{\rm{e}}^{ \kappa \dotproduct{V}{X_i}}}{\sum_{j\in\mathcal{I}_{m}(V)} {\rm{e}}^{ \kappa \dotproduct{V}{X_j}}}. 
\end{equation}
TB-exp performs action selection in a time that depends on $m$ instead of $n$, and has been successfully deployed in production environments involving millions of actions~\citep{chen2019top, chen2021user, chen2022off}. While it still satisfies~\textbf{P3}, TB-exp also meets \textbf{P1} for small values of $m$. However, it no longer satisfies \textbf{P2}. This method restricts the radius, i.e., the number of candidate actions, based on technical considerations rather than exploration efficiency. This restriction can potentially hinder model convergence by neglecting the exploration of relevant actions beyond this fixed radius.
This highlights the difficulty of designing a method that simultaneously satisfies \textbf{P1}, \textbf{P2}, and \textbf{P3} -- ideally, one with properties akin to B-exp but with greater scalability.

\section{From Boltzmann to vMF Exploration}
\label{sec:vMF-exp}

\begin{figure*}[ht!]
\begin{center}
\subfigure[]
{\includegraphics[width=0.36\linewidth]{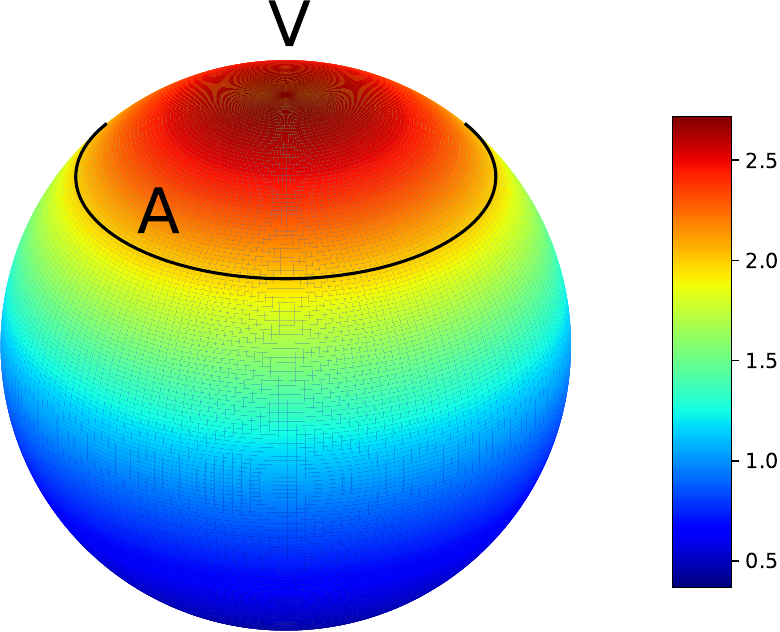}\label{figure:vmf_sphere}}\hfill
\subfigure[]
{\includegraphics[width=0.2725\linewidth]{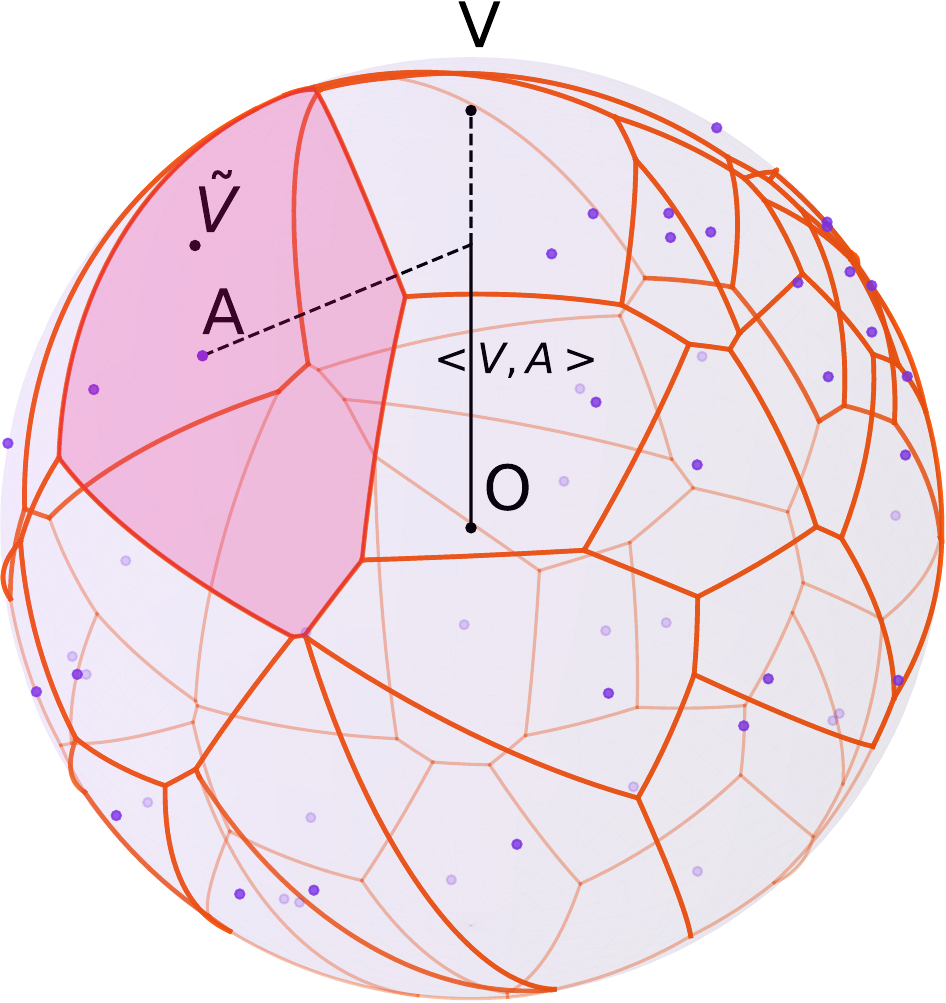}\label{fig:3d_voronoi}}\hfill
\subfigure[]{\includegraphics[width=0.285\linewidth]{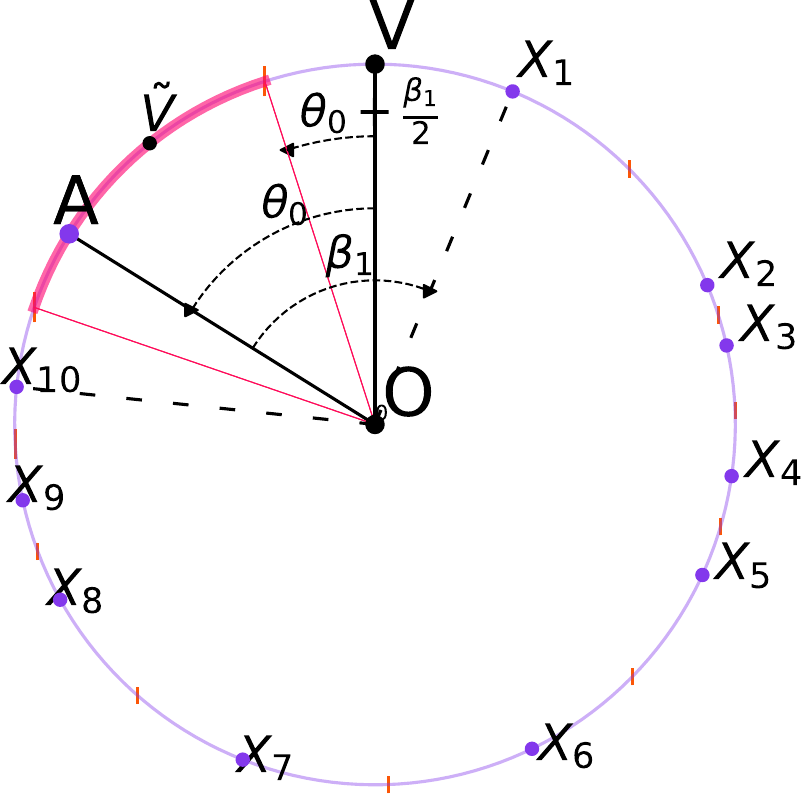}\label{fig:2d_voronoi}}
\caption{(a) Probability density function (PDF) of a 3-dimensional vMF distribution. (b) vMF-exp explores the action $a$ represented by the embedding vector $A$ when the sampled vector $\VS$ lies in $A$'s Voronoï cell, shown in red in 3D. (c) Same as (b) in a 2-dimensional~setting.}
\label{figure:voronoi_tesselattion}
\end{center}
\vspace{-0.35cm}
\end{figure*}

In this section, we present our proposed solution for exploring large action sets with hyperspherical embeddings.

\subsection{von Mises–Fisher Exploration}

The inability of B-exp to scale arises from its need to compute all $n$ sampling probabilities explicitly. In this paper, we propose von Mises-Fisher Exploration (vMF-exp), an alternative strategy that overcomes this constraint. Specifically, given an initial state vector $V$, vMF-exp consists in: 
\begin{itemize}
    \item Firstly, sampling an hyperspherical vector $\VS$ on the unit hypersphere $\mathcal{S}^{d-1}$, according to a von Mises-Fisher distribution~\citep{fisher1953dispersion} centered~on~$V$.
    \item Secondly, selecting $\VS$'s nearest neighbor action in the embedding space for exploration.
\end{itemize}
In directional statistics, the vMF distribution~\citep{fisher1953dispersion} is a continuous vector probability distribution defined on the unit hypersphere $\mathcal{S}^{d-1}$. 
It has recently been used in RL to assess the uncertainty of
gradient directions~\citep{zhu2024vmfer}.
For all $\VS \in \mathcal{S}^{d-1}$, its probability density function (PDF) is defined as follows:
\begin{equation}
   f_{\text{vMF}}(\VS \mid \kappa, V, d) = C_{d}(\kappa ) e^{\kappa \dotproduct{V}{\VS}},
\end{equation}
with:
\begin{equation}
C_{d}(\kappa )=\frac{1}{\int_{\VS \in \mathcal{S}^{d-1}}e^{\kappa \dotproduct{V}{\VS}} \diff\VS} = {\frac {\kappa ^{\frac{d}{2}-1}}{(2\pi )^{\frac{d}{2}}I_{\frac{d}{2}-1}(\kappa )}}.
\end{equation}
and where $\kappa \in \mathbb{R}^{+}$. The function $I_{\frac{d}{2}-1}$ designates the modified Bessel function of the first kind~\citep{baricz2010generalized} at order $d/2-1$.
Figure~\ref{figure:vmf_sphere} illustrates the PDF of a vMF distribution on the 3-dimensional unit sphere.
For any $\VS \in \mathcal{S}^{d-1}$, $f_{\text{vMF}}(\VS \mid \kappa, V, d)$ is proportional to $e^{\kappa \dotproduct{V}{\VS}}$, which is reminiscent of the B-exp sampling probability of Equation~\eqref{eq:p_boltz}. The hyperparameter $\kappa$ controls the entropy of the distribution. In particular, for $\kappa = 0$, the vMF distribution boils down to the uniform distribution~on~$\mathcal{S}^{d-1}$. 


\subsection{Properties}
\label{sec:vmf_properties}

\paragraph{P1} 
vMF-exp only requires sampling a $d$-dimensional vector instead of handling a discrete distribution with $n$ parameters, allowing $\VS$ to be sampled in constant time with respect to $n$. Therefore, vMF-exp is a scalable exploration strategy. Efficient sampling algorithms for vMF distributions have been well-studied \citep{kang2024novel, pinzon2023fast} (see Appendix~\ref{annexeVMFinpractice} for practical details). As shown in the following sections, we successfully explored sets of millions of actions without scalability issues, using the Python vMF sampler from \citet{pinzon2023fast} for simulations in Section~\ref{sec:math_results} and a custom implementation for the recommendation application in Section~\ref{sec:live_exp}.

    \paragraph{P2} The probability of sampling $i \in \mathcal{I}_n$ given $V$ for exploration is the probability that $X_i$ is the nearest neighbor of $\VS$ within  $\mathcal{X}_n$, i.e., that $\VS$ lies in $\VorX{X_{i}}$, the Voronoï cell of $X_i$ in the Voronoï tessellation of $\mathcal{S}^{d-1}$ defined by $\mathcal{X}_n$~\citep{v1,v2} (see Figures~\ref{fig:3d_voronoi}, \ref{fig:2d_voronoi}).~We~have:
    \begin{equation}
    \begin{split}
    \VorX{X_{i}} =~&\{\VS \in \mathcal{S}^{d-1}, \forall j \in \mathcal{I}_n, \\
    &\dotproduct{\VS}{X_i} \geq \dotproduct{\VS}{X_j}\},
    \end{split}
    \end{equation} and:
    \begin{equation}
\bigcup_{i \in \mathcal{I}_n}\VorX{X_{i}}=\mathcal{S}^{d-1}.
    \end{equation} Thus, vMF-exp's sampling probabilities can be~written~as:
\begin{equation}
\label{eq:P_vMF}
\begin{split}
    \forall i \in \mathcal{I}_n,~&P_{\text{vMF-exp}}(i \mid V, \mathcal{X}_n, \kappa) = \\ &\int_{\VS \in \VorX{X_{i}}}f_{\text{vMF}}(\VS \mid \kappa, V, d) \diff \VS,
\end{split}
\end{equation}

which is always strictly positive. Therefore, vMF-exp satisfies the unrestricted radius property (\textbf{P2}). Similar to B-exp, adjusting $\kappa$ ensures that actions with low similarity have negligible sampling probabilities in practice.

\paragraph{P3} $P_{\text{vMF-exp}}(i \mid V, \mathcal{X}n, \kappa)$ increases due to two factors: the average $f_{\text{vMF}}(\VS \mid \kappa, V, d)$ value for $\VS \in \VorX{X_{i}}$, correlated to $\dotproduct{X_i}{V}$ and contributing to order preservation, and the surface area of $\VorX{X_{i}}$, which measures $X_i$'s dissimilarity to other $\mathcal{X}_n$ elements. Actions in a low-density subspace of $\mathcal{S}^{d-1}$ have larger Voronoï cells and may be selected more often than those near $V$ but in high-density regions. Thus, vMF-exp favors actions similar to $V$ and dissimilar to others, with order preservation being dependent on $\mathcal{X}_n$'s distribution. Section~\ref{sec:math_results} focuses on a setting where B-exp and vMF-exp asymptotically share similar probabilities. Consequently, vMF-exp, like B-exp, will verify order preservation (\textbf{P3}). In conclusion, in this setting, vMF-exp will verify \textbf{P1}, \textbf{P2}, and \textbf{P3} simultaneously.

\section{Theoretical Comparison: vMF-exp vs B-exp}
\label{sec:math_results}

We now provide a mathematical comparison of vMF-exp and B-exp. We focus on the theoretical setting presented in Section~\ref{sec:hyp}. We show that, in this setting, vMF-exp maintains the same probability of exploring each action as B-exp, while overcoming its scalability issues. As noted above, this implies that vMF-exp verifies \textbf{P1}, \textbf{P2}, and \textbf{P3} simultaneously and, therefore, acts as a scalable alternative to the popular but unscalable B-exp for exploring large action sets with~hyperspherical~embeddings.

\subsection{Setting and Assumptions}
\label{sec:hyp}
We focus on the setting where embeddings are independent and identically distributed (i.i.d.) and follow a uniform distribution on the unit hypersphere, i.e.,
\begin{equation}
\mathcal{X}_n \sim \mathcal{U}(\mathcal{S}^{d-1}).
\end{equation}
For convenience in our proofs, we consider the action set to be the union of $\mathcal{I}_{n}$, the set of $n$ actions, and another action $a$ with a known embedding $A \in \mathcal{S}^{d-1}$.
The resulting entire action set $\mathcal{I}_{n+1}$ and embedding set $\mathcal{X}_{n+1}$ are defined as $\mathcal{I}_{n+1} = \mathcal{I}_{n} \cup \{a\}$ and $\mathcal{X}_{n+1} = \mathcal{X}_{n} \cup \{A\}$ . In this section, we are interested in the probability of each exploration scheme, B-exp and vMF-exp, to sample $a$ among all actions of $\mathcal{I}_{n+1}$ given a state embedding vector $V \in \mathcal{S}^{d-1}$. These probabilities are defined respectively as:
\begin{equation}
\begin{split}
      P_{\text{B-exp}}(a &\mid n, d,V,  \kappa)  
      = \\ &\E_{\mathcal{X}_n \sim  \mathcal{U}(\mathcal{S}^{d-1})} \Big[P_{\text{B-exp}}(a \mid V, \mathcal{X}_{n+1},  \kappa)\Big],
\end{split}
\end{equation}
and:
\begin{equation}
\begin{split}
         P_{\text{vMF-exp}}(a &\mid n, d,V, \kappa) = 
         \\
         &\E_{\mathcal{X}_n \sim  \mathcal{U}(\mathcal{S}^{d-1})}\Big[P_{\text{vMF-exp}}(a \mid V, \mathcal{X}_{n+1}, \kappa)\Big].
\end{split}
\end{equation}

\subsection{Results}
\label{results}

\begin{figure*}[ht!]
\begin{center}
\subfigure[]
{\includegraphics[width=0.32\linewidth]{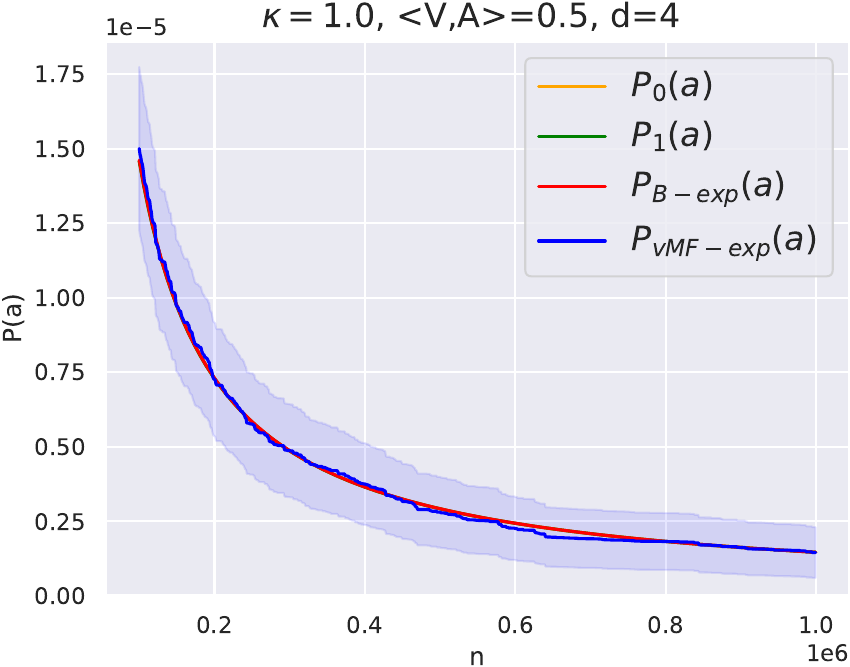}\label{figure:mc1}}\hfill
\subfigure[]
{\includegraphics[width=0.32\linewidth]{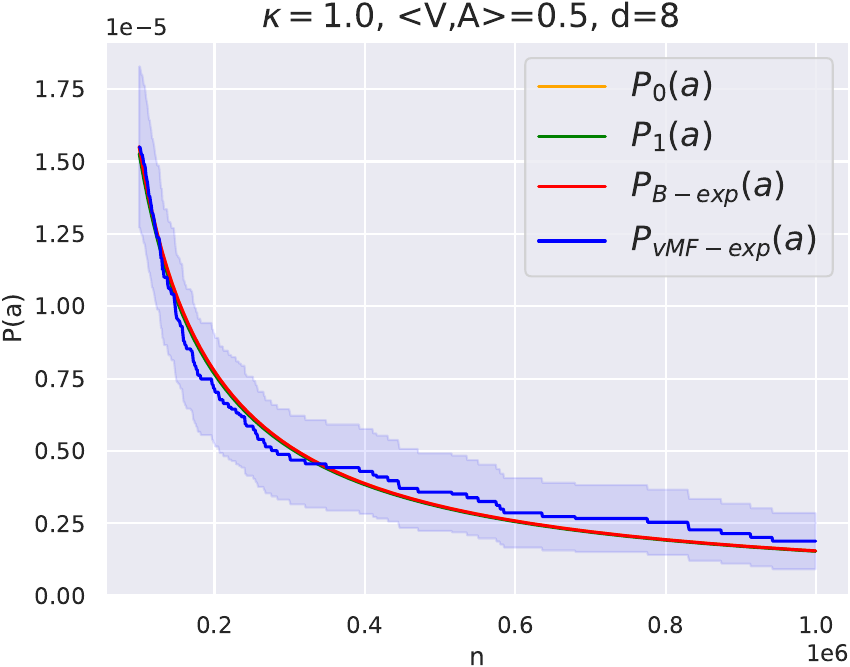}\label{figure:mc2}}\hfill
\subfigure[]
{\includegraphics[width=0.32\linewidth]{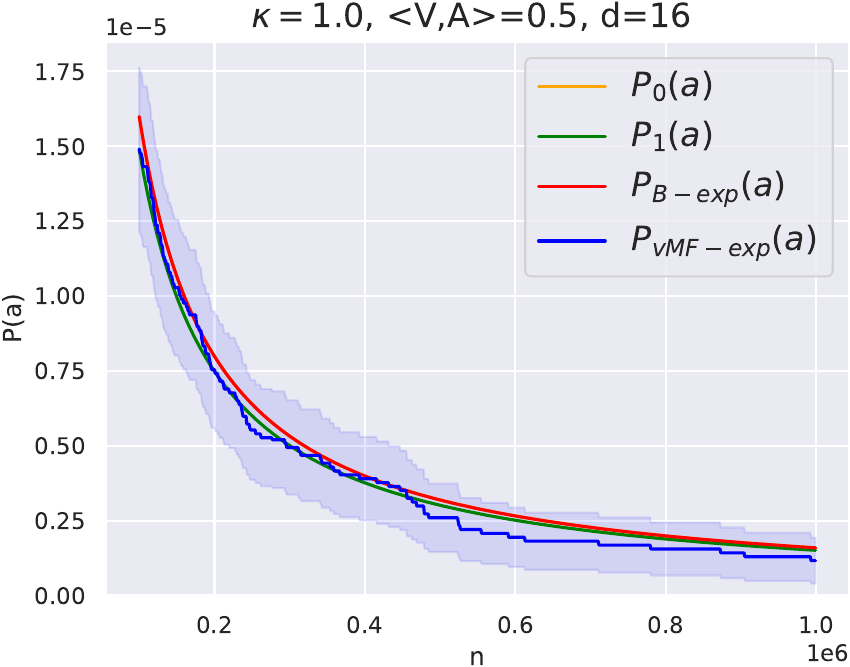}\label{figure:mc3}}\hfill
{\subfigure[]
{\includegraphics[width=0.32\linewidth]{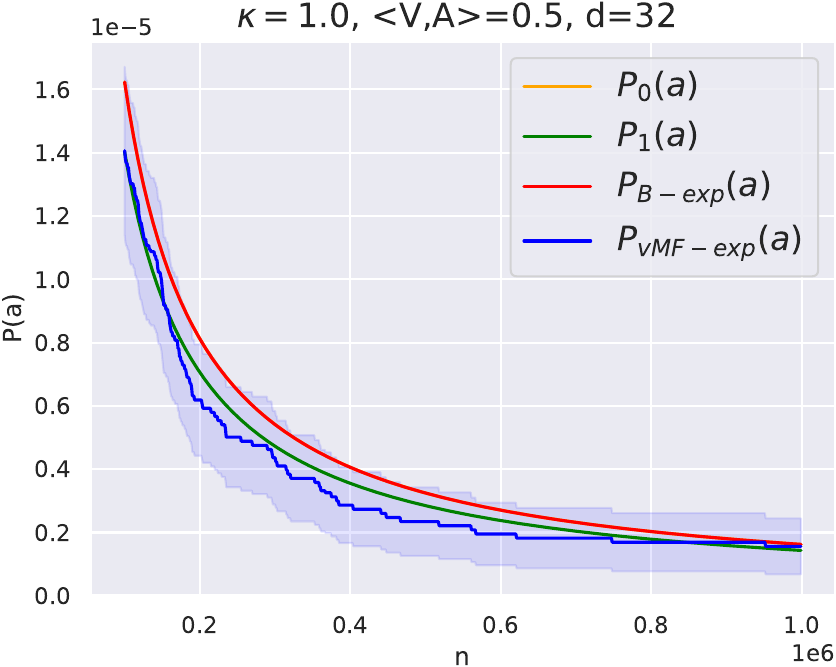}\label{figure:mc4}}
\subfigure[]
{\includegraphics[width=0.32\linewidth]{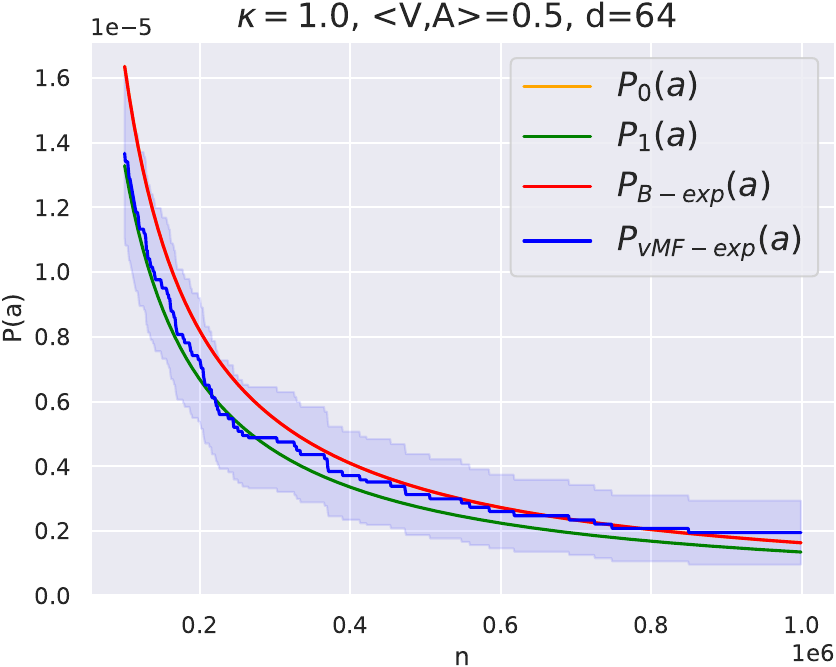}\label{figure:mc5}}} \hfill
\subfigure[]
{\centering \includegraphics[width=0.2\linewidth]{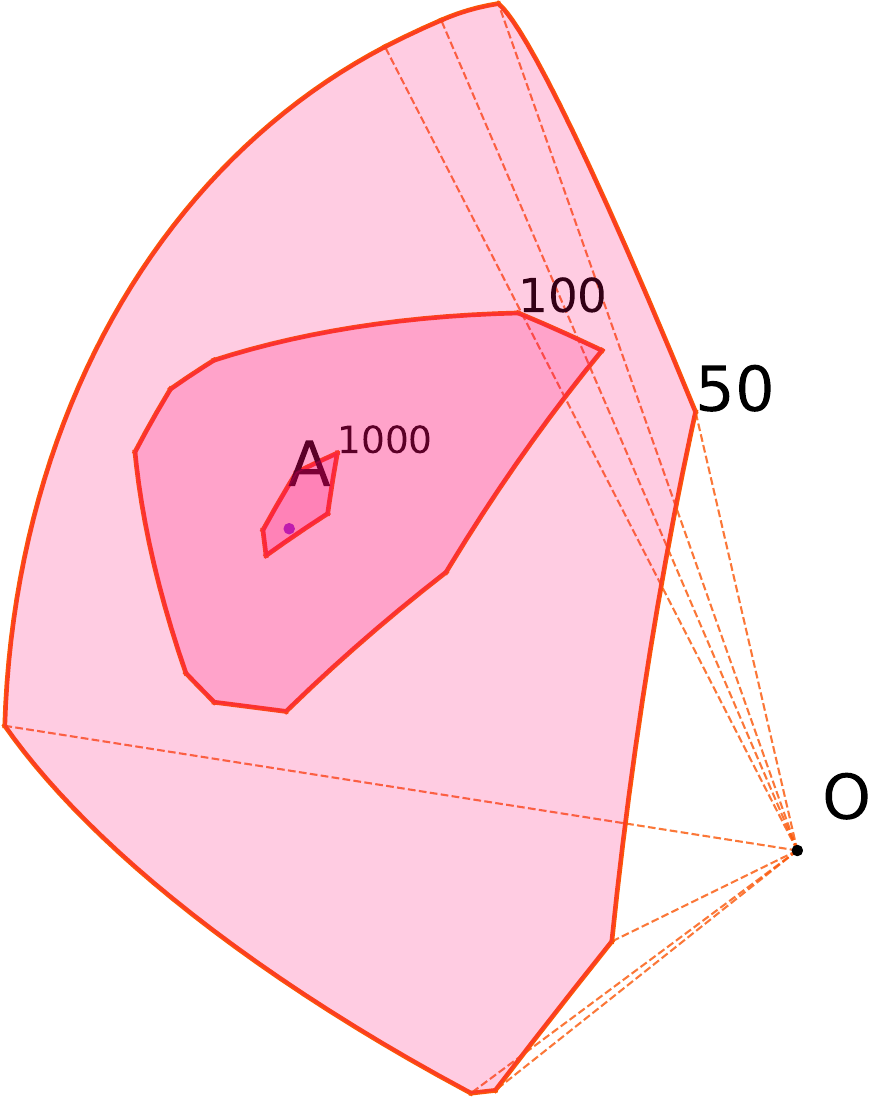}\label{figure:voronoi_concentric}
\hspace{0.2cm}
}

\caption{(a) to (e): Validation of the key properties discussed in Sections~\ref{results} and~\ref{discussion} using Monte Carlo simulations, as further elaborated in Section~\ref{discussion}. (f) Illustration of the 3-dimensional Voronoï cell of a vector $A$, for action numbers $n \in \{50, 100, 1000\}$.}

\label{fig:mc}
\end{center}
\vspace{-0.35cm}
\end{figure*}

We now present and discuss our main theoretical results. For brevity, we report all intermediary lemmas and mathematical proofs in the Appendices~\ref{sec:boltzmann_proof}, \ref{sec:vmf_2d_proof}, \ref{appendixC} and \ref{dernierepreuve} of this paper. Our first and most general result links the asymptotic behavior of B-exp and vMF-exp as the action set grows.
\begin{proposition}
\label{prop:limit} In the setting of Section~\ref{sec:hyp}, we have:
\begin{equation}
\label{eq:limit}
\begin{split}
\lim_{n\to +\infty} \frac{P_{\text{B-exp}}(a \mid n, d, V, \kappa)}{P_{\text{vMF-exp}}(a \mid n, d, V, \kappa)} = 1.  
    \end{split}
\end{equation}

\end{proposition}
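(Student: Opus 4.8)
The plan is to show that \emph{both} probabilities, after multiplying by $n$, converge to the same finite constant $\sigma_{d-1}\, C_d(\kappa)\, e^{\kappa \dotproduct{V}{A}}$, where $\sigma_{d-1}$ denotes the surface area of $\mathcal{S}^{d-1}$; the limit in Equation~\eqref{eq:limit} is then obtained as the quotient of the two individual limits. Both numerator and denominator vanish as $n\to+\infty$ (each action competes against ever more rivals), so the real content is that the two schemes share the same $1/n$ leading order with matching constants.

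For B-exp, write $W_n=\sum_{j=1}^n e^{\kappa\dotproduct{V}{X_j}}$, so that $P_{\text{B-exp}}(a\mid V,\mathcal{X}_{n+1},\kappa)=e^{\kappa\dotproduct{V}{A}}/(e^{\kappa\dotproduct{V}{A}}+W_n)$. The summands are i.i.d.\ with mean $\mu=\E[e^{\kappa\dotproduct{V}{X_1}}]=1/(\sigma_{d-1}C_d(\kappa))$, the last equality following directly from the definition of $C_d(\kappa)$ as the reciprocal of $\int_{\mathcal{S}^{d-1}}e^{\kappa\dotproduct{V}{\VS}}\diff\VS$. By the strong law of large numbers $W_n/n\to\mu$ almost surely, hence $n/(e^{\kappa\dotproduct{V}{A}}+W_n)\to 1/\mu$ almost surely; since $\dotproduct{V}{X_j}\geq -1$ gives $W_n\geq n\,e^{-\kappa}$, these variables are bounded by $e^{\kappa}$, so bounded convergence yields $n\,P_{\text{B-exp}}(a\mid n,d,V,\kappa)\to e^{\kappa\dotproduct{V}{A}}/\mu=\sigma_{d-1}C_d(\kappa)e^{\kappa\dotproduct{V}{A}}$.

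For vMF-exp, start from Equation~\eqref{eq:P_vMF} applied to $\mathcal{X}_{n+1}$ and exchange expectation and integral (Fubini, all terms nonnegative). Since membership of $\VS$ in $\Vorb{A}{\mathcal{X}_{n+1}}$ requires $\dotproduct{\VS}{X_j}\leq\dotproduct{\VS}{A}$ for each independent $X_j$, the expectation factorizes into $F(\dotproduct{\VS}{A})^n$, where $F(t)=\Pr[\dotproduct{\VS}{X_1}\leq t]$ is the rotation-invariant c.d.f.\ of the inner product of a fixed unit vector with a uniform point. Thus $P_{\text{vMF-exp}}(a\mid n,d,V,\kappa)=\int_{\mathcal{S}^{d-1}}f_{\text{vMF}}(\VS\mid\kappa,V,d)\,F(\dotproduct{\VS}{A})^n\diff\VS$. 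Two ingredients finish the argument. First, an exact symmetry identity: the same Fubini computation without the density factor shows $\int_{\mathcal{S}^{d-1}}F(\dotproduct{\VS}{A})^n\diff\VS$ equals the expected surface area of $A$'s Voronoï cell; because the $n+1$ cells of i.i.d.\ uniform points tile $\mathcal{S}^{d-1}$ and are exchangeable, each has expected area $\sigma_{d-1}/(n+1)$, and by rotational invariance the fixed-$A$ value coincides with this, giving $\int_{\mathcal{S}^{d-1}}F(\dotproduct{\VS}{A})^n\diff\VS=\sigma_{d-1}/(n+1)$. Second, a concentration (Laplace-type) argument: $F$ is strictly increasing with $F(1)=1$ and $F(t)<1$ for $t<1$, so the probability measure $\diff\nu_n\propto F(\dotproduct{\VS}{A})^n\diff\VS$ concentrates at $A$ — for any $\eta>0$ its mass on $\{\dotproduct{\VS}{A}<1-\eta\}$ is at most $\sigma_{d-1}F(1-\eta)^n$ divided by $\sigma_{d-1}/(n+1)$, i.e.\ at most $(n+1)F(1-\eta)^n\to 0$. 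As $f_{\text{vMF}}(\cdot\mid\kappa,V,d)$ is continuous and bounded, $\int f_{\text{vMF}}\diff\nu_n\to f_{\text{vMF}}(A\mid\kappa,V,d)=C_d(\kappa)e^{\kappa\dotproduct{V}{A}}$. Combining, $n\,P_{\text{vMF-exp}}(a\mid n,d,V,\kappa)=\tfrac{n}{n+1}\,\sigma_{d-1}\int f_{\text{vMF}}\diff\nu_n\to\sigma_{d-1}C_d(\kappa)e^{\kappa\dotproduct{V}{A}}$, the same constant as for B-exp, so the ratio tends to $1$.

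The routine parts are the B-exp limit and the Fubini factorization. The crux — and the step I expect to be the main obstacle — is the vMF-exp asymptotics, where one must simultaneously pin down the \emph{rate} at which $\int F^n$ vanishes (exactly $1/(n+1)$, obtained cleanly only via the Voronoï-tiling symmetry) and justify replacing the vMF density by its value at $A$. The concentration estimate legitimizes that replacement, and it works precisely because the off-$A$ mass decays geometrically in $n$ while the full integral decays only polynomially ($\sim 1/n$); comparing these two decay rates correctly, rather than resorting to a naive pointwise bound, is the delicate point.
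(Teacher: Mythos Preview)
Your proof is correct and takes a genuinely more direct route than the paper. The paper derives Proposition~\ref{prop:limit} as a corollary of the quantitative Propositions~\ref{prop:B-exp} and~\ref{prop:vMF-exp}: for B-exp it uses the CLT and the Delta method to obtain $\E[1/D_n]=C_d(\kappa)\mathcal{A}(\mathcal{S}^{d-1})+o(1/\sqrt{n})$; for vMF-exp it Taylor-expands $f_{\text{vMF}}$ around $A$, computes the zero-order contribution via the Voronoï-area identity (your same $\sigma_{d-1}/(n+1)$), and controls the first-order and remainder terms using the Fisher--Tippett--Gnedenko extreme-value machinery. You bypass all of this: SLLN plus bounded convergence for B-exp (simpler, though it yields no rate), and for vMF-exp you first swap $\E_{\mathcal{X}_n}$ and $\E_{\VS}$ to obtain the closed form $\int f_{\text{vMF}}(\VS)\,F(\dotproduct{\VS}{A})^n\diff\VS$, then combine the exact normalization $\int F^n=\sigma_{d-1}/(n+1)$ with a concentration-of-measure argument showing $\diff\nu_n$ collapses onto $A$. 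The key observation that makes your argument work---that the off-$A$ mass of $\nu_n$ decays geometrically while the total integral decays only like $1/n$---is exactly the comparison you flag, and it replaces the paper's term-by-term Taylor estimate and extreme-value lemmas with a single weak-convergence step. The trade-off is that the paper's longer route also delivers the explicit convergence rates of Propositions~\ref{prop:B-exp}--\ref{prop:vMF-exp_term_one}, which are used elsewhere to discuss how the dimension $d$ affects the speed at which the asymptotic regime is reached; your argument gives only the limit itself, but does so more cleanly and uniformly in $d\geq 2$.
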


Proposition~\ref{prop:limit} states that, for large values of $n$, the probability of sampling the action $a$ for exploration is asymptotically the same using either B-exp or vMF-exp. This result follows from the respective asymptotic characterizations of $P_{\text{B-exp}}$ and $P_{\text{vMF-exp}}$, detailed below. Importantly, it implies that, for large values of $n$, vMF-exp shares the same properties as B-exp (\textbf{P2}, \textbf{P3}), including order preservation. However, as noted in Section \ref{sec:vMF-exp}, vMF-exp offers greater scalability since its implementation only requires sampling a vector of a fixed size $d$, an operation independent of $n$ (\textbf{P1}).

Next, we present a common approximate analytical expression for both methods, denoted $P_{0}$ and defined as follows:
\begin{equation}
P_{0}(a \mid n, d, V, \kappa) = \frac{f_{\text{vMF}}(A\mid V,\kappa) \mathcal{A}(\mathcal{S}^{d-1}) }{n},
\end{equation}with $\mathcal{A}(\mathcal{S}^{d-1})$ denoting the surface area of the hypersphere $\mathcal{S}^{d-1}$. The following two propositions describe the rate at which this asymptotic behavior is reached by B-exp and vMF-exp, respectively, as $n$ grows.
\begin{proposition}
\label{prop:B-exp} In the setting of Section~\ref{sec:hyp}, we have:
\begin{equation}
\begin{split}
\label{eq:B-exp_asymptote} 
 P_{\text{B-exp}}(a \mid n, d, V, \kappa) = P_{0}(a &\mid n, d, V, \kappa) \\
 &+ o(\frac{1}{n\sqrt{n}}).
\end{split}
\end{equation}
\end{proposition}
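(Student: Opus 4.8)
\textbf{Proof plan for Proposition~\ref{prop:B-exp}.}

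The plan is to compute the expectation $P_{\text{B-exp}}(a \mid n,d,V,\kappa) = \E_{\mathcal{X}_n}\big[ e^{\kappa\dotproduct{V}{A}} / (e^{\kappa\dotproduct{V}{A}} + \sum_{j=1}^n e^{\kappa\dotproduct{V}{X_j}}) \big]$ by controlling the random denominator. Because the $X_j$ are i.i.d.\ uniform on $\mathcal{S}^{d-1}$, the sum $S_n = \sum_{j=1}^n e^{\kappa\dotproduct{V}{X_j}}$ is a sum of i.i.d.\ bounded random variables with mean $\mu = \E[e^{\kappa\dotproduct{V}{X_1}}]$ and finite variance $\sigma^2$. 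The first step is to identify $\mu$: by the definition of the vMF normalizing constant, $\mu = \int_{\mathcal{S}^{d-1}} e^{\kappa\dotproduct{V}{x}}\,\diff x / \mathcal{A}(\mathcal{S}^{d-1}) = 1/(C_d(\kappa)\,\mathcal{A}(\mathcal{S}^{d-1}))$, so that $\E[S_n] = n\mu$ grows linearly. Substituting this into the leading term gives $e^{\kappa\dotproduct{V}{A}}/(n\mu) = f_{\text{vMF}}(A\mid V,\kappa)\,\mathcal{A}(\mathcal{S}^{d-1})/n = P_0$, which already reveals why $P_0$ is the correct first-order term.

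The core of the argument is a Taylor expansion of the map $s \mapsto c/(c+s)$, where $c = e^{\kappa\dotproduct{V}{A}}$, around $s = n\mu$. Writing $S_n = n\mu + (S_n - n\mu)$ and expanding to second order, I would get
\begin{equation}
\frac{c}{c+S_n} = \frac{c}{c+n\mu} - \frac{c}{(c+n\mu)^2}(S_n - n\mu) + \frac{c}{(c+n\mu)^3}(S_n-n\mu)^2 - \cdots.
\end{equation}
Taking expectations, the first-order term vanishes since $\E[S_n - n\mu] = 0$, the zeroth-order term $c/(c+n\mu)$ expands as $c/(n\mu) - c^2/(n\mu)^2 + \cdots = P_0 + \bigO(1/n^2)$, and the quadratic term contributes $c\,\E[(S_n-n\mu)^2]/(c+n\mu)^3 = c\,n\sigma^2/(n\mu)^3 + \cdots = \bigO(1/n^2)$. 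Thus every correction beyond $P_0$ is $\bigO(1/n^2)$, which is comfortably $o(1/(n\sqrt{n}))$, matching the claimed rate with room to spare.

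The main obstacle is making the Taylor expansion rigorous rather than merely formal: the expansion is only valid where $S_n$ stays away from catastrophic deviations, so I must control the contribution of the event where $S_n$ is far from its mean $n\mu$. Since the summands $e^{\kappa\dotproduct{V}{X_j}}$ lie in the bounded interval $[e^{-\kappa}, e^{\kappa}]$, I would invoke a concentration inequality (Hoeffding or Bernstein) to show $\Pr(|S_n - n\mu| > n\mu/2)$ decays exponentially in $n$, so its contribution to the expectation is $o(1/n^{3/2})$. On the complementary high-probability event the integrand $c/(c+S_n)$ and all the remainder terms are uniformly bounded, legitimizing the integrated Taylor estimate. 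The remaining bookkeeping -- bounding the Lagrange remainder of the expansion uniformly on the good event and confirming that the third-order moment term is also $\bigO(n/(n\mu)^4) = \bigO(1/n^3)$ -- is routine. I expect the concentration step and the careful splitting of the expectation into good and bad events to be where the real work lies; once $S_n$ is pinned near $n\mu$, the rest is a clean second-order moment computation.
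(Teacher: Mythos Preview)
Your proposal is correct and takes a genuinely different route from the paper. The paper rewrites the expectation as $\frac{e^{\kappa\dotproduct{V}{A}}}{n}\,\E[1/D_n]$ with $D_n$ the sample average $\frac{1}{n}\big(e^{\kappa\dotproduct{V}{A}} + S_n\big)$, applies the Central Limit Theorem to $D_n$ followed by the Delta method with $g(x)=1/x$ to obtain $\sqrt{n}\big(1/D_n - C_d(\kappa)\mathcal{A}(\mathcal{S}^{d-1})\big) \xrightarrow{D} \mathcal{N}(0,\cdot)$, and then passes from convergence in distribution to convergence of means to conclude $\E[1/D_n] = C_d(\kappa)\mathcal{A}(\mathcal{S}^{d-1}) + o(1/\sqrt{n})$; multiplying by $e^{\kappa\dotproduct{V}{A}}/n$ yields exactly the stated $o(1/(n\sqrt{n}))$ remainder. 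You instead Taylor-expand $s\mapsto c/(c+s)$ directly around $n\mu$ and control the Lagrange remainder on a high-probability event via a Hoeffding/Bernstein good-event/bad-event split. This is more elementary---no CLT, no Delta method, no uniform-integrability argument to pass from distributional to mean convergence---and in fact sharper: since the linear term in your expansion has exactly zero mean and the quadratic term contributes $O\big(n\sigma^2/(n\mu)^3\big)=O(1/n^2)$, your argument actually delivers $P_{\text{B-exp}} = P_0 + O(1/n^2)$, of which the paper's $o(1/n^{3/2})$ is an immediate consequence. The paper's route is shorter to state but leans on asymptotic machinery; yours is a direct second-moment computation that reveals the true rate.
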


\begin{proposition}
\label{prop:vMF-exp}  In the setting of Section~\ref{sec:hyp}, we have:
\begin{equation}
\begin{split}
\label{eq:vMF-exp_asymptote}
 P_{\text{vMF-exp}}(a \mid n, d, V, \kappa) &= P_{0}(a \mid n, d, V, \kappa) \\
 &+ {\begin{cases} \mathcal{O}(\frac{1}{n^{2}})&{\text{if }d=2},\\ \mathcal{O}(\frac{1}{n^{1+\frac{2}{d-1}}})&{\text{if }}d>2.\end{cases}}
\end{split}
\end{equation}
\end{proposition}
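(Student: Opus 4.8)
The plan is to turn the expected Voronoï-cell probability into a single deterministic integral over the sphere, and then to read off the asymptotics by a Laplace-type analysis of a concentrating integrand. Starting from Equation~\eqref{eq:P_vMF} and the definition of $P_{\text{vMF-exp}}(a \mid n,d,V,\kappa)$ as an expectation over $\mathcal{X}_n$, I would apply Fubini's theorem to interchange the expectation with the integral over the cell, obtaining
\[ P_{\text{vMF-exp}}(a \mid n,d,V,\kappa) = \int_{\VS \in \mathcal{S}^{d-1}} f_{\text{vMF}}(\VS \mid \kappa, V, d)\; \Pr\!\big[\VS \in \VorXone{A}\big]\,\diff\VS. \]
For a \emph{fixed} $\VS$, the event $\VS\in\VorXone{A}$ is precisely $\{\forall j,\ \dotproduct{\VS}{X_j}\le \dotproduct{\VS}{A}\}$, so by independence and uniformity of the $X_j$ it factorizes as $F_d(\dotproduct{\VS}{A})^n$, where $F_d(t)=\Pr[\dotproduct{\VS}{X}\le t]$ for a single uniform $X$. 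The law of $\dotproduct{\VS}{X}$ is classical: its density on $[-1,1]$ is proportional to $(1-t^2)^{(d-3)/2}$, so $F_d'(t)=\frac{\mathcal{A}(\mathcal{S}^{d-2})}{\mathcal{A}(\mathcal{S}^{d-1})}(1-t^2)^{(d-3)/2}$.

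Next I would collapse the sphere integral to one dimension by slicing along $t=\dotproduct{\VS}{A}$. Writing $\bar f(t)$ for the average of $f_{\text{vMF}}(\cdot\mid\kappa,V,d)$ over the slice $\{\dotproduct{\VS}{A}=t\}$ and using that the slice measure is exactly proportional to $F_d'(t)$, this gives
\[ P_{\text{vMF-exp}}(a\mid n,d,V,\kappa) = \mathcal{A}(\mathcal{S}^{d-1})\int_{-1}^{1} \bar f(t)\, F_d(t)^n\, F_d'(t)\,\diff t. \]
A short computation (splitting $V$ into its components along and orthogonal to $A$ and averaging the orthogonal part over $\mathcal{S}^{d-2}$, which yields a Bessel factor depending on $t$ only through $1-t^2$) shows $\bar f$ is smooth on $[-1,1]$ with $\bar f(1)=f_{\text{vMF}}(A\mid V,\kappa)$ and finite $\bar f'(1)$. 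Decomposing $\bar f(t)=\bar f(1)+(\bar f(t)-\bar f(1))$, the first piece is evaluated exactly via the substitution $w=F_d(t)$, giving $\int_{-1}^1 F_d(t)^n F_d'(t)\diff t=\frac{1}{n+1}$ and hence the main term $\frac{f_{\text{vMF}}(A\mid V,\kappa)\,\mathcal{A}(\mathcal{S}^{d-1})}{n+1}$; comparing $\frac{1}{n+1}$ to the $\frac1n$ appearing in $P_0$ already contributes an error of order $\mathcal{O}(1/n^2)$.

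The crux is the remainder $R_n=\int_{-1}^1 (\bar f(t)-\bar f(1))\, F_d(t)^n\, F_d'(t)\,\diff t$, which I would estimate by local analysis near $t=1$. Since $F_d(t)^n$ is exponentially small once $t$ is bounded away from $1$, only a shrinking cap contributes; there the expansions $1-F_d(t)\sim c_d(1-t)^{(d-1)/2}$ and $F_d'(t)\sim c_d'(1-t)^{(d-3)/2}$, combined with $\bar f(t)-\bar f(1)\sim \bar f'(1)(t-1)$, reduce $R_n$ to a Gamma-type integral of the form $\int_0^\infty u^{(d-1)/2}\,e^{-n c_d u^{(d-1)/2}}\,\diff u$; the substitution $x=n c_d u^{(d-1)/2}$ shows it scales like $n^{-(1+2/(d-1))}$. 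Putting the pieces together, for $d=2$ and $d=3$ the $\mathcal{O}(1/n^2)$ term from the $\frac1n$ versus $\frac1{n+1}$ discrepancy dominates, whereas for $d>3$ the remainder $R_n=\mathcal{O}(1/n^{1+2/(d-1)})$ is the larger error, reproducing the stated dichotomy. I expect the main obstacle to be making this cap analysis rigorous: controlling the fractional-power singularity $(1-t)^{(d-1)/2}$ of $1-F_d$, justifying the replacement of $(1-c_d u^{(d-1)/2})^n$ by its exponential and the extension of the integral to $[0,\infty)$, and checking that the quadratic Taylor remainder of $\bar f$ only produces lower-order terms, so that the exponent $1+2/(d-1)$ emerges cleanly. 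Treating $d=2$ separately (where $F_2$ is the arcsine CDF and the slice average is fully explicit) gives a convenient consistency check.
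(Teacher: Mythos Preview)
Your approach is correct and takes a genuinely different route from the paper. The paper works geometrically: it keeps the Voronoï cell random, Taylor-expands $f_{\text{vMF}}$ around $A$, and estimates the expected contribution of each Taylor term via a chain of lemmas about the cell (expected area $=\mathcal{A}(\mathcal{S}^{d-1})/(n{+}1)$ by rotational symmetry, expected normal vector collinear with $A$, and $\E[\max_i\dotproduct{\VS}{X_i}]$ via the Fisher--Tippett--Gnedenko extreme-value theorem), with $d=2$ handled separately through the order statistics of uniform angles on the circle. Your Fubini step removes all randomness at once: writing $\Pr[\VS\in\VorXone{A}]=F_d(\dotproduct{\VS}{A})^n$ reduces everything to a deterministic one-dimensional Laplace integral, the substitution $w=F_d(t)$ yields the main term $\tfrac{1}{n+1}$ exactly without any symmetry lemmas, and the cases $d=2$ and $d>2$ are treated uniformly. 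This is shorter and more analytic; the paper's route, in exchange, makes the geometric origin of the exponent $\tfrac{2}{d-1}$ transparent (it is precisely the extreme-value scaling of $\max_i\dotproduct{\VS}{X_i}$) and delivers the explicit first-order constant of Proposition~\ref{prop:vMF-exp_term_one} along the way, which in your framework would come from tracking $\bar f'(1)$ and the Gamma coefficient in the cap analysis. One minor point: at $d=3$ one has $1+\tfrac{2}{d-1}=2$, so the $\tfrac{1}{n+1}$-versus-$\tfrac{1}{n}$ discrepancy and the remainder $R_n$ are of the \emph{same} order rather than one dominating the other; this does not affect the stated bound.
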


In essence, when $n$ is large, the probability of sampling $a$ can be approximated by the PDF of the vMF distribution evaluated at $A$ multiplied by the average surface area of $A$'s Voronoï cell, for both methods. As 
$n$ grows, this Voronoï cell shrinks until $f_{\text{vMF}}$ becomes nearly constant across its entire surface. Figure \ref{figure:voronoi_concentric} illustrates this interpretation.

The rate at which the two exploration methods reach their asymptotic behavior differs. Specifically, the shrinking rate of the Voronoi cell depends on the dimension of the hypersphere, explaining why the second term in Equation~\eqref{eq:vMF-exp_asymptote} depends on $d$. This dependency does not occur with B-exp.  Consequently, for large values of $d$, one may require a higher number of actions $n$ before the asymptotic behavior of Equation~\eqref{eq:limit} is observed. For this reason, it is useful to obtain a more precise approximation of $P_{\text{vMF-exp}}(a \mid n, d, V, \kappa)$ when $d$ increases, which we provide in the next section.

\subsection{Discussion}
\label{discussion}
\paragraph{High Dimension}
Building on the above discussion, Proposition~\ref{prop:vMF-exp_term_one} provides a more precise expression for \( P_{\text{vMF-exp}}(a \mid n, V, \kappa) \) when \( d \) increases (approximately \( d \geq 20 \) in our experiments). This expression is derived by examining the first two terms of the Taylor expansion~\citep{abramowitz1948handbook} of \( f_{\text{vMF}} \) near \( A \), rather than relying solely on the zero-order term. The second term becomes increasingly significant as \( d \) grows. Despite its apparent complexity, the expression has a straightforward interpretation: the negative sign before \( \dotproduct{V}{A} \) indicates that, when \( A \) is similar to \( V \), it is sampled less often than with B-exp for the same \( \kappa \) and \( d \). Conversely, when \( A \) is on the opposite side of the hypersphere, the term positively contributes to \( P_{\text{vMF-exp}}(a \mid n, V, \kappa) \). In summary, for larger \( d \), vMF-exp is expected to explore more extensively than B-exp with the same \( \kappa \).

\begin{proposition}
\label{prop:vMF-exp_term_one}
Let $\text{B}:(z_{1},z_{2})\mapsto\int _{0}^{1}t^{z_{1}-1}(1-t)^{z_{2}-1}\,dt$ denote the Beta function, and $\Gamma : z\mapsto \int _{0}^{\infty }t^{z-1}e^{-t}{\text{ d}}t$ denote the Gamma function~\citep{abramowitz1948handbook}. In the setting of Section~\ref{sec:hyp} with $d \geq 3$, we have:
\begin{equation}
\begin{split}
 P_{\text{vMF-exp}}(a \mid n, V, \kappa) &= P_{1}(a \mid n, V, \kappa) \\
 &+ \mathcal{O}(\frac{1}{n^{\frac{2}{d-1}}}),
 \end{split}
 \end{equation}
with:
\begin{equation}
\begin{split}
 P_{1}(a \mid n, &V, \kappa) = P_{0}(a \mid n, V, \kappa) \\
 &- \Big[\frac{f_{\text{vMF}}(A\mid V,\kappa) \mathcal{A}(\mathcal{S}^{d-1})}{n}\frac{\kappa\dotproduct{V}{A}\Gamma(\frac{d+1}{d-1})}{2} \\
 &\times \left(\frac{(d-1)\text{B}(\frac{1}{2}, \frac{d-1}{2})}{n}\right)^{\frac{2}{d-1}}\Big].
 \end{split}
\end{equation}
\end{proposition}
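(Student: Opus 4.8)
The plan is to start from the exact representation of the vMF-exp probability as the expected vMF-mass of the Voronoï cell of $A$, namely $P_{\text{vMF-exp}}(a \mid n, V, \kappa) = \E_{\mathcal{X}_n}\left[\int_{\VorXone{A}} f_{\text{vMF}}(\VS \mid \kappa, V, d)\,\diff\VS\right]$, and to push the analysis underlying Proposition~\ref{prop:vMF-exp} one order further. The first step is to reduce this expectation to a one-dimensional integral in the angular distance $\theta = \arccos\dotproduct{A}{\VS}$. Since the $X_i$ are i.i.d.\ uniform, a point $\VS$ at angular distance $\theta$ from $A$ lies in $\VorXone{A}$ exactly when none of the $n$ random points falls in the spherical cap of angular radius $\theta$ centered at $\VS$, so $\Pr[\VS \in \VorXone{A}] = (1-p(\theta))^{n}$, where $p(\theta)$ is the normalized area of that cap. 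Writing the surface measure in the ring decomposition $\diff\VS = \mathcal{A}(\mathcal{S}^{d-2})\sin^{d-2}\theta\,\diff\theta$ converts any cell integral $\E\left[\int_{\VorXone{A}} g\,\diff\VS\right]$ into a scalar integral over $\theta \in [0,\pi]$ against the weight $(1-p(\theta))^{n}\sin^{d-2}\theta$.

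The second step is the Taylor expansion of $f_{\text{vMF}}$ about $A$. Factoring $f_{\text{vMF}}(\VS) = f_{\text{vMF}}(A)\,e^{\kappa\dotproduct{V}{\VS - A}}$ and parametrizing $\VS = \cos\theta\,A + \sin\theta\,u$ with $u \perp A$ a unit tangent vector, I would expand $\dotproduct{V}{\VS - A} = (\cos\theta - 1)\dotproduct{V}{A} + \sin\theta\,\dotproduct{V}{u}$. The zero-order contribution reproduces $P_0$ via $\E[\mathcal{A}(\VorXone{A})] = \mathcal{A}(\mathcal{S}^{d-1})/(n+1)$. The key structural observation is that the rotational invariance of $\mathcal{U}(\mathcal{S}^{d-1})$ about $A$ forces every term odd in the tangent direction $u$ to vanish in expectation; in particular the first-order tangential term $\sin\theta\,\dotproduct{V}{u}$ integrates to zero. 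The leading surviving correction therefore comes from the radial part $(\cos\theta - 1)\dotproduct{V}{A} \approx -\frac{\theta^2}{2}\dotproduct{V}{A}$, producing $-\frac{\kappa\dotproduct{V}{A}}{2}\,f_{\text{vMF}}(A)\,\E\left[\int_{\VorXone{A}}\theta^{2}\,\diff\VS\right]$. The quadratic exponential term $\frac{\kappa^2}{2}\dotproduct{V}{\VS-A}^2$ also yields an $O(\theta^2)$ piece, but after angular averaging it carries a factor $\frac{1-\dotproduct{V}{A}^2}{d-1}$, so it is suppressed by $1/(d-1)$ relative to the retained term; this is precisely why $P_1$ keeps only the $\kappa\dotproduct{V}{A}$ contribution and why the approximation is framed for large $d$, the remaining pieces being collected into the stated $\mathcal{O}(\cdot)$ remainder.

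The third and central step is the evaluation of the expected second angular moment $\E\left[\int_{\VorXone{A}}\theta^{2}\,\diff\VS\right] = \mathcal{A}(\mathcal{S}^{d-2})\int_0^{\pi}\theta^{2}\sin^{d-2}\theta\,(1-p(\theta))^{n}\,\diff\theta$. Because $(1-p(\theta))^{n}$ decays super-exponentially away from $\theta = 0$, the mass concentrates near the origin, so I would insert the small-angle forms $\sin^{d-2}\theta \sim \theta^{d-2}$ and $p(\theta) \sim c_d\,\theta^{d-1}$ with $c_d = \mathcal{A}(\mathcal{S}^{d-2})/\left((d-1)\mathcal{A}(\mathcal{S}^{d-1})\right)$, replace $(1-p(\theta))^{n}$ by $e^{-np(\theta)}$, and rescale through $s = n p(\theta)$. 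The exponent of $s$ collapses to $2/(d-1)$, turning the integral into $\Gamma\left(\frac{d+1}{d-1}\right)$, and the rescaling prefactor is $(nc_d)^{-2/(d-1)}$ times a leading $\mathcal{A}(\mathcal{S}^{d-1})/n$. Using $1/c_d = (d-1)\,\mathrm{B}\left(\frac12,\frac{d-1}{2}\right)$ — equivalently $\mathcal{A}(\mathcal{S}^{d-1})/\mathcal{A}(\mathcal{S}^{d-2}) = \mathrm{B}\left(\frac12,\frac{d-1}{2}\right)$, both equal to $\sqrt{\pi}\,\Gamma\left(\frac{d-1}{2}\right)/\Gamma\left(\frac{d}{2}\right)$ — reproduces exactly the factor $\frac{\mathcal{A}(\mathcal{S}^{d-1})}{n}\,\Gamma\left(\frac{d+1}{d-1}\right)\left(\frac{(d-1)\mathrm{B}(1/2,(d-1)/2)}{n}\right)^{2/(d-1)}$, and substituting into the second-step expression delivers $P_1$.

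The main obstacle is the rigorous error control in this Laplace-type step: I must quantify the errors from (i) replacing $\sin^{d-2}\theta$ and $p(\theta)$ by their leading small-angle monomials, (ii) replacing $(1-p(\theta))^{n}$ by $e^{-np(\theta)}$, and (iii) truncating the expansion of $f_{\text{vMF}}$, each of which injects higher powers of $\theta$, hence higher powers of $n^{-1/(d-1)}$, into the rescaled integral. The standard device is to split the range at a threshold $\theta_n \to 0$ chosen so that the tail $\theta > \theta_n$ is super-exponentially negligible while the head $\theta \le \theta_n$ admits uniform expansions; the delicate bookkeeping lies in tracking how the $d$-dependent monomial exponents interact with the rescaling constants so that all discarded contributions stay within the stated remainder.
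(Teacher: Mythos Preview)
Your approach is correct and takes a genuinely different route from the paper's. Both start from the same Taylor expansion of $f_{\text{vMF}}$ about $A$ and both exploit rotational invariance about $A$ to kill the tangential contribution. The divergence is in how the surviving radial integral is evaluated. The paper never writes down your key identity $\Pr[\VS\in\VorXone{A}\mid\mathcal{X}_n]=(1-p(\theta))^{n}$; instead it proves, via symmetry, that $\E\big[\int_{\VorXone{A}}\VS\,\diff\VS\big]=\lambda A$ with $\lambda=\frac{\mathcal{A}(\mathcal{S}^{d-1})}{n+1}\,\E[\max_i\dotproduct{\VS}{X_i}]$, and then obtains the asymptotics of $\E[\max_i\dotproduct{\VS}{X_i}]$ by invoking the Fisher--Tippett--Gnedenko theorem: the maximum lies in the Weibull domain with shape parameter $\gamma=-\tfrac{2}{d-1}$, and the GEV mean gives the $\Gamma(\tfrac{d+1}{d-1})$ factor and the $((d-1)\mathrm{B}(\tfrac12,\tfrac{d-1}{2})/n)^{2/(d-1)}$ scaling. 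Your Laplace computation is the direct analytic counterpart: the rescaling $s=nc_d\theta^{d-1}$ is exactly the change of variables hidden in the EVT normalizing sequences $a_n,b_n$, and your Gamma integral is the explicit evaluation of the GEV mean. What your route buys is self-containment---no appeal to an extreme-value black box---and an explicit integral representation that makes the error bookkeeping (your points (i)--(iii)) transparent; what the paper's route buys is modularity, since the cell-area lemma, the normal-vector lemma, and the $\E[\max]$ asymptotic are proved once and reused for both the first-order term and the $R_1$ bound. Your observation that the quadratic term $\tfrac{\kappa^2}{2}\dotproduct{V}{\VS-A}^2$ also contributes at order $\theta^2$ with a $1/(d-1)$ suppression is correct and matches the paper's treatment, which simply bounds the full $R_1$ by $M(1-\dotproduct{\VS}{A})$ via Taylor--Lagrange and shows it lands in the same $\mathcal{O}(n^{-1-2/(d-1)})$ bucket.
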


\paragraph{The case $d=2$}
In 2 dimensions, Voronoï cells are arcs of a circle and are delimited by the perpendicular bisectors of two neighboring points, as shown in Figure~\ref{fig:2d_voronoi}. Interestingly, in this specific case, \( P_{\text{vMF-exp}}(a \mid n, d=2, V, \kappa) \) can be computed using geometric arguments. A comprehensive mathematical analysis is provided in Appendix~\ref{sec:vmf_2d_proof}. This analysis confirms that, when \( d=2 \), vMF-exp approaches its asymptotic behavior faster than B-exp, as indicated by the \( \mathcal{O}(\frac{1}{n^{2}}) \) term in Proposition~\ref{prop:vMF-exp}.

\paragraph{Validation of Theoretical Properties via Monte Carlo Simulations}
Using the efficient Python sampler of \citet{pinzon2023fast}, we repeatedly sampled vectors $\mathcal{X}_{n} \sim \mathcal{U}(S^{d-1})$ and $\VS \sim \text{vMF}(V, \kappa)$, for various values of $d$, $\kappa$, and $\dotproduct{V}{A}$.
Figure~\ref{fig:mc} reports, 
 for $\kappa=1.0$,  $\dotproduct{V}{A}=0.5$ and growing values of $d$, the $P_{\text{vMF-exp}}(a)$ sampling probability depending on the number of actions $n$, as well as $P_{\text{B-exp}}(a)$ with similar parameters and our approximations $P_0(a)$ and $P_1(a)$. We repeated all experiments 8 million times and reported $95\%$ confidence intervals. The results are consistent with our key theoretical findings in this paper.

Firstly, in line with Proposition \ref{prop:B-exp}, $P_{\text{B-exp}}(a)$ and $P_{0}(a)$ are indistinguishable for this range of $n$ values.
Secondly, for small $d$ values (Figures ~\ref{figure:mc1}, \ref{figure:mc2}, \ref{figure:mc3}), $P_{\text{vMF-exp}}$ is also tightly aligned with $P_{\text{B-exp}}(a)$ and $P_{0}(a)$, consistently with Proposition~\ref{prop:limit}~and~\ref{prop:vMF-exp}. Note that the y-axis is on a 1e-5 scale; hence, probabilities are extremely close.
Thirdly, when $d \geq 16$ (Figures~\ref{figure:mc4}, \ref{figure:mc5}), $P_1(a)$ becomes more distinguishable from $P_0(a)$ and constitutes a better approximation of $P_{\text{vMF-exp}}(a)$ than $P_{0}(a)$, as per Proposition~\ref{prop:vMF-exp_term_one}. Lastly, since $\dotproduct{V}{A} > 0$, Proposition~\ref{prop:vMF-exp_term_one} predicts that $P_{\text{B-exp}}(a) \geq P_{\text{vMF-exp}}(a)$ for large $d$, which our experiments confirm. We provide comparable simulations with other $(d,\kappa,\dotproduct{V}{A})$ combinations in Appendix~\ref{annex-MC}.
All simulations are reproducible using our source code (see Section~\ref{sec:live_exp}).
\begin{figure*}[ht!]
\begin{center}
\includegraphics[width=\linewidth]{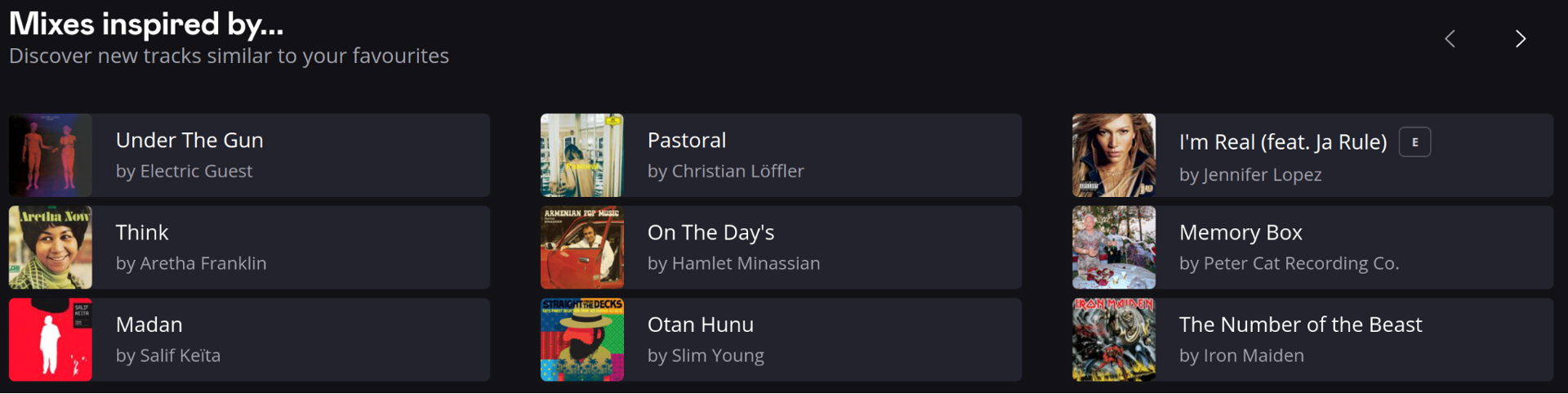}
\caption{Interface of the ``Mixes Inspired By'' recommender system on the music streaming service Deezer. This system presents a personalized shortlist of songs liked~by~the~user. Clicking on a song generates a playlist ``inspired by'' this song. As detailed in Section~\ref{sec:live_exp} and Appendix~\ref{sec:live_exp-annex}, vMF-exp has been employed for months on the production environment of this service to generate playlists, exploring songs from a catalog containing millions~of~candidates.}
\label{figure:trackmix}
\end{center}
\end{figure*}

\paragraph{Link with Thompson Sampling} One might notice interesting similarities between vMF-exp and bandit arm exploration with the Thompson Sampling method~\citep{chapelle2011empirical}. We refer the interested reader to our Appendix~\ref{annex-TS} for a detailed comparison of the two approaches.

\paragraph{Limitations and Future Work} 

While we believe our study provides valuable insights into vMF-exp, several limitations must be acknowledged. Our theoretical guarantees are currently restricted to the distributions described in Section~\ref{sec:hyp}.
Although vMF-exp can be applied in practice with hyperspherical embeddings from any other distributions, we have not yet extended our guarantees to such cases\footnote{Nonetheless, results from Section~\ref{sec:live_exp} will tend to confirm the practical usefulness of our propositions on real-world embedding vectors that do \textit{not} strictly satisfy assumptions~from~Section~\ref{sec:hyp}.}.

For instance, studying vMF-exp in clustered embedding settings, as is sometimes the case with music recommendation embeddings (where clusters can, e.g., summarize music genres~\citep{salha2022modularity}), could be insightful. We believe that future research should benefit from the approach we proposed in this paper to derive the full (non-trivial) demonstration for the uniform distribution case.

Our future work will also investigate the second-order term from Proposition~\ref{prop:vMF-exp_term_one}, which may be significant for large $\kappa$, as well as the impact of ANN errors. Although our analysis assumes exact neighbor retrieval, this assumption may break down for extremely large action sets~\citep{johnson2019billion}, potentially causing minor exploration perturbations.

\section{vMF-exp in the Real World}
\label{sec:live_exp}

As explained in Section~\ref{s1}, the primary objective of this paper was to introduce vMF-exp and provide a rigorous mathematical analysis of its theoretical properties for exploring large action sets represented by hyperspherical embedding vectors. Nonetheless, as an opening to our work and a complement to this mathematical analysis, we describe in this section several empirical validations of vMF-exp, further detailed in Appendices~\ref{annex-MC},~\ref{glove-experiments}, and~\ref{sec:live_exp-annex}. Taken together, these experiments validate the claimed scalability and theoretical properties of the proposed method, as well as its potential and impact for real-world applications. Specifically:
\begin{itemize}
    \item To begin with, we recall that, as explained in Section~\ref{discussion}, a more comprehensive outline of the Monte Carlo simulations discussed in that section is provided in Appendix~\ref{annex-MC}. The additional results presented in this appendix are consistent with those shown in Figure~\ref{fig:mc} of Section~\ref{discussion}. To ensure the reproducibility of these experiments and to encourage future use of the vMF-exp method, we have released a Python implementation of vMF-exp on GitHub with this paper: \url{https://github.com/deezer/vMF-exploration}.
    \item To go further, we recognize that some readers may wish to explore our topic through reproducible experiments on real-world data. Therefore, in Appendix~\ref{glove-experiments}, we empirically validate the main properties of vMF-exp using a large-scale, publicly available dataset of one million GloVe word embedding vectors~\citep{pennington2014glove}. We demonstrate that vMF-exp simultaneously satisfies \textbf{P1}, \textbf{P2}, and \textbf{P3} on this GloVe dataset. Furthermore, this study on GloVe vectors shows the accuracy of our approximations of $P_{\text{B-exp}}(a)$ and $P_{\text{vMF-exp}}(a)$ from Propositions~\ref{prop:B-exp},~\ref{prop:vMF-exp}, and~\ref{prop:vMF-exp_term_one}, despite the fact that GloVe vectors do not strictly meet the i.i.d. and uniform assumptions of our theoretical study.  
This additional study is fully reproducible using the code provided in the GitHub repository mentioned above.
\item The final appendix of this paper, Appendix~\ref{sec:live_exp-annex}, showcases a real-world application of vMF-exp. We present its successful large-scale deployment in the private production system of the global music streaming service Deezer \citep{bendada2023track}. On this service, vMF-exp has been employed for months to recommend playlists of songs inspired by an initial selection to millions of users (see Figure~\ref{figure:trackmix}), exploring a catalog of millions of candidate songs. This application, validated by a positive worldwide online A/B test, confirms the practical relevance of our work. As vMF-exp was successfully deployed in production, achieving a sampling latency of just a few milliseconds, it also confirms the scalability of the method.

\end{itemize}

\section{Conclusion}
\label{sec:conclusion}

This paper introduced vMF-exp, a scalable method for exploring large action sets in RL problems where hyperspherical embedding vectors represent these actions. vMF-exp scales effectively to large sets with millions of actions, exhibits desirable properties, and, under theoretical assumptions, even asymptotically preserves the same exploration probabilities as B-exp, a prevalent RL exploration method that suffers from scalability limitations. This establishes vMF-exp as a scalable and practical alternative to B-exp.
While the primary focus of this paper is on the theoretical foundations of vMF-exp, extensive experiments on simulated data, real-world public data, and the successful deployment of vMF-exp on a music streaming service validated the scalability and practical relevance of the proposed method.

\section*{Impact Statement}

This paper introduces a scalable method for exploring large action sets in reinforcement learning problems, with applications including recommender systems. While our work is primarily methodological and theoretical, its integration into real-world systems, such as personalized music recommender systems, can influence user experience, engagement, and exposure to information. We highlight the potential for both positive outcomes (e.g., improved personalization and efficiency) and risks such as over-personalization or bias amplification if used without appropriate fairness and diversity constraints. We encourage careful deployment and further study when applying this method to sensitive domains involving social or behavioral data.

\clearpage

\bibliography{references}

\begin{thebibliography}{70}
\providecommand{\natexlab}[1]{#1}
\providecommand{\url}[1]{\texttt{#1}}
\expandafter\ifx\csname urlstyle\endcsname\relax
  \providecommand{\doi}[1]{doi: #1}\else
  \providecommand{\doi}{doi: \begingroup \urlstyle{rm}\Url}\fi

\bibitem[Abramowitz \& Stegun(1948)Abramowitz and Stegun]{abramowitz1948handbook}
Abramowitz, M. and Stegun, I.~A.
\newblock \emph{Handbook of Mathematical Functions with Formulas, Graphs, and Mathematical Tables}, volume~55.
\newblock US Government Printing Office, 1948.

\bibitem[Afchar et~al.(2023)Afchar, Hennequin, and Guigue]{afchar2023spiky}
Afchar, D., Hennequin, R., and Guigue, V.
\newblock Of spiky svds and music recommendation.
\newblock In \emph{Proceedings of the 17th ACM Conference on Recommender Systems}, pp.\  926--932, 2023.

\bibitem[Afsar et~al.(2022)Afsar, Crump, and Far]{afsar2022reinforcement}
Afsar, M.~M., Crump, T., and Far, B.
\newblock Reinforcement learning based recommender systems: A survey.
\newblock \emph{ACM~Computing Surveys}, 55\penalty0 (7):\penalty0 1--38, 2022.

\bibitem[Amin et~al.(2021)Amin, Gomrokchi, Satija, van Hoof, and Precup]{amin2021survey}
Amin, S., Gomrokchi, M., Satija, H., van Hoof, H., and Precup, D.
\newblock A survey of exploration methods in reinforcement learning.
\newblock \emph{arXiv preprint arXiv:2109.00157}, 2021.

\bibitem[Aum{\"u}ller et~al.(2017)Aum{\"u}ller, Bernhardsson, and Faithfull]{aumuller2017ann}
Aum{\"u}ller, M., Bernhardsson, E., and Faithfull, A.
\newblock Ann-benchmarks: A benchmarking tool for approximate nearest neighbor algorithms.
\newblock In \emph{Proceedings of the 10th International Conference on Similarity Search and Applications}, pp.\  34--49. Springer, 2017.

\bibitem[Banerjee et~al.(2005)Banerjee, Dhillon, Ghosh, Sra, and Ridgeway]{banerjee2005clustering}
Banerjee, A., Dhillon, I.~S., Ghosh, J., Sra, S., and Ridgeway, G.
\newblock Clustering on the unit hypersphere using von mises-fisher distributions.
\newblock \emph{Journal of Machine Learning Research}, 6\penalty0 (9), 2005.

\bibitem[Banerjee \& Roy(2014)Banerjee and Roy]{banerjee2014linear}
Banerjee, S. and Roy, A.
\newblock \emph{Linear Algebra and Matrix Analysis for Statistics}.
\newblock CRC Press, 2014.

\bibitem[Baricz(2010)]{baricz2010generalized}
Baricz, {\'A}.
\newblock \emph{Generalized Bessel Functions of the First Kind}.
\newblock Springer, 2010.

\bibitem[Bendada et~al.(2020)Bendada, Salha, and Bontempelli]{bendada2020carousel}
Bendada, W., Salha, G., and Bontempelli, T.
\newblock Carousel personalization in music streaming apps with contextual bandits.
\newblock In \emph{Proceedings of the 14th ACM Conference on Recommender Systems}, pp.\  420--425, 2020.

\bibitem[Bendada et~al.(2023{\natexlab{a}})Bendada, Bontempelli, Morlon, Chapus, Cador, Bouab{\c{c}}a, and Salha-Galvan]{bendada2023track}
Bendada, W., Bontempelli, T., Morlon, M., Chapus, B., Cador, T., Bouab{\c{c}}a, T., and Salha-Galvan, G.
\newblock Track mix generation on music streaming services using transformers.
\newblock In \emph{Proceedings of the 17th ACM Conference on Recommender Systems}, pp.\  112--115, 2023{\natexlab{a}}.

\bibitem[Bendada et~al.(2023{\natexlab{b}})Bendada, Salha-Galvan, Bouab{\c{c}}a, and Cazenave]{bendada2023scalable}
Bendada, W., Salha-Galvan, G., Bouab{\c{c}}a, T., and Cazenave, T.
\newblock A scalable framework for automatic playlist continuation on music streaming services.
\newblock In \emph{Proceedings of the 46th International ACM SIGIR Conference on Research and Development in Information Retrieval}, pp.\  464--474, 2023{\natexlab{b}}.

\bibitem[Bendada et~al.(2024)Bendada, Salha-Galvan, Hennequin, Bontempelli, Bouab{\c{c}}a, and Cazenave]{bendada2024vmf}
Bendada, W., Salha-Galvan, G., Hennequin, R., Bontempelli, T., Bouab{\c{c}}a, T., and Cazenave, T.
\newblock vmf-exp: von mises-fisher exploration of large action sets with hyperspherical embeddings.
\newblock In \emph{ICML 2024 Workshop on Aligning Reinforcement Learning Experimentalists and Theorists}, 2024.

\bibitem[Bendada et~al.(2025)Bendada, Salha-Galvan, Hennequin, Bontempelli, Bouab{\c{c}}a, and Cazenave]{bendada2025mises}
Bendada, W., Salha-Galvan, G., Hennequin, R., Bontempelli, T., Bouab{\c{c}}a, T., and Cazenave, T.
\newblock von mises-fisher sampling of glove vectors.
\newblock In \emph{ICLR 2025 Workshop on Frontiers in Probabilistic Inference: Learning meets Sampling}, 2025.

\bibitem[Billingsley(2013)]{billingsley2013convergence}
Billingsley, P.
\newblock \emph{Convergence of Probability Measures}.
\newblock John Wiley \& Sons, 2013.

\bibitem[Bontempelli et~al.(2022)Bontempelli, Chapus, Rigaud, Morlon, Lorant, and Salha-Galvan]{bontempelli2022flow}
Bontempelli, T., Chapus, B., Rigaud, F., Morlon, M., Lorant, M., and Salha-Galvan, G.
\newblock Flow moods: Recommending music by moods on deezer.
\newblock In \emph{Proceedings of the 16th ACM Conference on Recommender Systems}, pp.\  452--455, 2022.

\bibitem[Boytsov \& Naidan(2013)Boytsov and Naidan]{BoytsovN13_nmslb}
Boytsov, L. and Naidan, B.
\newblock Engineering efficient and effective non-metric space library.
\newblock In \emph{Proceedings of the 6th International Conference on Similarity Search and Applications}, pp.\  280--293. Springer, 2013.

\bibitem[Briand et~al.(2021)Briand, Salha-Galvan, Bendada, Morlon, and Tran]{briand2021semi}
Briand, L., Salha-Galvan, G., Bendada, W., Morlon, M., and Tran, V.-A.
\newblock A semi-personalized system for user cold start recommendation on music streaming apps.
\newblock In \emph{Proceedings of the 27th ACM SIGKDD conference on knowledge discovery \& data mining}, pp.\  2601--2609, 2021.

\bibitem[Briand et~al.(2024)Briand, Bontempelli, Bendada, Morlon, Rigaud, Chapus, Bouab{\c{c}}a, and Salha-Galvan]{briand2024let}
Briand, L., Bontempelli, T., Bendada, W., Morlon, M., Rigaud, F., Chapus, B., Bouab{\c{c}}a, T., and Salha-Galvan, G.
\newblock Let’s get it started: Fostering the discoverability of new releases on deezer.
\newblock In \emph{European Conference on Information Retrieval}, pp.\  286--291. Springer, 2024.

\bibitem[Cesa-Bianchi et~al.(2017)Cesa-Bianchi, Gentile, Lugosi, and Neu]{cesa2017boltzmann}
Cesa-Bianchi, N., Gentile, C., Lugosi, G., and Neu, G.
\newblock Boltzmann exploration done right.
\newblock \emph{Advances in Neural Information Processing Systems}, 30, 2017.

\bibitem[Chapelle \& Li(2011)Chapelle and Li]{chapelle2011empirical}
Chapelle, O. and Li, L.
\newblock An empirical evaluation of thompson sampling.
\newblock \emph{Advances in Neural Information Processing Systems}, 24, 2011.

\bibitem[Chen et~al.(2023)Chen, Wu, Wu, Cao, Zhou, and He]{chen2023adap}
Chen, J., Wu, J., Wu, J., Cao, X., Zhou, S., and He, X.
\newblock Adap-$\tau$: Adaptively modulating embedding magnitude for recommendation.
\newblock In \emph{Proceedings of the ACM Web Conference 2023}, pp.\  1085--1096, 2023.

\bibitem[Chen et~al.(2019)Chen, Beutel, Covington, Jain, Belletti, and Chi]{chen2019top}
Chen, M., Beutel, A., Covington, P., Jain, S., Belletti, F., and Chi, E.~H.
\newblock Top-k off-policy correction for a reinforce recommender system.
\newblock In \emph{Proceedings of the 12th ACM International Conference on Web Search and Data Mining}, pp.\  456--464, 2019.

\bibitem[Chen et~al.(2021)Chen, Chang, Xu, and Chi]{chen2021user}
Chen, M., Chang, B., Xu, C., and Chi, E.~H.
\newblock User response models to improve a reinforce recommender system.
\newblock In \emph{Proceedings of the 14th ACM International Conference on Web Search and Data Mining}, pp.\  121--129, 2021.

\bibitem[Chen et~al.(2022)Chen, Xu, Gatto, Jain, Kumar, and Chi]{chen2022off}
Chen, M., Xu, C., Gatto, V., Jain, D., Kumar, A., and Chi, E.
\newblock Off-policy actor-critic for recommender systems.
\newblock In \emph{Proceedings of the 16th ACM Conference on Recommender Systems}, pp.\  338--349, 2022.

\bibitem[Chiappa et~al.(2023)Chiappa, Marin~Vargas, Huang, and Mathis]{chiappa2024latent}
Chiappa, A.~S., Marin~Vargas, A., Huang, A., and Mathis, A.
\newblock Latent exploration for reinforcement learning.
\newblock \emph{Advances in Neural Information Processing Systems}, 36, 2023.

\bibitem[Coolidge(1949)]{coolidge1949story}
Coolidge, J.~L.
\newblock The story of the binomial theorem.
\newblock \emph{The American Mathematical Monthly}, 56\penalty0 (3):\penalty0 147--157, 1949.

\bibitem[Cormen et~al.(2022)Cormen, Leiserson, Rivest, and Stein]{cormen2022introduction}
Cormen, T.~H., Leiserson, C.~E., Rivest, R.~L., and Stein, C.
\newblock \emph{Introduction to Algorithms}.
\newblock MIT Press, 2022.

\bibitem[Dann et~al.(2022)Dann, Mansour, Mohri, Sekhari, and Sridharan]{dann2022guarantees}
Dann, C., Mansour, Y., Mohri, M., Sekhari, A., and Sridharan, K.
\newblock Guarantees for epsilon-greedy reinforcement learning with function approximation.
\newblock In \emph{Proceedings of the 39th International Conference on Machine Learning}, pp.\  4666--4689. PMLR, 2022.

\bibitem[Douze et~al.(2024)Douze, Guzhva, Deng, Johnson, Szilvasy, Mazar{\'e}, et~al.]{douze2024faiss}
Douze, M., Guzhva, A., Deng, C., Johnson, J., Szilvasy, G., Mazar{\'e}, et~al.
\newblock The faiss library.
\newblock \emph{arXiv preprint arXiv:2401.08281}, 2024.

\bibitem[Du et~al.(1999)Du, Faber, and Gunzburger]{v1}
Du, Q., Faber, V., and Gunzburger, M.
\newblock Centroidal voronoi tessellations: Applications and algorithms.
\newblock \emph{SIAM Review}, 41\penalty0 (4):\penalty0 637--676, 1999.

\bibitem[Du et~al.(2010)Du, Gunzburger, and Ju]{v2}
Du, Q., Gunzburger, M., and Ju, L.
\newblock Advances in studies and applications of centroidal voronoi tessellations.
\newblock \emph{Numerical Mathematics: Theory, Methods and Applications}, 3\penalty0 (2):\penalty0 119--142, 2010.

\bibitem[Dulac-Arnold et~al.(2015)Dulac-Arnold, Evans, van Hasselt, Sunehag, Lillicrap, Hunt, Mann, Weber, Degris, and Coppin]{dulac2015deep}
Dulac-Arnold, G., Evans, R., van Hasselt, H., Sunehag, P., Lillicrap, T., Hunt, J., Mann, T., Weber, T., Degris, T., and Coppin, B.
\newblock Deep reinforcement learning in large discrete action spaces.
\newblock \emph{arXiv preprint arXiv:1512.07679}, 2015.

\bibitem[Fischer(2011)]{fischer2011history}
Fischer, H.
\newblock \emph{A History of the Central Limit Theorem: from Classical to Modern Probability Theory}.
\newblock Springer, 2011.

\bibitem[Fisher(1953)]{fisher1953dispersion}
Fisher, R.~A.
\newblock Dispersion on a sphere.
\newblock \emph{Proceedings of the Royal Society of London. Series A. Mathematical and Physical Sciences}, 217\penalty0 (1130):\penalty0 295--305, 1953.

\bibitem[Gentle(2009)]{gentle2009computational}
Gentle, J.~E.
\newblock \emph{Computational Statistics}, volume 308.
\newblock Springer, 2009.

\bibitem[Gnedenko(1943)]{gnedenko1943distribution}
Gnedenko, B.
\newblock Sur la distribution limite du terme maximum d'une serie aleatoire.
\newblock \emph{Annals of Mathematics}, pp.\  423--453, 1943.

\bibitem[Guo et~al.(2020)Guo, Sun, Lindgren, Geng, Simcha, Chern, and Kumar]{avq_2020}
Guo, R., Sun, P., Lindgren, E., Geng, Q., Simcha, D., Chern, F., and Kumar, S.
\newblock Accelerating large-scale inference with anisotropic vector quantization.
\newblock In \emph{Proceedings of the 37th International Conference on Machine Learning}, pp.\  3887--3896, 2020.

\bibitem[Iwasaki \& Miyazaki(2018)Iwasaki and Miyazaki]{iwasaki2018optimization}
Iwasaki, M. and Miyazaki, D.
\newblock Optimization of indexing based on k-nearest neighbor graph for proximity search.
\newblock \emph{arXiv preprint arXiv:1810.07355}, 2018.

\bibitem[Jacobson et~al.(2016)Jacobson, Murali, Newett, Whitman, and Yon]{jacobson2016music}
Jacobson, K., Murali, V., Newett, E., Whitman, B., and Yon, R.
\newblock {Music Personalization at Spotify}.
\newblock pp.\  373--373, 2016.

\bibitem[Jacod \& Protter(2004)Jacod and Protter]{jacod2004probability}
Jacod, J. and Protter, P.
\newblock \emph{Probability Essentials}.
\newblock Springer Science \& Business Media, 2004.

\bibitem[Jin et~al.(2020)Jin, Krishnamurthy, Simchowitz, and Yu]{jin2020reward}
Jin, C., Krishnamurthy, A., Simchowitz, M., and Yu, T.
\newblock Reward-free exploration for reinforcement learning.
\newblock In \emph{Proceedings of the 37th International Conference on Machine Learning}, pp.\  4870--4879. PMLR, 2020.

\bibitem[Johnson et~al.(2019)Johnson, Douze, and J{\'e}gou]{johnson2019billion}
Johnson, J., Douze, M., and J{\'e}gou, H.
\newblock Billion-scale similarity search with gpus.
\newblock \emph{IEEE Transactions on Big Data}, 7\penalty0 (3):\penalty0 535--547, 2019.

\bibitem[Kang \& Oh(2024)Kang and Oh]{kang2024novel}
Kang, S. and Oh, H.-S.
\newblock Novel sampling method for the von mises--fisher distribution.
\newblock \emph{Statistics and Computing}, 34\penalty0 (3):\penalty0 106, 2024.

\bibitem[Kim et~al.(2023)Kim, Park, and Kim]{kim2023test}
Kim, D., Park, J., and Kim, D.
\newblock Test-time embedding normalization for popularity bias mitigation.
\newblock In \emph{Proceedings of the 32nd ACM International Conference on Information and Knowledge Management}, pp.\  4023--4027, 2023.

\bibitem[Konda \& Tsitsiklis(1999)Konda and Tsitsiklis]{konda1999actor}
Konda, V. and Tsitsiklis, J.
\newblock Actor-critic algorithms.
\newblock \emph{Advances in Neural Information Processing Systems}, 12, 1999.

\bibitem[Koren \& Bell(2015)Koren and Bell]{koren2015advances}
Koren, Y. and Bell, R.
\newblock {Advances in Collaborative Filtering}.
\newblock \emph{Recommender Systems Handbook}, pp.\  77--118, 2015.

\bibitem[Ladosz et~al.(2022)Ladosz, Weng, Kim, and Oh]{ladosz2022exploration}
Ladosz, P., Weng, L., Kim, M., and Oh, H.
\newblock Exploration in deep reinforcement learning: A survey.
\newblock \emph{Information Fusion}, 85:\penalty0 1--22, 2022.

\bibitem[Lange et~al.(2012)Lange, Gabel, and Riedmiller]{lange2012batch}
Lange, S., Gabel, T., and Riedmiller, M.
\newblock Batch reinforcement learning.
\newblock In \emph{Reinforcement Learning: State-Of-The-Art}, pp.\  45--73. Springer, 2012.

\bibitem[LeCun et~al.(2015)LeCun, Bengio, and Hinton]{lecun2015deep}
LeCun, Y., Bengio, Y., and Hinton, G.
\newblock Deep learning.
\newblock \emph{Nature}, 521\penalty0 (7553):\penalty0 436--444, 2015.

\bibitem[Li et~al.(2019)Li, Zhang, Sun, Wang, Li, Zhang, and Lin]{li2019approximate}
Li, W., Zhang, Y., Sun, Y., Wang, W., Li, M., Zhang, W., and Lin, X.
\newblock Approximate nearest neighbor search on high dimensional data — experiments, analyses, and improvement.
\newblock \emph{IEEE Transactions on Knowledge and Data Engineering}, 32\penalty0 (8):\penalty0 1475--1488, 2019.

\bibitem[Lillicrap et~al.(2016)Lillicrap, Hunt, Pritzel, Heess, Erez, Tassa, Silver, and Wierstra]{lillicrap2016continuous}
Lillicrap, T.~P., Hunt, J.~J., Pritzel, A., Heess, N., Erez, T., Tassa, Y., Silver, D., and Wierstra, D.
\newblock Continuous control with deep reinforcement learning.
\newblock In \emph{Proceedings of the 4th International Conference on Learning Representation}, 2016.

\bibitem[Malkov \& Yashunin(2018)Malkov and Yashunin]{MalkovY16_hnsw}
Malkov, Y.~A. and Yashunin, D.~A.
\newblock Efficient and robust approximate nearest neighbor search using hierarchical navigable small world graphs.
\newblock \emph{IEEE Transactions on Pattern Analysis and Machine Intelligence}, 42\penalty0 (4):\penalty0 824--836, 2018.

\bibitem[Mardia \& Jupp(2009)Mardia and Jupp]{mardia2009directional}
Mardia, K.~V. and Jupp, P.~E.
\newblock \emph{Directional Statistics}.
\newblock John Wiley \& Sons, 2009.

\bibitem[McFarlane(2018)]{mcfarlane2018survey}
McFarlane, R.
\newblock A survey of exploration strategies in reinforcement learning.
\newblock \emph{McGill University}, 3:\penalty0 17--18, 2018.

\bibitem[Mikolov et~al.(2013)Mikolov, Sutskever, Chen, Corrado, and Dean]{mikolov2013efficient}
Mikolov, T., Sutskever, I., Chen, K., Corrado, G.~S., and Dean, J.
\newblock Distributed representations of words and phrases and their compositionality.
\newblock \emph{Advances in Neural Information Processing Systems}, 26, 2013.

\bibitem[Oehlert(1992)]{oehlert1992note}
Oehlert, G.~W.
\newblock A note on the delta method.
\newblock \emph{The American Statistician}, 46\penalty0 (1):\penalty0 27--29, 1992.

\bibitem[Pennington et~al.(2014)Pennington, Socher, and Manning]{pennington2014glove}
Pennington, J., Socher, R., and Manning, C.~D.
\newblock Glove: Global vectors for word representation.
\newblock In \emph{Proceedings of the 2014 Conference on Empirical Methods in Natural Language Processing}, pp.\  1532--1543, 2014.

\bibitem[Pinz{\'o}n \& Jung(2023)Pinz{\'o}n and Jung]{pinzon2023fast}
Pinz{\'o}n, C. and Jung, K.
\newblock Fast python sampler for the von mises fisher distribution.
\newblock \emph{HAL Id: hal-04004568}, 2023.

\bibitem[Reynolds(2002)]{reynolds2002reinforcement}
Reynolds, S.~I.
\newblock Reinforcement learning with exploration.
\newblock \emph{Ph.D. Thesis, University of Birmingham}, 2002.

\bibitem[Salha-Galvan et~al.(2022)Salha-Galvan, Lutzeyer, Dasoulas, Hennequin, and Vazirgiannis]{salha2022modularity}
Salha-Galvan, G., Lutzeyer, J.~F., Dasoulas, G., Hennequin, R., and Vazirgiannis, M.
\newblock Modularity-aware graph autoencoders for joint community detection and link prediction.
\newblock \emph{Neural Networks}, 153:\penalty0 474--495, 2022.

\bibitem[Schedl et~al.(2018)Schedl, Zamani, Chen, Deldjoo, and Elahi]{schedl2018current}
Schedl, M., Zamani, H., Chen, C.-W., Deldjoo, Y., and Elahi, M.
\newblock Current challenges and visions in music recommender systems research.
\newblock \emph{International Journal of Multimedia Information Retrieval}, 7:\penalty0 95--116, 2018.

\bibitem[Simhadri et~al.(2024)Simhadri, Aum{\"u}ller, Ingber, Douze, Williams, Manohar, Baranchuk, Liberty, Liu, Landrum, et~al.]{simhadri2024results}
Simhadri, H.~V., Aum{\"u}ller, M., Ingber, A., Douze, M., Williams, G., Manohar, M.~D., Baranchuk, D., Liberty, E., Liu, F., Landrum, B., et~al.
\newblock Results of the big ann: Neurips'23 competition.
\newblock \emph{arXiv preprint arXiv:2409.17424}, 2024.

\bibitem[Slivkins et~al.(2019)]{slivkins2019introduction}
Slivkins, A. et~al.
\newblock Introduction to multi-armed bandits.
\newblock \emph{Foundations and Trends in Machine Learning}, 12\penalty0 (1-2):\penalty0 1--286, 2019.

\bibitem[Sra(2012)]{sra2012short}
Sra, S.
\newblock A short note on parameter approximation for von mises-fisher distributions: And a fast implementation of is(x).
\newblock \emph{Computational Statistics}, 27:\penalty0 177--190, 2012.

\bibitem[Sutton \& Barto(2018)Sutton and Barto]{sutton2018reinforcement}
Sutton, R.~S. and Barto, A.~G.
\newblock \emph{Reinforcement Learning: An Introduction}.
\newblock MIT Press, 2018.

\bibitem[Tan et~al.(2016)Tan, Steinbach, and Kumar]{tan2016introduction}
Tan, P.-N., Steinbach, M., and Kumar, V.
\newblock \emph{Introduction to Data Mining}.
\newblock Pearson Education India, 2016.

\bibitem[Tang et~al.(2017)Tang, Houthooft, Foote, Stooke, Xi~Chen, Duan, Schulman, DeTurck, and Abbeel]{tang2017exploration}
Tang, H., Houthooft, R., Foote, D., Stooke, A., Xi~Chen, O., Duan, Y., Schulman, J., DeTurck, F., and Abbeel, P.
\newblock \# exploration: A study of count-based exploration for deep reinforcement learning.
\newblock \emph{Advances in Neural Information Processing Systems}, 30, 2017.

\bibitem[Tomasi et~al.(2023)Tomasi, Cauteruccio, Kanoria, Ciosek, Rinaldi, and Dai]{tomasi2023automatic}
Tomasi, F., Cauteruccio, J., Kanoria, S., Ciosek, K., Rinaldi, M., and Dai, Z.
\newblock Automatic music playlist generation via simulation-based reinforcement learning.
\newblock In \emph{Proceedings of the 29th ACM SIGKDD Conference on Knowledge Discovery and Data Mining}, pp.\  4948--4957, 2023.

\bibitem[Zamani et~al.(2019)Zamani, Schedl, Lamere, and Chen]{zamani2019analysis}
Zamani, H., Schedl, M., Lamere, P., and Chen, C.-W.
\newblock An analysis of approaches taken in the acm recsys challenge 2018 for automatic music playlist continuation.
\newblock \emph{ACM Transactions on Intelligent Systems and Technology (TIST)}, 10\penalty0 (5):\penalty0 1--21, 2019.

\bibitem[Zhu et~al.(2024)Zhu, Liu, Wei, Fu, Hu, Fang, An, Hao, Lv, and Fan]{zhu2024vmfer}
Zhu, Y., Liu, J., Wei, W., Fu, Q., Hu, Y., Fang, Z., An, B., Hao, J., Lv, T., and Fan, C.
\newblock vmfer: von mises-fisher experience resampling based on uncertainty of gradient directions for policy improvement of actor-critic algorithms.
\newblock In \emph{Proceedings of the 23rd International Conference on Autonomous Agents and Multiagent Systems}, pp.\  2621--2623, 2024.

\end{thebibliography}
\bibliographystyle{icml2025}

\clearpage

\appendix
\onecolumn
\section*{Appendix}

This appendix supplements the article ``\textit{Exploring Large Action Sets with Hyperspherical Embeddings using von Mises-Fisher Sampling}'' and is organized as follows: \begin{itemize} \item Appendices~\ref{sec:boltzmann_proof}, \ref{sec:vmf_2d_proof}, \ref{appendixC}, and \ref{dernierepreuve} provide detailed proofs and discussions for all theoretical results presented in the paper. \item Appendix~\ref{annex-TS} explores the connection between vMF-exp and Thompson Sampling. \item Appendix~\ref{annexeVMFinpractice} explains practical methods for sampling from the vMF distribution. \item Appendix~\ref{annex-MC} presents the complete results of our Monte Carlo simulations. \item Appendix~\ref{glove-experiments} details an additional experimental study using a large-scale, publicly available GloVe dataset. \item Appendix~\ref{sec:live_exp-annex} highlights the successful large-scale deployment of vMF-exp in the private production system of the global music streaming service Deezer for large-scale music recommendation. \end{itemize}

\section{Asymptotic Behavior of Boltzmann Exploration (Proof of Proposition~\ref{prop:B-exp})}
\label{sec:boltzmann_proof}

We begin with the proof of Proposition~\ref{prop:B-exp} claiming that, in the setting of Section~\ref{sec:hyp}, we have:
\begin{equation}
\label{eq_app:p_boltzmann}
    P_{\text{B-exp}}(a \mid n, d, V, \kappa) = \underbrace{\frac{f_{\text{vMF}}(A\mid V,\kappa) \mathcal{A}(\mathcal{S}^{d-1})}{n}}_{\text{denoted~} P_{0}(a \mid n, d, V, \kappa)} + o(\frac{1}{n\sqrt{n}}),
\end{equation}
with $f_{\text{vMF}}$ the probability density function (PDF) of the von Mises-Fisher (vMF)~\citep{fisher1953dispersion} distribution:
\begin{equation}
\label{eq_app:vmf_pdf}
    \forall A \in \mathcal{S}^{d-1}, f_{\text{vMF}}(A \mid V, \kappa) = C_{d}(\kappa ) e^{\kappa \dotproduct{V}{A}},
\end{equation}
where $\mathcal{A}(\mathcal{S}^{d-1})$ is the surface area of $\mathcal{S}^{d-1}$, the $d$-dimensional unit hypersphere, and $C_{d}(\kappa)$ is the normalizing constant.

\begin{proof}
    
By definition, 
\begin{equation}
\begin{split}
P_{\text{B-exp}}(a \mid n, d, V, \kappa) &= \E_{\mathcal{X}_n \sim  \mathcal{U}(\mathcal{S}^{d-1})}\left[\frac{{\rm{e}}^{\kappa \dotproduct{V}{A}}}{{\rm{e}}^{\kappa\dotproduct{V}{A}} + \sum_{i=1}^{n} {\rm{e}}^{\kappa \dotproduct{V}{X_{i}}}}\right]\\
   &= \frac{{\rm{e}}^{\kappa \dotproduct{V}{A}}}{n} \E_{\mathcal{X}_n \sim  \mathcal{U}(\mathcal{S}^{d-1})} \left[ \frac{1}{{\frac{{\rm{e}}^{\kappa \dotproduct{V}{A}}}{n}} + \sum_{i=1}^{n} \frac{{\rm{e}}^{\kappa\dotproduct{V}{X_{i}}}}{n}} \right]\\
      &= \frac{{\rm{e}}^{\kappa \dotproduct{V}{A}}}{n} \E_{\mathcal{X}_n \sim  \mathcal{U}(\mathcal{S}^{d-1})} \left[ \frac{1}{D_{n}} \right].
      \end{split}
\end{equation}
We use $D_{n}$ to denote the denominator of the expression inside the above expectation. $D_{n}$ is the empirical average of $n$ independent and identically distributed (i.i.d.) random variables (plus a constant). Therefore, by applying the \textit{Central Limit Theorem (CLT)}~\citep{fischer2011history}, we know that as $n$ grows it will be asymptotically distributed according to a Normal distribution with the following expectation:
\begin{equation}
\begin{split}
            \E_{\mathcal{X}_n \sim  \mathcal{U}(\mathcal{S}^{d-1})}\left[D_n\right]&=\E_{\mathcal{X}_n \sim  \mathcal{U}(\mathcal{S}^{d-1})}\left[\frac{{\rm{e}}^{\kappa \dotproduct{V}{A}}}{n} + \sum_{i=1}^{n} \frac{{\rm{e}}^{\kappa\dotproduct{V}{X_i}}}{n}\right]\\
            &=\frac{{\rm{e}}^{\kappa\dotproduct{V}{A}}}{n} + \E_{X \sim  \mathcal{U}(\mathcal{S}^{d-1})}\left[{\rm{e}}^{\kappa\dotproduct{V}{X}}\right].
\end{split}
\end{equation}
Moreover, we have:
\begin{equation}
\begin{split}
\E_{X \sim  \mathcal{U}(\mathcal{S}^{d-1})}\left[{\rm{e}}^{\kappa\dotproduct{V}{X}}\right]&=\int_{X \in \mathcal{S}^{d-1}}\frac{\rm{e}^{\kappa\dotproduct{V}{X}}}{\mathcal{A}(\mathcal{S}^{d-1})} \diff X\\
&= \frac{1}{\mathcal{A}(\mathcal{S}^{d-1}) C_{d}(\kappa)},
\end{split}
\end{equation}
using the fact that $C_{d}(\kappa)$ is the normalizing constant of a vMF distribution, ensuring that its PDF (Equation~\eqref{eq_app:vmf_pdf}) sums to 1 when integrated on the unit hypersphere.

Let us define $\sigma = \Var_{X \sim  \mathcal{U}(\mathcal{S}^{d-1})}\left[{\rm{e}}^{\kappa\dotproduct{V}{X}}\right]$ Although we do not need an explicit expression for $\sigma$, we know it is finite. Additionally, let $g:x\mapsto \displaystyle {\dfrac {1}{x}}$ be the inverse function. The CLT ensures that:
\begin{equation}
    {\displaystyle {{\sqrt {n}}\Big[D_{n}-\frac{1}{\mathcal{A}(\mathcal{S}^{d-1}) C_{d}(\kappa)} \Big]\,{\xrightarrow {D}}\,{\mathcal {N}}(0,\sigma ^{2})}},
\end{equation}
 where $\,{\xrightarrow {D}}\,$ denotes convergence in distribution~\citep{jacod2004probability}.
Moreover, since $g$ is a differentiable function on $\mathbb {R}_{+}^{*}$, we use the \textit{Delta method}~\citep{oehlert1992note} to infer that:
\begin{equation}
    {\displaystyle {{\sqrt {n}}[g(D_{n})-g(\frac{1}{\mathcal{A}(\mathcal{S}^{d-1}) C_{d}(\kappa)})]\,{\xrightarrow {D}}}\,{\mathcal {N}}(0,\sigma ^{2}[g'(\frac{1}{\mathcal{A}(\mathcal{S}^{d-1}) C_{d}(\kappa)} )]^{2})}.
\end{equation}
Replacing $g$ and $g'$ by their respective values, we obtain:
\begin{equation}
    {\sqrt {n}}\Big[\frac{1}{D_n}  - C_{d}(\kappa)\mathcal{A}(\mathcal{S}^{d-1})\Big]\,{\xrightarrow {D}}\, {\mathcal {N}}(0,\sigma^{2}(\mathcal{A}(\mathcal{S}^{d-1}) C_{d}(\kappa))^{4}).
\end{equation}

Furthermore, recall that if a sequence $ Z_1, Z_2,...$ of random variables converges in distribution to a random variable $Z$, then for all bounded continuous function $\phi$, ${\displaystyle \lim_{n\to +\infty}\E[\phi(Z_{n})] = \E[\phi(Z)]}$~\citep{jacod2004probability}. Since for every $n$ the random variable $Z_n = {\sqrt {n}}[\frac{1}{D_n}  - C_{d}(\kappa)\mathcal{A}(\mathcal{S}^{d-1})]$ has bounded values, we can simply chose the identity function for $\phi$ to conclude that :
\begin{equation}
 \lim_{n\to +\infty}\E_{\mathcal{X}_n \sim  \mathcal{U}(\mathcal{S}^{d-1})}\left[{\sqrt {n}}[\frac{1}{D_n}  - C_{d}(\kappa)\mathcal{A}(\mathcal{S}^{d-1})]\right] = 0,
\end{equation}
which is equivalent to:
\begin{equation}
\label{eq_app:small_o_boltzmann}
 \E_{\mathcal{X}_n \sim  \mathcal{U}(\mathcal{S}^{d-1})}\left[\frac{1}{D_n}\right] = C_{d}(\kappa)\mathcal{A}(\mathcal{S}^{d-1}) + o(\frac{1}{\sqrt{n}}).
\end{equation}
Finally, by multiplying Equation~\eqref{eq_app:small_o_boltzmann} by $\frac{{\rm{e}}^{\kappa \dotproduct{V}{A}}}{n}$, we obtain Equation~\eqref{eq_app:p_boltzmann}, concluding the proof.
\end{proof}

\clearpage

\section{Asymptotic Behavior of vMF Exploration in $d=2$ dimensions (Proof of Proposition~\ref{prop:vMF-exp}, Part~1)}
\label{sec:vmf_2d_proof}
We now prove Proposition~\ref{prop:vMF-exp} when $d=2$.
In 2 dimensions, the vMF distribution takes the special form of the von Mises (vM) distribution \citep{mardia2009directional} which, instead of describing the distribution of the dot product between $V$ and $\VS$, describes the distribution of their angle $\theta$. The PDF of a von Mises distribution is defined as follows:
        \begin{equation}
            \forall \theta \in [-\pi, \pi], {\displaystyle f_{\text{vM}}(\theta \mid \kappa) = \frac{{\rm {e}}^{\kappa\cos(\theta)}}{2\pi {I}_{0}(\kappa)}}.
        \end{equation}
Let us define $\theta_0$ as the angle between $V$ and $A$. In this section, we prove that:
\begin{equation}
\begin{split}
\label{eq_app:small_o_vm}
        P_{\text{vMF-exp}}(A \mid n, d=2, \kappa)  &= \frac{{\rm {e}}^{\kappa\cos(\theta_0)}}{n{I}_{0}(\kappa)} + \mathcal{O}(\frac{1}{n^{2}}).
\end{split}
\end{equation}

\begin{proof}
    
By definition, 
\begin{equation}
\begin{split}
\label{eq_app:def_p_vm}
        P_{\text{vMF-exp}}(A \mid n, d=2, \kappa)  &= \E_{\mathcal{X}_n \sim  \mathcal{U}(\mathcal{S}^{1})}\left[{\mathbb{P}(\VS \in \VorXone{A}}\right)],
\end{split}
\end{equation}

where $\VorX{X_{i}} = \{\VS \in \mathcal{S}^{d-1}, \forall j \in \mathcal{I}_n, \dotproduct{\VS}{X_i} \geq \dotproduct{\VS}{X_j}\}$. Let us call $\mathcal{Y}_n = \{Y_{i}\}$ the result of the permutation of the indices of $\mathcal{X}_n$ such that the (signed) angles $\beta_{i}$ between $A$ and $Y_{i}$ are sorted in increasing order. Since the $\{X_{i}\}$ are i.i.d. and uniformly distributed on the circle, then the angles between $A$ and the $\{X_{i}\}$ are i.i.d. and uniformly distributed on $[0,2\pi]$. Therefore, the set $\{\beta_{i}\}$ is the set of the order statistics of $n$ i.i.d. random variables uniformly distributed on $[0,2\pi]$.
Consequently, the set $\{\frac{\beta_{i}}{2\pi}\}$ is the set of the order statistics of $n$ i.i.d. random variables uniformly distributed on $[0,1]$, which are known to follow Beta distributions~\citep{gentle2009computational} defined as follows:
\begin{equation}
   \forall 1 \leq i \leq n, {\displaystyle \frac{\beta_{i}}{2\pi}\sim \operatorname {Beta} (i,n+1\mathbf {-} i)}. 
\end{equation}
As a consequence, we have:
\begin{align}
    \E[\beta_{1}] &= \frac{2\pi}{n+1},\\
    \E[\beta_{n}] &= \frac{2\pi n}{n+1},\\
    \Var[\beta_{1}] &= \Var[\beta_{n}] = \frac{4\pi^{2}n}{(n+1)^{2}(n+2)}.
\end{align}

\begin{figure}[t]
\begin{center}
\includegraphics[width=0.42\linewidth]{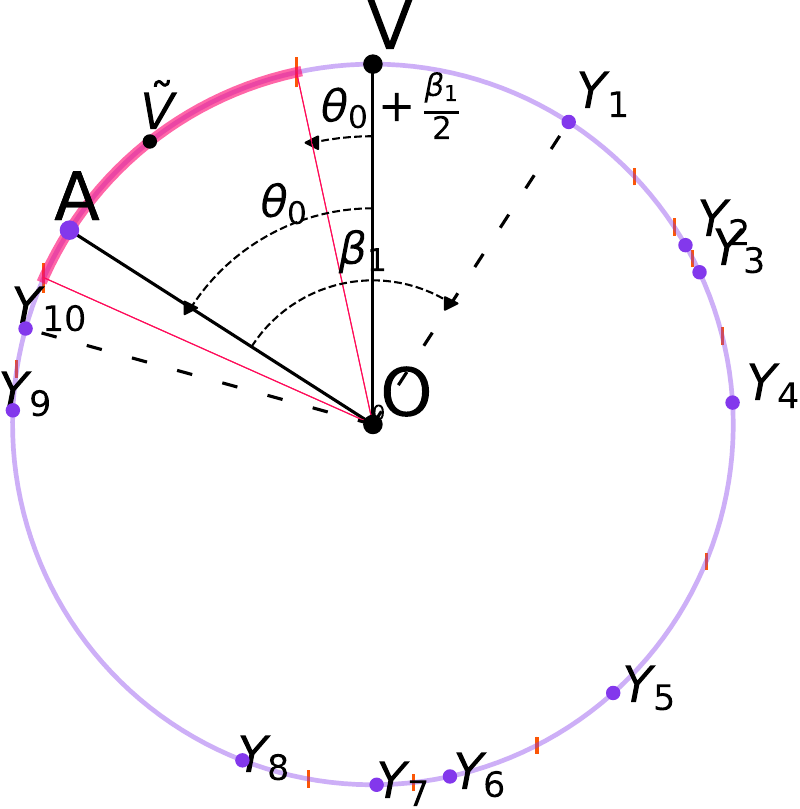}
\caption{For $d=2$: vMF-exp explores the action $A$ when $\VS$ lies in its Voronoï cell, shown in red.}
\label{fig:2d_voronoi-BIG}
\end{center}
\end{figure}

Moreover, for given values of ${Y_{i}}$, we can see from Figure~\ref{fig:2d_voronoi-BIG} that, in 2 dimensions, Voronoï cells are arcs of the circle and are delimited by perpendicular bisectors of two neighboring points. Specifically, the Voronoï cell of $A$ is delimited by the perpendicular bisector of A and $Y_{1}$ on one side, and the perpendicular bisector of $A$ and $Y_{n}$ on the other side. By denoting $\theta$ the (signed) angle between $V$ and $\VS$, we have:
\begin{equation}
\begin{split}  
    {\mathbb{P}\Big(\VS \in \VorXone{A}}\Big) &= \mathbb{P}\Big(\theta \in [\theta_{0} + \frac{\beta_{n}-2\pi}{2}, \theta_{0} + \frac{\beta_{1}}{2}] \mid  \theta \sim \text{vM}(0, \kappa),\beta_{1},\beta_{n}\Big)\\
    &=\int_{\theta=\theta_{0} + \frac{\beta_{n}-2\pi}{2}}^{\theta_{0} + \frac{\beta_{1}}{2}}f_{\text{vM}}(\theta \mid \kappa)\diff\theta.
\end{split}
\end{equation}
Therefore:
 \begin{equation}
 \begin{split}
 \label{eq_app:VM_integral}
     P_{\text{vMF-exp}}(A \mid n, d=2, \kappa) &= \E_{\beta_{1},\beta_{n}}\left[\int_{\theta=\theta_{0} + \frac{\beta_{n}-2\pi}{2}}^{\theta_{0} + \frac{\beta_{1}}{2}}f_{\text{vM}}(\theta \mid \kappa)\diff\theta\right].
 \end{split}
 \end{equation}

To get an asymptotic expression of the probability that $\theta$ lies between the considered bounds, we can first notice that as $n$ grows, $\beta_{1}$ will approach 0 and $\beta_{n}$ will approach $2\pi$. This means that the integral we need to compute will have very narrow bounds centered on $\theta_{0}$, and so we can leverage the Taylor series expansion~\citep{abramowitz1948handbook} of $f_{\text{vM}}$ around $\theta_{0}$ and obtain:
\begin{equation}
\begin{split}
    f_{\text{vM}}(\theta \mid \kappa) &= f_{\text{vM}}(\theta_{0} \mid \kappa) + R_{0}(\theta),
\end{split}
\end{equation}
where $R_{0}(\theta) = \sum _{i=1}^{\infty }{\frac {f_{\text{vM}}^{(i)}(\theta_{0} \mid \kappa)}{i!}}(\theta - \theta_{0})^{i}$ is the zero order remainder term of the Taylor series expansion of $f_{\text{vM}}$ near $\theta_0$.

We can now estimate the portion of the integral of Equation~\eqref{eq_app:VM_integral} corresponding to each term of the expansion separately, and show that when $n$ becomes large:
\begin{itemize}
    \item the zero-order term gives a probability of selecting $A$ that is the same as the asymptotic behavior of B-exp: $\E_{\beta_{1},\beta_{n}}\left[\int_{\theta=\theta_{0} + \frac{\beta_{n}-2\pi}{2}}^{\theta_{0} + \frac{\beta_{1}}{2}}f_{\text{VM}}(\theta_{0} \mid \kappa)\diff\theta\right] = \frac{{\rm {e}}^{\kappa\cos(\theta_0)}}{n{I}_{0}(\kappa)} + \mathcal{O}(\frac{1}{n^{2}})$.
    \item the expectation of the remainder term is bounded by a $\frac{1}{n^{2}}$ term:
$\E_{\beta_{1},\beta_{n}}\left[\int_{\theta=\theta_{0} + \frac{\beta_{n}-2\pi}{2}}^{\theta_{0} + \frac{\beta_{1}}{2}} R_{0}(\theta)\diff\theta\right] = \mathcal{O}(\frac{1}{n^{2}})$.
\end{itemize}

\subsection{Zero-Order Estimate}
Let us study the zero-order approximation of $f_{\text{vM}}(\theta \mid \kappa)$ near $\theta_0$:
 \begin{equation}
 \begin{split}
\E_{\beta_{1},\beta_{n}}\left[\int_{\theta=\theta_{0} + \frac{\beta_{n}-2\pi}{2}}^{\theta_{0} + \frac{\beta_{1}}{2}}f_{\text{VM}}(\theta_{0} \mid \kappa)\diff\theta\right]
     &=\E_{\beta_{1},\beta_{n}}\left[f_{\text{VM}}(\theta_{0} \mid \kappa)(\theta_{0} + \frac{\beta_{1}}{2} - (\theta_{0} + \frac{\beta_{n}-2\pi}{2}))\right]\\
     &=\E_{\beta_{1},\beta_{n}}\left[f_{\text{vM}}(\theta_{0} \mid \kappa)(\pi - \frac{\beta_{n} - \beta_{1}}{2})\right]\\
     &=\pi f_{\text{vM}}(\theta_{0} \mid \kappa)\E_{\beta_{1},\beta_{n}}\left[ 1- \frac{\beta_{n} - \beta_{1}}{2\pi}\right]\\
     &=\frac{{\rm {e}}^{\kappa\cos(\theta)}}{2{I}_{0}(\kappa)}( 1- \E_{\beta_{1},\beta_{n}}\left[\frac{\beta_{n}}{2\pi}\right] + \E_{\beta_{1},\beta_{n}}\left[\frac{\beta_{1}}{2\pi}\right])\\
    &=\frac{{\rm {e}}^{\kappa\cos(\theta_0)}}{2{I}_{0}(\kappa)} \frac{n+1 -n +1}{n+1}\\
    &=\frac{{\rm {e}}^{\kappa\cos(\theta_0)}}{2{I}_{0}(\kappa)} \frac{2}{n+1}\\
    &=\frac{{\rm {e}}^{\kappa\cos(\theta_0)}}{(n+1){I}_{0}(\kappa)}\\
    &=\frac{{\rm {e}}^{\kappa\cos(\theta_0)}}{n{I}_{0}(\kappa)} - \frac{{\rm {e}}^{\kappa\cos(\theta_0)}}{n(n+1){I}_{0}(\kappa)}\\
    &= \frac{{\rm {e}}^{\kappa\cos(\theta_0)}}{n{I}_{0}(\kappa)} + \mathcal{O}(\frac{1}{n^{2}}).
 \end{split}
 \end{equation}
This proves that, asymptotically, the contribution of the zero-order term of $f_{\text{vM}}$ to the probability of selecting $A$ is equal to the probability of selecting $A$ using B-exp with the same $\kappa$ value.

To understand how fast vMF-exp reaches its asymptotic behavior, we now need to study $R_{0}(\theta)$, the remainder of the Taylor series expansion of $f_{\text{VM}}$ around $\theta_{0}$.

\subsection{Bounding of the Remainder Term}
We start by computing the first derivative of $f_{\text{vM}}$:
\begin{equation}
    \forall \theta \in [0, 2\pi], \rvert f_{\text{vM}}'(\theta \mid \kappa)\lvert = \frac{\rvert\sin(\theta) \lvert \kappa e^{\kappa\cos(\theta)}}{{I}_{0}(\kappa)},
\end{equation}

 which is bounded\footnote{We note that a tighter bound could be found by studying the second derivative, but will not be necessary for the purpose of this proof.} on $[0, 2\pi]$ by $ M = \frac{\kappa e^{\kappa}}{{I}_{0}(\kappa)}$. According to the Taylor-Lagrange inequality~\citep{abramowitz1948handbook}, this in turn bounds the remainder term as follows:
  \begin{equation}
\forall \theta \in [0, 2\pi], \lvert R_{0}(\theta) \rvert \leq M\lvert \theta - \theta_{0} \rvert.
 \end{equation}

 In particular, this inequality holds for every $\theta \in [\theta_{0} + \frac{\beta_{n}-2\pi}{2}, \theta_{0} + \frac{\beta_{1}}{2}]$, and so:
\begin{equation}
\begin{split}
     \int_{\theta=\theta_{0} + \frac{\beta_{n}-2\pi}{2}}^{\theta_{0} + \frac{\beta_{1}}{2}}\lvert R_{0}(\theta) \rvert\diff\theta &\leq \int_{\theta=\theta_{0} + \frac{\beta_{n}-2\pi}{2}}^{\theta_{0} + \frac{\beta_{1}}{2}}M\lvert \theta - \theta_{0} \rvert\diff\theta\\
     &= \int_{\theta=\theta_{0}}^{\theta_{0} + \frac{\beta_{1}}{2}}M(\theta - \theta_{0} )\diff\theta + \int_{\theta=\theta_{0} + \frac{\beta_{n}-2\pi}{2}}^{\theta_{0} }M(\theta_{0} - \theta)\diff\theta\\
     &= \int_{\theta=0}^{\frac{\beta_{1}}{2}}M\theta\diff\theta - \int_{\theta=\frac{\beta_{n}-2\pi}{2}}^{0}M\theta\diff\theta\\
     &=M\frac{\beta_{1}^{2}+ (\beta_{n}-2\pi)^{2}}{8}.
\end{split}
\end{equation}
The above inequality holds when considering the expected values over uniformly distributed $X_{i}$:
 \begin{equation}
     \begin{split}
\E_{\beta_{1},\beta_{n}}&\left[\int_{\theta=\theta_{0} + \frac{\beta_{n}-2\pi}{2}}^{\theta_{0} + \frac{\beta_{1}}{2}}\lvert R_{0}(\theta) \rvert\diff\theta\right] \leq M\frac{\E_{\beta_{1},\beta_{n}}\left[\beta_{1}^{2}\right]+ \E_{\beta_{1},\beta_{n}}\left[(\beta_{n}-2\pi)^{2}\right]}{8}\\
&=  M\frac{\Var_{\beta_{1},\beta_{n}}\left[\beta_{1}\right] + (\E_{\beta_{1},\beta_{n}}\left[\beta_{1}\right])^{2}+ \Var_{\beta_{1},\beta_{n}}\left[(\beta_{n}-2\pi)\right] + (\E_{\beta_{1},\beta_{n}}\left[\beta_{n}-2\pi\right])^{2}}{8}\\
&= \frac{M}{8}(\frac{2\times 4 \pi^{2} n}{(n+1)^{2}(n+2)} + \frac{2 \times 4 \pi^{2}}{(n+1)^{2}})\\
&= \frac{M\pi^{2}}{(n+1)(n+2)}\\
&=\mathcal{O}(\frac{1}{n^{2}}).
     \end{split}
 \end{equation}
 Since $\left\lvert\E_{\beta_{1},\beta_{n}}\left[\int_{\theta=\theta_{0} + \frac{\beta_{n}-2\pi}{2}}^{\theta_{0} + \frac{\beta_{1}}{2}}R_{0}(\theta) \diff\theta\right]\right\rvert \leq  \E_{\beta_{1},\beta_{n}}\left[\int_{\theta=\theta_{0} + \frac{\beta_{n}-2\pi}{2}}^{\theta_{0} + \frac{\beta_{1}}{2}}\lvert R_{0}(\theta) \rvert\diff\theta\right]$, we have shown:
 \begin{equation}
      \E_{\beta_{1},\beta_{n}}\left[\int_{\theta=\theta_{0} + \frac{\beta_{n}-2\pi}{2}}^{\theta_{0} + \frac{\beta_{1}}{2}} R_{0}(\theta) \diff\theta\right] =\mathcal{O}(\frac{1}{n^{2}}).
 \end{equation}

 In summary, when combining the asymptotic behavior of the zero-order term and the remainder term, we conclude that when $d=2$ we have:
  \begin{equation}
     \begin{split}
        P_{\text{vMF-exp}}(A \mid n, d=2, \kappa)  &= \frac{{\rm {e}}^{\kappa\cos(\theta_0)}}{n{I}_{0}(\kappa)} + \mathcal{O}(\frac{1}{n^{2}}).
     \end{split}
 \end{equation}
This proves Proposition~\ref{prop:vMF-exp} when $d=2$. Note that, comparing the asymptotic expressions for $P_{\text{B-exp}}(A \mid n, d=2, \kappa)$ and $P_{\text{vMF-exp}}(A \mid n, d=2, \kappa)$, also gives us a proof for Proposition~\ref{prop:limit} when $d=2$.
 \end{proof}

\clearpage

\section{Asymptotic Behavior of vMF Exploration in $d>2$ dimensions (Proofs of Proposition \ref{prop:vMF-exp}, Part 2, and of Proposition \ref{prop:vMF-exp_term_one})}
\label{appendixC}
We now prove Proposition~\ref{prop:vMF-exp} when $d > 2$, starting with a series of intermediary lemmas. We subsequently justify the approximate expression of Proposition~\ref{prop:vMF-exp_term_one}.
\subsection{Intermediary Lemmas}
We introduce a series of lemmas regarding the properties of the Voronoï cell of $A$ when  $\mathcal{X}_n \sim \mathcal{U}^{d-1}$. We recall that, for a given set of embedding vectors $\mathcal{X}_n$, we use the notation $\mathcal{X}_{n+1} = \mathcal{X}_n \cup \{A\}$.

\begin{lemma}
\label{lemma:cell_area}
Let $d \in \mathbb{N}, d \geq 2$, $A \in \mathcal{S}^{d-1}$ and $ n \in \mathbb{N}^{*}$. As before, let $\mathcal{A}(\mathcal{S}^{d-1})$ denote the surface area of $\mathcal{S}^{d-1}$. Then: 
    \begin{equation}
    \label{eq:cell_area_lemma}
        \E_{\mathcal{X}_n \sim  \mathcal{U}(\mathcal{S}^{d-1})}\Big[\mathcal{A}(\VorXone{A})\Big] = \frac{\mathcal{A}(\mathcal{S}^{d-1})}{n+1}.
\end{equation}
\end{lemma}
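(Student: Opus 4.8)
The plan is to avoid computing any Voronoï cell area explicitly, and instead to exploit two symmetries: the rotational invariance of $\mathcal{U}(\mathcal{S}^{d-1})$ and the exchangeability of the $n+1$ points once $A$ is also treated as random. The starting observation is that the cells of $A, X_1, \dots, X_n$ tessellate the whole sphere, so their areas sum to $\mathcal{A}(\mathcal{S}^{d-1})$ almost surely; the claimed value is then simply the ``fair share'' $\mathcal{A}(\mathcal{S}^{d-1})/(n+1)$ that $A$ receives on average.

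Concretely, the first step is to show that $g(A) := \E_{\mathcal{X}_n \sim \mathcal{U}(\mathcal{S}^{d-1})}[\mathcal{A}(\VorXone{A})]$ does not depend on $A$. For any rotation $R \in O(d)$, applying $R$ to every point maps the Voronoï tessellation of $\{A, X_1, \dots, X_n\}$ to that of $\{RA, RX_1, \dots, RX_n\}$ and sends the cell of $A$ to the cell of $RA$; since rotations preserve surface area and $RX_i \overset{d}{=} X_i$, this gives $g(RA) = g(A)$. As $O(d)$ acts transitively on $\mathcal{S}^{d-1}$, the function $g$ is constant, and hence $g(A) = \E_{A \sim \mathcal{U}(\mathcal{S}^{d-1})}[g(A)]$, where on the right $A$ is drawn uniformly and independently of the $X_i$.

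The second step reinterprets this right-hand side: the collection $(A, X_1, \dots, X_n)$ now consists of $n+1$ i.i.d.\ uniform points, so by exchangeability the expected area of each point's cell is a common value $c$. Integrating the almost-sure identity $\sum_{\text{points}} \mathcal{A}(\text{cell}) = \mathcal{A}(\mathcal{S}^{d-1})$ and using linearity of expectation over this finite sum yields $(n+1)c = \mathcal{A}(\mathcal{S}^{d-1})$, so $c = \mathcal{A}(\mathcal{S}^{d-1})/(n+1)$, which equals $g(A)$ by the first step. This is exactly \eqref{eq:cell_area_lemma}.

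The one point requiring care — and the main (minor) obstacle — is the justification that the cells genuinely partition the sphere up to a set of zero surface measure, so that the area identity $\sum_{\text{points}} \mathcal{A}(\text{cell}) = \mathcal{A}(\mathcal{S}^{d-1})$ holds almost surely. The cell boundaries are finite unions of loci where two dot products tie, which are pieces of $(d-2)$-dimensional great subspheres and hence have zero $(d-1)$-surface measure; moreover, the $n+1$ points are almost surely distinct and in general position, so no cell degenerates. Once this null-set argument is in place, every expectation above is finite and the interchange of expectation with the finite sum is immediate, so no further analytic estimates are needed.
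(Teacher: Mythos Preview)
Your proof is correct and follows essentially the same approach as the paper: both arguments use rotational invariance of $\mathcal{U}(\mathcal{S}^{d-1})$ to show the expected cell area is independent of $A$, then treat $A$ as uniform to obtain $n+1$ exchangeable points whose cells partition the sphere, giving $\mathcal{A}(\mathcal{S}^{d-1})/(n+1)$. Your version is in fact slightly more careful in justifying the measure-zero boundary issue, which the paper leaves implicit.
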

\begin{proof}
    To compute this expectation, one can notice that:
\begin{equation}
\label{eq:cell_area}
    \E_{\mathcal{X}_n \sim  \mathcal{U}(\mathcal{S}^{d-1})}\Big[\mathcal{A}(\VorXone{A})\Big] = \E_{\mathcal{X}_{n+1} \sim  \mathcal{U}(\mathcal{S}^{d-1})}\Big[\mathcal{A}(\VorXone{X_{n+1}}) \mid X_{n+1}=A\Big]. 
\end{equation}
Indeed, considering that $A$ is known is equivalent to considering $A$ as a random vector $X_{n+1} \sim \mathcal{U}(\mathcal{S}^{d-1})$ with the constraint $X_{n+1} = A$. We will now show that the right part of Equation~\eqref{eq:cell_area} is actually independent of the value of $A$.

Consider any point $A' \in \mathcal{S}^{d-1}$. One can always define a (not necessarily unique) rotation $R_{A,A'}$ such that $R_{A,A'}(A) = A'$. Since rotations preserve inner products, they also preserve areas of Voronoi cells, which means that for a given set of vectors $\mathcal{X}_{n+1}$, we have:
\begin{equation}
   \mathcal{A}\Big(\Vorb{X_{n+1}}{\mathcal{X}_{n+1}}\Big) = \mathcal{A}\Big(\Vorb{R_{A,A'}(X_{n+1})}{R_{A,A'}(\mathcal{X}_{n+1}})\Big).
\end{equation}
Moreover, the image of the rotation of a random vector uniformly distributed on the hypersphere is also uniformly distributed, which means that:
\begin{equation}
\mathcal{X}_{n+1} \sim \mathcal{U}(\mathcal{S}^{d-1}) \Leftrightarrow R_{A,A'}(\mathcal{X}_{n+1}) \sim \mathcal{U}(\mathcal{S}^{d-1}).
\end{equation}
Therefore:
\begin{equation}
    \begin{split}
           &\E_{\mathcal{X}_{n+1} \sim  \mathcal{U}(\mathcal{S}^{d-1})}\Big[\mathcal{A}(\Vorb{X_{n+1}}{\mathcal{X}_{n+1}})\mid X_{n+1}=A\Big]\\
           =& \E_{\mathcal{X}_{n+1} \sim  \mathcal{U}(\mathcal{S}^{d-1})}\Big[\mathcal{A}(\Vorb{R_{A,A'}(X_{n+1})}{R_{A,A'}(\mathcal{X}_{n+1}}))\mid X_{n+1}=A\Big]\\
           =&\E_{R_{A,A'}(\mathcal{X}_{n+1}) \sim  \mathcal{U}(\mathcal{S}^{d-1})}\Big[\mathcal{A}(\Vorb{R_{A,A'}(X_{n+1})}{R_{A,A'}(\mathcal{X}_{n+1}}))\mid R_{A,A'}(X_{n+1})=A'\Big]\\
           =& \E_{R_{A,A'}(\mathcal{X}_n) \sim  \mathcal{U}(\mathcal{S}^{d-1})}\Big[\mathcal{A}(\Vorb{A'}{R_{A,A'}(\mathcal{X}_n)})\Big]\\
            =& \E_{\mathcal{X}_n \sim  \mathcal{U}(\mathcal{S}^{d-1})}\Big[\mathcal{A}(\Vorb{A'}{\mathcal{X}_n})\Big].
    \end{split}
\end{equation}
This result proves that $\E_{\mathcal{X}_n \sim  \mathcal{U}(\mathcal{S}^{d-1})}[\mathcal{A}(\VorXone{A})]$ is independent of $A$. Then, we use this information along with Equation~\eqref{eq:cell_area} to obtain:
\begin{equation}
    \begin{split}
        \label{eq:area_indep}
    \E_{\mathcal{X}_n \sim  \mathcal{U}(\mathcal{S}^{d-1})}\Big[\mathcal{A}(\VorXone{A})\Big]    &= \E_{\mathcal{X}_{n+1} \sim  \mathcal{U}(\mathcal{S}^{d-1})}\Big[\mathcal{A}(\Vorb{X_{n+1}}{\mathcal{X}_{n+1}}\Big].
\end{split}
\end{equation}
Since $\sum\limits_{i=1}^{n+1}\mathcal{A}(\Vorb{X_{i}}{\mathcal{X}_{n+1}}) = \mathcal{A}(\mathcal{S}^{d-1})$~\citep{v1,v2} and the $X_i$ are i.i.d., we derive:
\begin{equation}
    \label{eq:sum_sphere}
    \E_{\mathcal{X}_{n+1} \sim  \mathcal{U}(\mathcal{S}^{d-1})}\Big[\mathcal{A}(\VorXone{X_{n+1}})\Big]=\frac{\mathcal{A}(\mathcal{S}^{d-1})}{n+1}.
\end{equation}
Combining Equations~\eqref{eq:cell_area} with Equation~\eqref{eq:sum_sphere} leads to Equation~\eqref{eq:cell_area_lemma}, concluding the proof.
\end{proof}
\begin{lemma}
    \label{lemma:cell_normal_vector}
Let $d \in \mathbb{N}, d \geq 2$, $A \in \mathcal{S}^{d-1}$ and $ n \in \mathbb{N}^{*}$. Then:
    \begin{equation}
    \label{eq:cell_normal_vector-1}
    \exists \lambda \in \mathbb{R}, 
        \E_{\mathcal{X}_n \sim  \mathcal{U}(\mathcal{S}^{d-1})}\Big[\int_{\VS \in \VorXone{A}}\VS\diff{\VS}\Big] = \lambda A.
\end{equation}
\end{lemma}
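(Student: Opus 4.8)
The plan is to exploit the rotational symmetry of both the uniform law on $\mathcal{S}^{d-1}$ and the Voronoï construction, in the same spirit as the proof of Lemma~\ref{lemma:cell_area}, but now to pin down the \emph{direction} of the expected integral rather than the cell area. Writing $W = \E_{\mathcal{X}_n \sim \mathcal{U}(\mathcal{S}^{d-1})}\big[\int_{\VS \in \VorXone{A}} \VS \diff{\VS}\big] \in \mathbb{R}^{d}$, the goal is to show that $W$ is invariant under every orthogonal transformation that fixes $A$, and to deduce that $W$ must be a scalar multiple of $A$.

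First I would fix an arbitrary orthogonal map $T \in O(d)$ with $T(A) = A$ and push it inside the expectation and the integral by linearity, obtaining $T W = \E[\int_{\VS \in \VorXone{A}} T\VS \diff{\VS}]$. I would then substitute $\tilde{U} = T\VS$: since $T$ is orthogonal it preserves the surface measure, so $\diff{\VS} = \diff{\tilde{U}}$, and since it preserves inner products it carries the cell $\VorXone{A}$ onto $\Vorb{TA}{T\mathcal{X}_{n+1}} = \Vorb{A}{T\mathcal{X}_{n+1}}$, using $TA = A$. This cell-transport identity is exactly the one already established in the proof of Lemma~\ref{lemma:cell_area}.

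Next I would use that $T$ preserves the uniform distribution, so $T\mathcal{X}_n$ has the same law as $\mathcal{X}_n$, while $T\mathcal{X}_{n+1} = T\mathcal{X}_n \cup \{A\}$. Relabelling the dummy sample back to $\mathcal{X}_n$ and the integration variable back to $\VS$ then yields $T W = W$ for every such $T$. To conclude, I would choose the particular orthogonal map $T_0$ that fixes $A$ and acts as $-\mathrm{id}$ on the orthogonal complement $A^{\perp}$; decomposing $W = \lambda A + W_{\perp}$ with $W_{\perp} \in A^{\perp}$, the invariance $T_0 W = W$ forces $W_{\perp} = -W_{\perp}$, hence $W_{\perp} = 0$ and $W = \lambda A$ with $\lambda = \dotproduct{W}{A}$. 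Since $A^{\perp}$ is nontrivial for every $d \geq 2$, this construction covers all admissible dimensions (in particular the need for a reflection, rather than a proper rotation, is what makes the argument go through already at $d=2$).

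The hard part will not be the final symmetry reduction, which is short, but rather making the change of variables on the sphere fully rigorous: one must verify that an orthogonal map fixing $A$ sends the random Voronoï cell of $A$ exactly onto the Voronoï cell of $A$ for the transformed configuration, together with measure-preservation, and that the integrand $\VS$, being bounded on $\mathcal{S}^{d-1}$ over a cell of finite area, legitimizes interchanging $T$, $\E$, and $\int$. Fortunately these points reuse the machinery already set up for Lemma~\ref{lemma:cell_area}, so the genuinely new ingredient is only the observation that the common fixed subspace of the stabiliser of $A$ in $O(d)$ is the line $\mathbb{R}A$.
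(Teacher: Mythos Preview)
Your proposal is correct and follows the same symmetry strategy as the paper: show that the expected integral is fixed by every orthogonal transformation stabilising $A$, and conclude it lies on the line $\mathbb{R}A$. The paper phrases this as invariance under ``rotations $R_{A,\theta}$ around $A$'' and stops there; your version works with the full stabiliser in $O(d)$ and singles out the reflection $T_0 = \mathrm{id}_{\mathbb{R}A}\oplus(-\mathrm{id}_{A^\perp})$ to kill the orthogonal component explicitly. This is a minor but genuine improvement: for $d\geq 3$ the rotation argument already suffices because $SO(d-1)$ has no nonzero fixed vector on $A^\perp$, but for $d=2$ the rotation stabiliser is trivial and one really does need a reflection, exactly as you point out. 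Otherwise the change-of-variables and distribution-invariance steps you outline coincide with the paper's.
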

\begin{proof}
    We want to prove that the average normal vector of the Voronoï cell of $A$ and $A$ are collinear, as illustrated in Figure~\ref{figure:voronoi_cell}. To do so, we will show that this average normal vector is invariant to any rotation around $A$. For every $\theta \in [0,2\pi]$, we define $R_{A,\theta}$ as the rotation around $A$ of the angle $\theta$. As discussed in the proof of Lemma~\ref{lemma:cell_area}, $\mathcal{X}_{n} \sim \mathcal{U}(\mathcal{S}^{d-1}) \Leftrightarrow R_{A,\theta}(\mathcal{X}_{n}) \sim \mathcal{U}(\mathcal{S}^{d-1})$. Moreover, $R_{A,\theta}(A) = A$. Let us denote:
    \begin{equation}
     N(A\mid n) = \E_{\mathcal{X}_n \sim  \mathcal{U}(\mathcal{S}^{d-1})}\Big[\int_{\VS \in \VorXone{A}}\VS\diff{\VS}\Big],
    \end{equation}
    the expected normal vector of the Voronoï cell of $A$. Its image by the rotation $R_{A,\theta}$ verifies:
    \begin{equation}
    \begin{split}
    R_{A,\theta}(N(A\mid n)) &= R_{A,\theta}\Big(\E_{\mathcal{X}_n \sim  \mathcal{U}(\mathcal{S}^{d-1})}\Big[\int_{\VS \in \VorXone{A}}\VS\diff{\VS}\Big]\Big)\\
    &=\E_{\mathcal{X}_n \sim  \mathcal{U}(\mathcal{S}^{d-1})}\Big[\int_{\VS \in \Vorb{R_{A,\theta}(A)}{R_{A,\theta}(\mathcal{X}_{n+1})}}\VS\diff{\VS}\Big]\\
    &=\E_{R_{A,\theta}(\mathcal{X}_n) \sim  \mathcal{U}(\mathcal{S}^{d-1})}\Big[\int_{\VS \in \Vorb{A}{R_{A,\theta}(\mathcal{X}_{n+1})}}\VS\diff{\VS}\Big]\\
    &=N(A\mid n).
\end{split}
\end{equation}
This proves that $N(A\mid n)$ and $A$ are collinear.
\end{proof}

\begin{figure*}[t!]
\begin{center}
\includegraphics[width=0.42\linewidth]{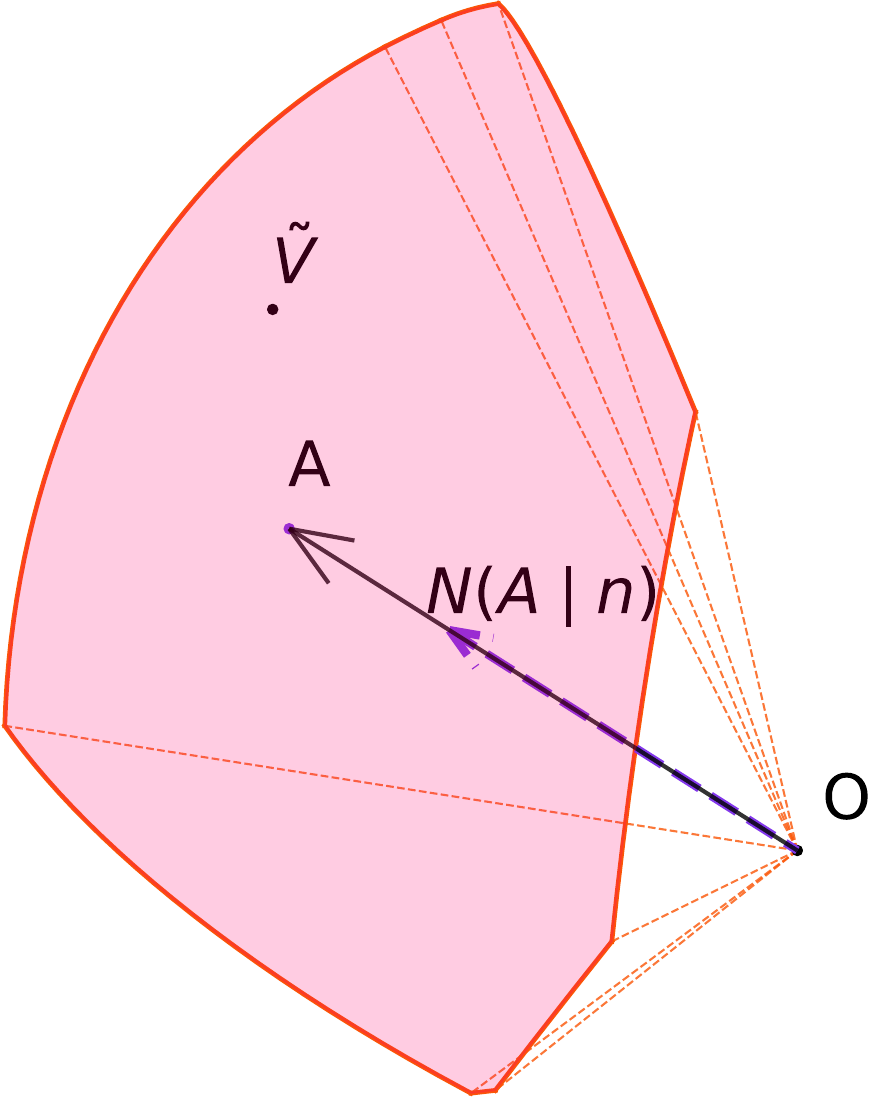}
\caption{The Voronoï cell of $A$, $\VorXone{A}$, along with the average normal vector of the cell $N(A\mid n)$. On expectation, $(A\mid n)$ and $A$ are collinear.}
\label{figure:voronoi_cell}
\end{center}
\end{figure*}

\begin{lemma}
\label{lemma:lambda_value}
    With the same hypotheses as Lemma \ref{lemma:cell_normal_vector}:
    \begin{equation}
        \lambda = \frac{\mathcal{A}(\mathcal{S}^{d-1})}{n+1}\E_{\mathcal{X}_n \sim  \mathcal{U}(\mathcal{S}^{d-1}), \VS \sim \mathcal{U}(\mathcal{S}^{d-1})}\Big[\max_{i}\dotproduct{\VS}{X_i}\Big].
    \end{equation}
\end{lemma}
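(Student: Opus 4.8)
The plan is to recover the scalar $\lambda$ of Lemma~\ref{lemma:cell_normal_vector} by projecting the vector identity onto the unit vector $A$ itself. Since $\|A\|_2 = 1$, taking the inner product of both sides of Equation~\eqref{eq:cell_normal_vector-1} with $A$ and moving the constant vector $A$ inside the integral and the expectation gives
\begin{equation}
\lambda = \dotproduct{A}{\lambda A} = \E_{\mathcal{X}_n \sim \mathcal{U}(\mathcal{S}^{d-1})}\Big[\int_{\VS \in \VorXone{A}}\dotproduct{A}{\VS}\diff\VS\Big].
\end{equation}
The crucial observation is that, by the very definition of the Voronoï cell, whenever $\VS \in \VorXone{A}$ the generator $A$ attains the largest similarity with $\VS$ among all $n+1$ generators of $\mathcal{X}_{n+1}$; hence $\dotproduct{A}{\VS} = \max_{X \in \mathcal{X}_{n+1}} \dotproduct{X}{\VS}$ almost everywhere on the cell. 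This turns $\lambda$ into the expected integral of the nearest-neighbor similarity over $A$'s cell, and it is this scalar integrand --- rather than the vector $\VS$ --- that I will push through the same symmetry reductions used for Lemma~\ref{lemma:cell_area}.

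First I would show this single-cell expectation is independent of $A$. As in the proof of Lemma~\ref{lemma:cell_area}, writing $A$ as a conditioned $(n+1)$-th uniform sample and applying a rotation $R_{A,A'}$ maps $\VorXone{A}$ onto the cell of $A'$ for the rotated configuration; because a simultaneous rotation of $\VS$ and all generators leaves $\max_{X}\dotproduct{X}{\VS}$ unchanged and preserves the uniform law of $\mathcal{X}_{n+1}$, the expected value is the same for every $A$. I may therefore average over $A \sim \mathcal{U}(\mathcal{S}^{d-1})$ and treat $(A, X_1, \ldots, X_n)$ as $n+1$ i.i.d. uniform generators. Exchangeability of these points then equates the expected integral over any one cell, so summing over all $n+1$ cells and using that the cells tile $\mathcal{S}^{d-1}$ (with boundaries of measure zero) collapses the sum to a single integral over the whole sphere:
\begin{equation}
\lambda = \frac{1}{n+1}\E_{\mathcal{X}_{n+1} \sim \mathcal{U}(\mathcal{S}^{d-1})}\Big[\int_{\VS \in \mathcal{S}^{d-1}} \max_{X \in \mathcal{X}_{n+1}} \dotproduct{X}{\VS}\diff\VS\Big].
\end{equation}
Finally, rewriting the surface integral as $\mathcal{A}(\mathcal{S}^{d-1})$ times the expectation of the integrand under a uniform $\VS$ independent of the generators yields the claimed identity, with the maximum understood to range over the full configuration $\mathcal{X}_{n+1}$.

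The routine parts are the projection onto $A$ and the surface-integral-to-expectation conversion. The main obstacle is the symmetry bookkeeping: one must verify carefully that the scalar integrand $\max_X \dotproduct{X}{\VS}$ inherits both the rotation invariance (yielding $A$-independence) and the exchangeability needed to pass from one cell to the full tessellation --- essentially re-running the arguments of Lemmas~\ref{lemma:cell_area} and~\ref{lemma:cell_normal_vector} with this new integrand, while checking that the cell boundaries, being null sets, never spoil the identification of $\dotproduct{A}{\VS}$ with the global maximum on $A$'s cell.
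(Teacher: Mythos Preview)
Your proposal is correct and follows essentially the same route as the paper: project the vector identity of Lemma~\ref{lemma:cell_normal_vector} onto $A$, replace $\dotproduct{A}{\VS}$ by $\max_{X\in\mathcal{X}_{n+1}}\dotproduct{X}{\VS}$ on $A$'s Voronoï cell, remove the $A$-dependence via rotation invariance exactly as in Lemma~\ref{lemma:cell_area}, then use exchangeability of the $n+1$ i.i.d.\ generators and the tiling property to pass from one cell to the whole sphere, and finally convert the surface integral to $\mathcal{A}(\mathcal{S}^{d-1})$ times a uniform expectation. Your closing remark that the maximum ranges over $\mathcal{X}_{n+1}$ also matches what the paper's own derivation actually produces.
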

\begin{proof}
$\lambda$ is defined as follows:
    \begin{equation}
    \begin{split}
    \label{eq:cell_normal_vector}
        \lambda A &= \E_{\mathcal{X}_n \sim  \mathcal{U}(\mathcal{S}^{d-1})}\Big[\int_{\VS \in \VorXone{A}}\VS\diff{\VS}\Big]\\
        \implies \dotproduct{\lambda A}{A} &= \dotproduct{\E_{\mathcal{X}_n \sim  \mathcal{U}(\mathcal{S}^{d-1})}\Big[\int_{\VS \in \VorXone{A}}\VS\diff{\VS}\Big]}{A}\\
        \Leftrightarrow \quad \quad \quad \quad \lambda &= \E_{\mathcal{X}_n \sim  \mathcal{U}(\mathcal{S}^{d-1})}\Big[\int_{\VS \in \VorXone{A}}\dotproduct{\VS}{A}\diff{\VS}\Big]\\
        \Leftrightarrow \quad \quad \quad \quad \lambda &= \E_{\mathcal{X}_{n+1} \sim  \mathcal{U}(\mathcal{S}^{d-1})}\Big[\int_{\VS \in \VorXone{X_{n+1}}}\dotproduct{\VS}{X_{n+1}}\diff{\VS} \mid X_{n+1}=A\Big]\\
        \Leftrightarrow \quad \quad \quad \quad \lambda &= \E_{\mathcal{X}_{n+1} \sim  \mathcal{U}(\mathcal{S}^{d-1})}\Big[\int_{\VS \in \VorXone{X_{n+1}}}\max_{i}\dotproduct{\VS}{X_{i}}\diff{\VS} \mid X_{n+1}=A\Big].
    \end{split}
\end{equation}
Moreover, as done in the proof of Lemma \ref{lemma:cell_area}, we can leverage the invariance by any rotation of the above expression to infer that the conditional expectation is actually independent of $A$:
\begin{equation}
    \lambda = \E_{\mathcal{X}_{n+1} \sim  \mathcal{U}(\mathcal{S}^{d-1})}\Big[\int_{\VS \in \VorXone{X_{n+1}}} \max_{i}\dotproduct{\VS}{X_{i}}\diff{\VS}\Big].
\end{equation}
Since, in the above equation, $X_{n+1}$ has the same distribution as every element of $\mathcal{X}_{n+1}$, a similar expression for $\lambda$ can be found using each $\mathcal{X}_{n+1}$ element. By summing them together, we obtain:
\begin{equation}
\begin{split}
        (n+1)\lambda &= \sum_{j=1}^{n+1}\E_{\mathcal{X}_{n+1} \sim  \mathcal{U}(\mathcal{S}^{d-1})}\Big[\int_{\VS \in \VorXone{X_{j}}} \max_{i}\dotproduct{\VS}{X_{i}}\diff{\VS}\Big]\\
        &=\E_{\mathcal{X}_{n+1} \sim  \mathcal{U}(\mathcal{S}^{d-1})}\Big[\sum_{j=1}^{n+1}\int_{\VS \in \VorXone{X_{j}}} \max_{i}\dotproduct{\VS}{X_{i}}\diff{\VS}\Big]\\
        &=\E_{\mathcal{X}_{n+1} \sim  \mathcal{U}(\mathcal{S}^{d-1})}\Big[\int_{\VS \in \mathcal{S}^{d-1}} \max_{i}\dotproduct{\VS}{X_{i}}\diff{\VS}\Big]\\
        &=\E_{\mathcal{X}_{n+1} \sim  \mathcal{U}(\mathcal{S}^{d-1})}\Big[\int_{\VS \in \mathcal{S}^{d-1}}\frac{\mathcal{A}(\mathcal{S}^{d-1}) \max_{i}\dotproduct{\VS}{X_{i}}}{\mathcal{A}(\mathcal{S}^{d-1})}\diff{\VS}\Big]\\
        &=\mathcal{A}(\mathcal{S}^{d-1})\E_{\mathcal{X}_{n+1} \sim  \mathcal{U}(\mathcal{S}^{d-1})}\Big[\E_{\VS \sim \mathcal{U}(\mathcal{S}^{d-1})}[ \max_{i}\dotproduct{\VS}{X_{i}}]\Big],
\end{split}
\end{equation}
which proves the lemma.
\end{proof}
The last two lemmas are useful to describe the distribution of $ \max_{i} \dotproduct{\VS}{X_{i}}$ when $\VS$ is fixed, $\mathcal{X}_{n+1} \sim  \mathcal{U}(\mathcal{S}^{d-1})$, and $n$ is large.

\begin{lemma}
\label{lemma:taylor_series}
    Let $B:(z_{1},z_{2})\mapsto\int _{0}^{1}t^{z_{1}-1}(1-t)^{z_{2}-1}\,dt$ denote the Beta function. Let $d \geq 3$, $\VS \in \mathcal{S}^{d-1}$ and $X$ be a random vector with $X \sim \mathcal{U}(\mathcal{S}^{d-1})$. Let $F_{\text{radial}}$ be the cumulative distribution function (CDF) of $\dotproduct{\VS}{X}$. The Taylor series expansion of $F_{\text{radial}}$ near~1~is:
    \begin{equation}
    \label{eq:cdf_taylor_series}
        F_{\text{radial}}(t) = 1 - \frac{2^{\frac{d-1}{2}}}{(d-1)B(\frac{1}{2}, \frac{d-1}{2})}(1-t)^{\frac{d-1}{2}} + o((1-t)^{\frac{d-1}{2}}).
    \end{equation}
\end{lemma}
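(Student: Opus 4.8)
The plan is to first identify the law of the scalar projection $\dotproduct{\VS}{X}$ for $X \sim \mathcal{U}(\mathcal{S}^{d-1})$, which is a classical ``radial'' distribution, and then to extract the behavior of its CDF as its argument approaches $1$ by a direct integration. The density of $\dotproduct{\VS}{X}$ depends only on the angle $\theta$ between $\VS$ and $X$: slicing the sphere by the hyperplanes orthogonal to $\VS$, the level set $\{X : \dotproduct{\VS}{X} = \cos\theta\}$ is a $(d-2)$-sphere of radius $\sin\theta$, whose surface element scales like $\sin^{d-2}\theta\,\diff\theta$. Substituting $t = \cos\theta$ (so that $\sin^{d-2}\theta\,\diff\theta \propto (1-t^{2})^{\frac{d-3}{2}}\diff t$) and normalizing then gives
\begin{equation}
f_{\text{radial}}(t) = \frac{(1-t^{2})^{\frac{d-3}{2}}}{B(\frac{1}{2}, \frac{d-1}{2})}, \qquad t \in [-1,1],
\end{equation}
where the normalization follows from $\int_{-1}^{1}(1-t^{2})^{\frac{d-3}{2}}\diff t = B(\frac{1}{2},\frac{d-1}{2})$.

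Next, I would write the tail $1 - F_{\text{radial}}(t) = \int_{t}^{1} f_{\text{radial}}(s)\diff s$ and factor $1-s^{2} = (1-s)(1+s)$. After the substitution $u = 1-s$, this becomes
\begin{equation}
1 - F_{\text{radial}}(t) = \frac{1}{B(\frac{1}{2},\frac{d-1}{2})}\int_{0}^{1-t} u^{\frac{d-3}{2}}(2-u)^{\frac{d-3}{2}}\diff u.
\end{equation}
Since $t \to 1$ forces the upper limit $1-t \to 0$, only small values of $u$ contribute, so I would expand $(2-u)^{\frac{d-3}{2}} = 2^{\frac{d-3}{2}}(1 + O(u))$ and integrate the leading power. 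The dominant contribution $2^{\frac{d-3}{2}}\int_0^{1-t} u^{\frac{d-3}{2}}\diff u = \frac{2^{\frac{d-1}{2}}}{d-1}(1-t)^{\frac{d-1}{2}}$ reproduces exactly the stated coefficient.

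To finish, I would bound the remainder: the $O(u)$ correction integrates to $O\big(\int_0^{1-t} u^{\frac{d-1}{2}}\diff u\big) = O\big((1-t)^{\frac{d+1}{2}}\big)$, which is $o\big((1-t)^{\frac{d-1}{2}}\big)$ because the ratio equals $(1-t) \to 0$. Collecting the leading term and this remainder yields Equation~\eqref{eq:cdf_taylor_series}. The hypothesis $d \geq 3$ is used to ensure $\frac{d-3}{2} \geq 0$, so that $f_{\text{radial}}$ stays bounded near $t=1$ and the endpoint expansion is legitimate; as a sanity check, for $d=3$ the density is uniform on $[-1,1]$ and the formula collapses to the exact identity $1-F_{\text{radial}}(t) = \tfrac{1}{2}(1-t)$.

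I expect the only genuinely delicate point to be the derivation and normalization of $f_{\text{radial}}$; once the density is in hand, the rest is an elementary Laplace-type estimate of an incomplete Beta integral near its endpoint. An alternative route, bypassing the geometric slicing, would be to recognize $1-F_{\text{radial}}$ directly as a regularized incomplete Beta function (in the variable $(1-t)/2$) and quote its known endpoint expansion; I would adopt whichever formulation keeps the constants most transparent.
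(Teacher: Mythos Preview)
Your proposal is correct and follows essentially the same approach as the paper: both start from the known radial density $f_{\text{radial}}(t)\propto(1-t^{2})^{(d-3)/2}$, factor $1-t^{2}=(1-t)(2-(1-t))$, expand the second factor near $t=1$, and integrate to obtain the leading $(1-t)^{(d-1)/2}$ term of $1-F_{\text{radial}}(t)$. The only cosmetic difference is that the paper expands the density via Newton's generalized binomial series before integrating, whereas you carry the integral and use a first-order $(2-u)^{(d-3)/2}=2^{(d-3)/2}(1+O(u))$ bound; both routes give the same constant and remainder order.
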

\begin{proof}
    The distribution of $\dotproduct{\VS}{X}$ has been studied in directional statistics \citep{mardia2009directional}. Its PDF is known to be:
\begin{equation}
    \begin{split}
        f_{\text{radial}}(t) &= \frac{(1-t^{2})^{\frac{d-1}{2}-1}}{B(\frac{1}{2}, \frac{d-1}{2})}\\
                &=  \frac{(1-t)^{\frac{d-1}{2}-1}(1+t)^{\frac{d-1}{2}-1}}{B(\frac{1}{2}, \frac{d-1}{2})}\\
                &= \frac{(1-t)^{\frac{d-1}{2}-1}(2-(1-t))^{\frac{d-1}{2}-1}}{B(\frac{1}{2}, \frac{d-1}{2})}\\
                &= \frac{2^{\frac{d-1}{2}-1}(1-t)^{\frac{d-1}{2}-1}(1-\frac{(1-t)}{2})^{\frac{d-1}{2}-1}}{B(\frac{1}{2}, \frac{d-1}{2})}\\
                &=\frac{2^{\frac{d-1}{2}-1}(1-t)^{\frac{d-1}{2}-1}}{B(\frac{1}{2}, \frac{d-1}{2})}(\sum_{i=0}^{=\infty}{\frac{d-1}{2}-1 \choose i}\left(\frac{1-t}{2}\right)^{i}).\\
    \end{split}
\end{equation}
The last line above was obtained using Newton's generalized binomial theorem for real exponent~\citep{coolidge1949story}. It involves the term ${\frac{d-1}{2}-1 \choose i} = {\frac {(\frac{d-1}{2}-1)_{i}}{i!}}$ with $ (\cdot )_{i}$ the Pochhammer symbol used to designate a falling factorial~\citep{abramowitz1948handbook}.
We have obtained an expression of $f_{\text{radial}}$ involving an infinite weighted sum of powers of $(1-t)$ with exponents greater or equal to 0 since $d\geq 3$. Therefore, by uniqueness of the Taylor polynomial, we derive that the Taylor series expansion of $f_{\text{radial}}$ near $1$ is:
\begin{equation}
        f_{\text{radial}}(t)
                =\frac{2^{\frac{d-1}{2}-1}(1-t)^{\frac{d-1}{2}-1}}{B(\frac{1}{2}, \frac{d-1}{2})} + o((1-t)^{\frac{d-1}{2}-1}).
\end{equation}

Since by definition $F_\text{radial}$ is the primitive of $f_\text{radial}$ on $[-1, 1]$ and that $F_\text{radial}(1)=1$, we can integrate the above equation to get:

\begin{equation}
\begin{split}
            F_{\text{radial}}(t)
                &= 1 - \frac{2}{d-1}\frac{2^{\frac{d-1}{2}-1}(1-t)^{\frac{d-1}{2}}}{B(\frac{1}{2}, \frac{d-1}{2})} + o((1-t)^{\frac{d-1}{2}})\\
                &= 1 - \frac{2^{\frac{d-1}{2}}(1-t)^{\frac{d-1}{2}}}{(d-1)B(\frac{1}{2}, \frac{d-1}{2})} + o((1-t)^{\frac{d-1}{2}}).
\end{split}
\end{equation}
Since this is exactly the Equation~\eqref{eq:cdf_taylor_series}, this completes the proof.
\end{proof}

\begin{lemma}
\label{lemma:gev}
        Let $d \geq 3$, $\VS \in \mathcal{S}^{d-1}$ and let $F_{\text{radial}}$ be defined as in Lemma \ref{lemma:taylor_series}. For $n \in \mathbb{N}^{*}$, let $\mathcal{X}_{n} \sim \mathcal{U}(\mathcal{S}^{d-1})$ be a set of $n$ i.i.d. random vectors uniformly distributed on $\mathcal{S}^{d-1}$, and let $F_n$ be the CDF of $\max_{i}\dotproduct{\VS}{X_i}$. Then, for $u \in [-1,1]$:
        \begin{equation}
        \label{eq:limit_max}
            \lim_{n\to +\infty} F_n(a_{n}u + b_{n}) = e^{(-(1+\gamma u)^{\frac{-1}{\gamma}})},
        \end{equation}
        where $\gamma = -\frac{2}{d-1}$, $a_{n} = \frac{1}{2}\left(\frac{(d-1)B(\frac{1}{2}, \frac{d-1}{2})}{n}\right)^{\frac{2}{d-1}}$ with $B$ the Beta function, and $b_{n} = 1 - \frac{2}{d-1}a_{n}$.
\end{lemma}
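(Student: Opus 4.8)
The plan is to recognize Lemma~\ref{lemma:gev} as the classical Fisher--Tippett--Gnedenko statement that a normalized maximum of i.i.d.\ variables converges to a generalized extreme value (GEV) law, and to reduce everything to the endpoint expansion already established in Lemma~\ref{lemma:taylor_series}. Because the $X_i$ are i.i.d., the maximum $\max_i \dotproduct{\VS}{X_i}$ has CDF $F_n = (F_{\text{radial}})^n$, so
\begin{equation}
F_n(a_n u + b_n) = \exp\big( n \log F_{\text{radial}}(a_n u + b_n) \big).
\end{equation}
First I would note that $\dotproduct{\VS}{X}$ has bounded support with right endpoint $1$, which is precisely the situation producing a negative shape parameter (the Weibull-type max-domain of attraction). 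Since $a_n \to 0$ and $b_n \to 1$ for fixed $u$, the argument $a_n u + b_n$ approaches $1$, hence $F_{\text{radial}}(a_n u + b_n) \to 1$, and I may use $\log(1-x) = -x + \mathcal{O}(x^2)$ to replace $n \log F_{\text{radial}}$ by $-n\,(1 - F_{\text{radial}})$ up to a controlled error.

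The second step substitutes the local expansion of Lemma~\ref{lemma:taylor_series}. Writing $\alpha := \tfrac{d-1}{2}$ and $C := \tfrac{2^{\alpha}}{(d-1)B(\frac12,\alpha)}$, that lemma gives $1 - F_{\text{radial}}(t) = C\,(1-t)^{\alpha}\,(1+o(1))$ as $t \to 1$. Since $1 - (a_n u + b_n)$ is an affine-in-$u$ multiple of $a_n$, I would plug this in and factor out $n\,C\,a_n^{\alpha}$. The scale $a_n$ is chosen precisely so that this prefactor is constant --- indeed $n\,C\,a_n^{\alpha} = 1$, a one-line check using $\tfrac{2}{d-1} = \tfrac1\alpha$ --- so that, after collecting the remaining affine factor in $u$, one obtains $n\,(1 - F_{\text{radial}}(a_n u + b_n)) \to (1+\gamma u)^{-1/\gamma}$ with $\gamma = -\tfrac1\alpha = -\tfrac{2}{d-1}$. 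Exponentiating yields the claimed GEV limit, for every $u$ lying in the support region $1 + \gamma u > 0$.

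The hard part will not be identifying the limit but making the two successive approximations rigorous enough to pass to it. The decisive point is that the remainder $o((1-t)^{\alpha})$ of Lemma~\ref{lemma:taylor_series}, evaluated at $t = a_n u + b_n$ where $1-t$ is of order $a_n$, must still vanish after multiplication by $n$; this succeeds only because the exact scaling of $a_n$ keeps $n\,a_n^{\alpha}$ bounded, so that $n\cdot o(a_n^{\alpha}) = o(1)$. I would similarly bound the logarithmic remainder: with $x = 1 - F_{\text{radial}}(a_n u + b_n) = \mathcal{O}(1/n)$ one has $n\,\mathcal{O}(x^2) = \mathcal{O}(1/n) \to 0$. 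A final routine verification is that $a_n u + b_n$ eventually lies in the interval $(-1,1)$ on which the expansion is valid, which is immediate from $a_n \to 0$ and $b_n \to 1$ at fixed $u$.
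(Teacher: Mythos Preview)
Your proposal is correct and follows essentially the same route as the paper: both write $F_n = F_{\text{radial}}^n$, pass to $n\log F_{\text{radial}}(a_n u+b_n)$, insert the endpoint expansion of Lemma~\ref{lemma:taylor_series}, and use the specific choice of $a_n,b_n$ to reduce the expression to $-(1+\gamma u)^{-1/\gamma}+o(1)$ before exponentiating. The only cosmetic difference is that the paper first invokes Gnedenko's domain-of-attraction criterion to identify $\gamma$ and guarantee convergence, whereas you go straight to the direct computation (which already contains that information), making your argument slightly more economical.
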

\begin{proof}
    The proof relies on the Fisher–Tippett–Gnedenko theorem \citep{gnedenko1943distribution} which states that if there exists a couple of sequences $a_n$ and $b_n$ such that the left term of Equation~\eqref{eq:limit_max} converges, then its limit should be the CDF of a Generalized Extreme Value distribution (GEV) with shape parameter $\gamma$, which is the right term of Equation~\eqref{eq:limit_max}.
    Theorem 5 of \citet{gnedenko1943distribution} provides a necessary and sufficient convergence condition  for a random variable with maximal value $x_{\text{max}}$ and CDF F, provided that $\gamma < 0$:
    \begin{equation}
    \label{eq:existence}
            {\displaystyle \ \lim _{t\rightarrow 0^{+}}{\frac {\ 1-F\!\left(\ x_{\text{max}}-u\ t\ \right)\ }{1-F(\ x_{\text{max}}-t\ )}}=u^{\left({\tfrac {-1~}{\ \gamma \ }}\right)}\ } \text{for all} 
{\displaystyle \ u>0~.}
    \end{equation}
Recall that Lemma~\ref{lemma:taylor_series} gives us the Taylor expansion of $F_{\text{radial}}$ near 1 : $F_{\text{radial}}(t) = 1 - K(1-t)^{\frac{d-1}{2}} + o((1-t)^{\frac{d-1}{2}})$ with $K =\frac{2^{\frac{d-1}{2}}}{(d-1)B(\frac{1}{2}, \frac{d-1}{2})}$. Knowing that $x_{\text{max}} =1$, we obtain that, $\forall u > 0$:
    \begin{equation}
    \begin{split}
    \lim _{t\rightarrow 0^{+}}{\frac {\ 1-F_{\text{radial}}\!\left( 1 - u\ t\ \right)\ }{1-F_{\text{radial}}(1 -t )}}&=\lim _{t\rightarrow 0^{+}}\frac{1 - (1-K(ut)^{\frac{d-1}{2}}) + o((t)^{\frac{d-1}{2}}) }{1 - (1-K(t)^{\frac{d-1}{2}}) + o((t)^{\frac{d-1}{2}})}\\
    &=\lim _{t\rightarrow 0^{+}}\frac{K(ut)^{\frac{d-1}{2}} + o((t)^{\frac{d-1}{2}}) }{K(t)^{\frac{d-1}{2}} + o((t)^{\frac{d-1}{2}})}\\
    &=u^{\left({\tfrac {d-1}{2}}\right)},
    \end{split}
\end{equation}
    which guarantees convergence and in the same time gives the value of $\gamma = -\frac{2}{d-1}$.

    To find suitable sequences $a_n$ and $b_n$, we can use the fact that $F_{n}(t) = F_{\text{radial}}(t)^{n}$ and study the behavior of $\ln{F_{n}(t)}$ near $t=1$:
     \begin{equation}
    \begin{split}
    \ln{F_{n}(t)} &=\ln{(F_{\text{radial}}(t)^{n})} \\
                  &= n\ln{(F_{\text{radial}}(t))}\\
                  &=n (\ln{(1 - K (1-t)^{\frac{-1}{\gamma}} + o((1-t)^{\frac{-1}{\gamma}}))}) \text{ as } t \to 1^{-}\\
                  &=-nK((1-t)^{\frac{-1}{\gamma}} + o((1-t)^{\frac{-1}{\gamma}}))  \text{ as } t \to 1^{-}.
        \end{split}
\end{equation}
By defining $a_n = -\gamma(Kn)^{\gamma}$, $b_{n} = 1 - (Kn)^{\gamma} $ and doing the change of variable $u = \frac{t - b_n}{a_n}$, we see that:
\begin{equation}
    \begin{split}
        t &= a_n u + b_n\\
        &= 1 - (1 + \gamma u)(Kn)^{\gamma}.
    \end{split}
\end{equation}
Since for every $u$, $\lim_{n \to +\infty} (1 + \gamma u)(Kn)^{\gamma} = 0$ (recall that $\gamma < 0)$, the term $o((1-x)^{\frac{-1}{\gamma}}) \text{ as } x \to 1^{-}$ is equivalent to $o(\frac{1}{n})  \text{ as } n \to +\infty$. this means that:
\begin{equation}
    \begin{split}
    \ln{(F_{n}(a_nu+b_n))} &= -nK\left(\left((1 + \gamma u)(Kn)^{\gamma}\right)^{\frac{-1}{\gamma}} + o\left((\frac{1}{n})\right)\right)   \text{ as } n \to +\infty.\\
    &= - (1 + \gamma u)^{\frac{-1}{\gamma}} + o(1) \text{ as } n \to +\infty.
    \end{split}
\end{equation}
We can now consider the exponential of the above expression to get our asymptotic maximum distribution:
\begin{equation}
    \begin{split}
        \lim_{n\to +\infty} F_{n}(a_nu+b_n) = e^{-(1+\gamma u)^{\frac{-1}{\gamma}}},
        \end{split}
\end{equation}
which concludes the proof.
\end{proof}

\begin{corollary}
With $\Gamma : z\mapsto \int _{0}^{\infty }t^{z-1}e^{-t}{\text{ d}}t$ the Gamma function~\citep{abramowitz1948handbook}, we have:
\label{corollary:expectation}
\begin{equation}
        \label{eq:corollary_expectation}
        \E_{\mathcal{X}_n \sim \mathcal{U}(\mathcal{S}^{d-1})}\Big[\max_{i}\dotproduct{\VS}{X_i}\Big] = 1 - \frac{\Gamma(\frac{d+1}{d-1})}{2}\left(\frac{(d-1)\mathrm {B}(\frac{1}{2}, \frac{d-1}{2})}{n}\right)^{\frac{2}{d-1}} + o(\frac{1}{n^{\frac{2}{d-1}}}).
\end{equation}
\end{corollary}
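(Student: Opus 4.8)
Write $M_n := \max_{i}\dotproduct{\VS}{X_i}$, so the claim is a first-moment refinement of the extreme-value convergence established in Lemma~\ref{lemma:gev}. The plan is to upgrade the convergence in distribution of the normalized maximum $(M_n-b_n)/a_n$ to convergence of its expectation, and then to read off $\E[M_n]$ from the mean of the limiting Generalized Extreme Value (GEV) law. Lemma~\ref{lemma:gev} gives $(M_n-b_n)/a_n \xrightarrow{D} G$, where $G$ is the standard GEV with shape $\gamma=-\tfrac{2}{d-1}\in[-1,0)$; since $\gamma<0$ this is a reversed Weibull, bounded above and possessing finite moments of all orders. Its mean is the classical closed form $\frac{\Gamma(1-\gamma)-1}{\gamma}$, valid whenever $\gamma<1$ and $\gamma\neq0$.

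Granting that the means converge, the rest is substitution. With $\gamma=-\tfrac{2}{d-1}$ one has $1-\gamma=\tfrac{d+1}{d-1}$, hence $\Gamma(1-\gamma)=\Gamma(\tfrac{d+1}{d-1})$, and
\begin{equation}
\E[M_n]=b_n+a_n\,\frac{\Gamma(\tfrac{d+1}{d-1})-1}{\gamma}+o(a_n).
\end{equation}
Inserting the explicit normalizing sequences of Lemma~\ref{lemma:gev} and collecting terms, the contributions proportional to $a_n$ coming from $b_n$ and from the ``$-1$'' in the GEV mean cancel, and because $a_n\asymp n^{-2/(d-1)}$ the remainder $o(a_n)$ becomes $o(n^{-2/(d-1)})$. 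What survives is the single term $-\Gamma(\tfrac{d+1}{d-1})\,(nK)^{-2/(d-1)}$ with $K=\tfrac{2^{(d-1)/2}}{(d-1)\mathrm{B}(\frac12,\frac{d-1}{2})}$; simplifying $(nK)^{-2/(d-1)}=\tfrac12\big(\tfrac{(d-1)\mathrm{B}(\frac12,\frac{d-1}{2})}{n}\big)^{2/(d-1)}$ reproduces Equation~\eqref{eq:corollary_expectation} exactly.

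The one genuine obstacle is the step taken for granted above: convergence in distribution does not by itself imply convergence of the first moment, so I must establish uniform integrability of $(M_n-b_n)/a_n$. The upper side is immediate, since $M_n\le1$ forces this variable to be bounded above by $(1-b_n)/a_n\to-1/\gamma$, exactly the right endpoint of $G$. The lower tail is the delicate part: here I would exploit $F_n(t)=F_{\text{radial}}(t)^n$ together with $F_{\text{radial}}(t)<1$ for every $t$ bounded away from $1$, so that $\mathbb{P}(M_n\le t)$ is a fixed quantity in $(0,1)$ raised to the power $n$ and therefore decays faster than any polynomial once $t$ leaves a shrinking neighbourhood of $1$. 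This domination controls the negative part and legitimizes passing to the limit in the mean.

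As a self-contained alternative that avoids the uniform-integrability bookkeeping, I would instead start from the tail formula $\E[M_n]=1-\int_{-1}^{1}F_n(t)\,dt$, valid because $M_n\in[-1,1]$, insert $F_n=F_{\text{radial}}^{\,n}$ together with the expansion of Lemma~\ref{lemma:taylor_series}, and substitute $t=1-s$ followed by $w=nK s^{(d-1)/2}$. The leading integral then collapses to $\tfrac{2}{d-1}(nK)^{-2/(d-1)}\int_{0}^{\infty}e^{-w}w^{2/(d-1)-1}\,dw=(nK)^{-2/(d-1)}\Gamma(\tfrac{d+1}{d-1})$, using $\tfrac{2}{d-1}\Gamma(\tfrac{2}{d-1})=\Gamma(\tfrac{d+1}{d-1})$, which again yields the claim. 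In this route the remaining work is to show that replacing $(1-Ks^{(d-1)/2})^n$ by its exponential limit, extending the $s$-integral to $+\infty$, and integrating the $o(\cdot)$ remainder of Lemma~\ref{lemma:taylor_series} each contribute only at order $o(a_n)$ — once more controlled by the rapid decay of $F_n$ away from $t=1$.
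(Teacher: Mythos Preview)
Your main route is exactly the paper's: use Lemma~\ref{lemma:gev} for the GEV limit of $(M_n-b_n)/a_n$, pass to the mean via $\E[G]=\frac{\Gamma(1-\gamma)-1}{\gamma}$, and substitute $a_n,b_n,\gamma$ to recover Equation~\eqref{eq:corollary_expectation}. You are in fact more careful than the paper at the one nontrivial step: the paper argues that since $(M_n-b_n)/a_n$ is ``bounded for every $n$'' one may take the identity as the Portmanteau test function $\phi$ and conclude convergence of expectations, but the lower endpoint $(-1-b_n)/a_n$ tends to $-\infty$, so this is not a uniform bound and the identity is not a bounded continuous function on the union of supports. Your explicit control of the lower tail via the geometric decay of $F_{\text{radial}}(t)^n$ for $t$ bounded away from $1$ is precisely what supplies the missing uniform integrability and makes the mean-convergence step rigorous. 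The tail-integral alternative you sketch (starting from $\E[M_n]=1-\int_{-1}^{1}F_n(t)\,dt$ and changing variables) does not appear in the paper; it bypasses the extreme-value machinery entirely and would give a more self-contained argument, at the cost of essentially the same remainder bookkeeping you already identify.
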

\begin{proof}
    According to the Portmanteau theorem~\citep{billingsley2013convergence}, Lemma \ref{lemma:gev} is equivalent to:
    \begin{equation}
    \begin{split}
        \frac{\max_{i}\dotproduct{\VS}{X_i} - b_{n}}{a_n} {~\xrightarrow {D}~} \text{GEV}(\gamma), 
    \end{split}
    \end{equation}
    where $\text{GEV}(\gamma)$ is a generalized extreme value distribution with shape parameter $\gamma$~\citep{gnedenko1943distribution}.
    Recall that if a sequence $ Z_1, Z_2,...$ of random variables converges in distribution to a random variable $Z$, then for all bounded continuous function $\phi$, ${\displaystyle \lim_{n\to +\infty}\E[\phi(Z_{n})] = \E[\phi(Z)]}$. Since
    $\frac{\max_{i}\dotproduct{\VS}{X_i} - b_{n}}{a_n}$ is bounded for every $n$, we can consider the identity function for $\phi$ and obtain:
        \begin{equation}
        \lim_{n\to +\infty}\E_{\mathcal{X}_n \sim \mathcal{U}(\mathcal{S}^{d-1})}\left[\frac{\max_{i}\dotproduct{\VS}{X_i} - b_{n}}{a_n}\right] = \E\left[\text{GEV}(\gamma)\right]= \frac{\Gamma(1-\gamma) - 1}{\gamma}.
    \end{equation}
    Replacing $\gamma$, $a_n$ and $b_n$ by their respective expressions, it implies that:
    \begin{equation}
        \begin{split}
            \lim_{n\to +\infty}\frac{\E_{\mathcal{X}_n \sim \mathcal{U}(\mathcal{S}^{d-1})}\left[\max_{i}\dotproduct{\VS}{X_i}\right] - 1 + (Kn)^{-\frac{2}{d-1}}}{(Kn)^{-\frac{2}{d-1}}} + \Gamma(\frac{d-1}{d-1}) - 1 &= 0 \\
            \implies \quad \quad \lim_{n\to +\infty}\frac{\E_{\mathcal{X}_n \sim \mathcal{U}(\mathcal{S}^{d-1})}\left[\max_{i}\dotproduct{\VS}{X_i}\right] - 1  + K^{-\frac{2}{d-1}}\Gamma(\frac{d-1}{d-1})}{n^{-\frac{2}{d-1}}}&= 0.
        \end{split}
    \end{equation}
Since $K^{-\frac{2}{d-1}} = \frac{1}{2}\left(\frac{(d-1)\mathrm {B}(\frac{1}{2}, \frac{d-1}{2})}{n}\right)^{\frac{2}{d-1}}$, this is equivalent to writing:
\begin{equation}
\begin{split}
        &\E_{\mathcal{X}_n \sim \mathcal{U}(\mathcal{S}^{d-1})}\left[\max_{i}\dotproduct{\VS}{X_i}\right] - 1  + \frac{1}{2}\left(\frac{(d-1)\mathrm {B}(\frac{1}{2}, \frac{d-1}{2})}{n}\right)^{\frac{2}{d-1}}\Gamma(\frac{d-1}{d-1}) = o(\frac{1}{n^{\frac{2}{d-1}}})\\
    &\Leftrightarrow \E_{\mathcal{X}_n \sim \mathcal{U}(\mathcal{S}^{d-1})}\left[\max_{i}\dotproduct{\VS}{X_i}\right] =  1 - \Gamma(\frac{d-1}{d-1})\frac{1}{2}\left(\frac{(d-1)\mathrm {B}(\frac{1}{2}, \frac{d-1}{2})}{n}\right)^{\frac{2}{d-1}} + o(\frac{1}{n^{\frac{2}{d-1}}}).
\end{split}
\end{equation} 
We have thus obtained Equation~\eqref{eq:corollary_expectation}, concluding the proof of the corollary.
\end{proof}

\subsection{Proof of Proposition \ref{prop:vMF-exp}}
    
We now return to Proposition~\ref{prop:vMF-exp}. In this section we consider the case of vMF-exp when $d>2$ and $X_{i}$ embeddings are uniformly distributed on $\mathcal{S}^{d-1}$.
Under those assumptions: 
\begin{equation}
 P_{\text{vMF-exp}}(a \mid n, d, V, \kappa) =  \frac{f_{\text{vMF}}(A\mid V,\kappa) \mathcal{A}(\mathcal{S}^{d-1}) }{n} + \mathcal{O}(\frac{1}{n^{1+\frac{2}{d-1}}}).
\end{equation}
\begin{proof}

Similarly to the 2 dimensional case, the definition of $P_{\text{vMF-exp}}(a \mid n, d, V, \kappa)$ is:
\begin{equation}
\begin{split}
\label{eq_app:def_p_vmf}
        P_{\text{vMF-exp}}(A \mid n, d,V, \kappa)  &= \E_{\mathcal{X}_n \sim  \mathcal{U}(\mathcal{S}^{d-1})}\left[{\mathbb{P}(\VS \in \VorXone{A}} \mid \VS \sim \text{vMF}(V,\kappa) )\right],
\end{split}
\end{equation}
which can be written using the PDF of the vMF distribution:
\begin{equation}
    P_{\text{vMF-exp}}(A \mid n, d,V, \kappa)  =  \E_{\mathcal{X}_n \sim  \mathcal{U}(\mathcal{S}^{d-1})} \left[\int_{\VS \in  \VorXone{A}} f_{\text{vMF}}(\VS \mid V, \kappa) \diff\VS\right].
\end{equation}
\label{eq_app:int_vmf_d}

As done in the 2D case, we study the Taylor expansion of $f_{\text{vMF}}$ near $A$:
\begin{equation}
\begin{split}
    \forall \VS \in \VorXone{A}, f_{\text{vMF}}(\VS \mid \kappa, V) &= C_{d}(\kappa ) e^{\kappa \dotproduct{V}{\VS}} \\
     &=C_{d}(\kappa )e^{\kappa \dotproduct{V}{A}} e^{\kappa \dotproduct{V}{\VS-A}}\\
     &= f_{\text{vMF}}(A \mid V, \kappa)\sum_{i=0}^{\infty}\frac{(\kappa \dotproduct{V}{\VS-A})^{i}}{i!}\\
     &=f_{\text{vMF}}(A \mid V, \kappa) (1 + \kappa \dotproduct{V}{\VS-A} + R_{1}(\VS)).
\end{split}
\end{equation}
with $R_{1}(\VS) = \sum_{i=2}^{\infty}\frac{(\kappa \dotproduct{V}{\VS-A})^{i}}{i!}$.
Leveraging the linearity property of both integration and expectation~\citep{jacod2004probability}, we can study $P_{\text{vMF-exp}}(A \mid n, d,V, \kappa)$ by assessing separately the contribution of the different terms of the expansion of $f_{\text{vMF}}$ in:
\begin{equation}
\begin{split}
           &P_{\text{vMF-exp}}(A \mid n, d,V, \kappa) = \\
           &\E_{\mathcal{X}_n \sim  \mathcal{U}(\mathcal{S}^{d-1})}\left[\int_{\VS \in \VorXone{A}}f_{\text{vMF}}(A \mid V, \kappa) (1 + \kappa \dotproduct{V}{\VS-A} + R_{1}(\VS)) \diff\VS\right].
\end{split}
\end{equation}

However, contrary to the 2D case where $\VorXone{A}$ is always defined as the arc between 2 angles on the circle, for $d>2$ the shape of $\VorXone{A}$ is highly dependent of the layout of the elements of $\mathcal{X}_n$ that share a frontier with $A$. Figure~\ref{fig:3d_voronoi-BIG} provides an illustration of the complexity and diversity of the shapes of Voronoï cells for uniformly sampled points on the 3D sphere. 

\begin{figure}[t]
\begin{center}
\includegraphics[width=0.42\linewidth]{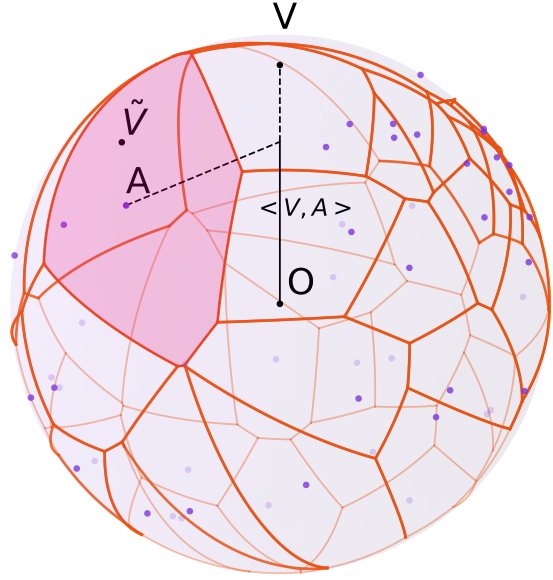}
\caption{For $d=3$: vMF-exp explores the action $A$ when $\VS$ lies in its Voronoï cell, shown in red.}
\label{fig:3d_voronoi-BIG}
\end{center}
\end{figure}

As a consequence, expliciting the bounds of integration, as we did in the 2D case, can be somewhat tedious. Instead, we will leverage the geometrical properties of the problem at hand to estimate $P_{\text{vMF-exp}}(A \mid n, d,V, \kappa)$. We start with the zero-order term.

\subsubsection{Zero-Order Term}
Since the zero-order term is constant, its integral over $\VorXone{A}$ can be expressed as:
\begin{equation}
       \int_{\VS \in \VorXone{A}}f_{\text{vMF}}(A \mid V, \kappa)\diff\VS =f_{\text{vMF}}(A \mid V, \kappa)\mathcal{A}(\VorXone{A}),
\end{equation}
where $\mathcal{A}(\VorXone{A})$ is the value of the surface area of $\VorXone{A}$. To assess the expected value of the above equation for uniformly distributed $\mathcal{X}_n$, we use Lemma \ref{lemma:cell_area} and obtain:
\begin{align}
\label{eq_app:zero_order_any_d}
           \E_{\mathcal{X}_n \sim  \mathcal{U}(\mathcal{S}^{d-1})}\left[\int_{\VS \in \VorXone{A}}f_{\text{vMF}}(A \mid V, \kappa) \diff\VS\right]
           &= \frac{f_{\text{vMF}}(A \mid V, \kappa)\mathcal{A}(\mathcal{S}^{d-1})}{n+1}\\
           &= \frac{f_{\text{vMF}}(A \mid V, \kappa)\mathcal{A}(\mathcal{S}^{d-1})}{n} + \mathcal{O}(\frac{1}{n^{2}}). \nonumber
\end{align}
\subsubsection{First-Order Term}
We want to estimate the value of:
\begin{align}
\label{eq:first_order_diff}
       &\E_{\mathcal{X}_n \sim  \mathcal{U}(\mathcal{S}^{d-1})}\left[\int_{\VS \in \VorXone{A}}f_{\text{vMF}}(A \mid V, \kappa)\kappa \dotproduct{V}{\VS-A}\diff\VS\right]\\
       &=  f_{\text{vMF}}(A \mid V, \kappa)\kappa\left( \dotproduct{V}{\E_{\mathcal{X}_n \sim  \mathcal{U}(\mathcal{S}^{d-1})}\left[\int_{\VS \in \VorXone{A}}\VS\diff\VS\right] }- \frac{\dotproduct{V}{A}\mathcal{A}(\mathcal{S}^{d-1})}{n} \right). \nonumber
\end{align}
Using Lemmas \ref{lemma:cell_normal_vector} and \ref{lemma:lambda_value} as well as Corollary \ref{corollary:expectation}, the left term inside the parentheses is:
\begin{equation}
    \begin{split}
     &\dotproduct{V}{\E_{\mathcal{X}_n \sim  \mathcal{U}(\mathcal{S}^{d-1})}\left[\int_{\VS \in \VorXone{A}}\VS\diff\VS\right] }\\&= \dotproduct{V}{A}
        \frac{\mathcal{A}(\mathcal{S}^{d-1})}{n+1}\E_{\mathcal{X}_n \sim  \mathcal{U}(\mathcal{S}^{d-1}), \VS \sim \mathcal{U}(\mathcal{S}^{d-1})}\Big[ \max_{i}\dotproduct{\VS}{X_i}\Big]\\
    &= \dotproduct{V}{A}
        \frac{\mathcal{A}(\mathcal{S}^{d-1})}{n+1}\left( 1 - \frac{\Gamma(\frac{d+1}{d-1})}{2}\left(\frac{(d-1)\mathrm {B}(\frac{1}{2}, \frac{d-1}{2})}{n}\right)^{\frac{2}{d-1}} + o(\frac{1}{n^{\frac{2}{d-1}}})\right).
    \end{split}
\end{equation}
Re-injecting this expression into Equation \eqref{eq:first_order_diff} gives the following expression for the contribution of the first-order term to the probability of sampling $A$:
\begin{align}
\label{eq:first_order__any_d}
       &\E_{\mathcal{X}_n \sim  \mathcal{U}(\mathcal{S}^{d-1})}\left[\int_{\VS \in \VorXone{A}}f_{\text{vMF}}(A \mid V, \kappa)\kappa \dotproduct{V}{\VS-A} \diff \VS \right] \\
       &=-f_{\text{vMF}}(A \mid V, \kappa)\frac{\mathcal{A}(\mathcal{S}^{d-1})}{n+1}\kappa\dotproduct{V}{A}\left(\frac{\Gamma(\frac{d+1}{d-1})}{2}\left(\frac{(d-1)\mathrm {B}(\frac{1}{2}, \frac{d-1}{2})}{n}\right)^{\frac{2}{d-1}} + o(\frac{1}{n^{\frac{2}{d-1}}})\right). \nonumber
\end{align}
\subsubsection{Remainder Term}
As done in the 2D proof, we leverage the Taylor-Lagrange inequality~\citep{abramowitz1948handbook}. The second derivative of the function $f(x) = C_{d}(\kappa)e^{\kappa x}$ is $f(x)^{(2)} = \kappa^{2} f(x)$, which is bounded on $x \in [-1,1] $ by $M = \kappa^{2}C_{d}(\kappa)e^{\kappa x}$. This implies that:
\begin{equation}
\begin{split}
\label{eq_app:first_order_any_d}
        \lvert R_{1}(\VS)\rvert &\leq \frac{M\dotproduct{V}{\VS -A}^{2}}{2}\\
    &\leq \frac{M\lVert \VS -A \rVert_{2}^{2}}{2} \text{ (according to the Cauchy-Schwarz inequality~\citep{jacod2004probability})}\\
    &= M(1- \dotproduct{\VS}{A}).
\end{split}
\end{equation}
This inequality holds for every $\VS \in  \VorXone{A}$ when $\mathcal{X}_n \sim  \mathcal{U}(\mathcal{S}^{d-1})$, which means that:
\begin{equation}
\begin{split}
        &\E_{\mathcal{X}_n \sim  \mathcal{U}}(\mathcal{S}^{d-1})\left[\int_{\VS \in  \VorXone{A}} f_{\text{vMF}}(A \mid V, \kappa)\lvert R_{1}(\VS) \rvert\diff\VS\right] \\
        &\leq  f_{\text{vMF}}(A \mid V, \kappa)\E_{\mathcal{X}_n \sim  \mathcal{U}}(\mathcal{S}^{d-1})\left[\int_{\VS \in  \VorXone{A}} M(1- \dotproduct{\VS}{A}) \diff\VS\right]\\
        &= f_{\text{vMF}}(A \mid V, \kappa)\frac{\mathcal{A}(\mathcal{S}^{d-1})}{n+1}M(1-\E_{\mathcal{X}_{n+1} \sim  \mathcal{U}}(\mathcal{S}^{d-1})\left[\int_{\VS \in  \mathcal{S}^{d-1}} \max_{i}\dotproduct{\VS}{X_i} \diff\VS\right])\\
        &= f_{\text{vMF}}(A \mid V, \kappa)M\frac{\mathcal{A}(\mathcal{S}^{d-1})}{n+1}\left(\frac{\Gamma(\frac{d+1}{d-1})}{2}\left(\frac{(d-1)\mathrm {B}(\frac{1}{2}, \frac{d-1}{2})}{n}\right)^{\frac{2}{d-1}} + o(\frac{1}{n^{\frac{2}{d-1}}})\right)\\
        &= O(\frac{1}{n^{1+\frac{2}{d-1}}}).
\end{split}
\end{equation}
We used Lemmas \ref{lemma:cell_normal_vector} and \ref{lemma:lambda_value} to go from line 2 to 3, and Corollary \ref{corollary:expectation} to go from line 3 to 4.
In essence, we have bounded the contribution of $R_1(\VS)$ to the probability of sampling $A$ as follows:
\begin{equation}
\begin{split}
\label{eq_app:remainder_any_d}
        \E_{\mathcal{X}_n \sim  \mathcal{U}}(\mathcal{S}^{d-1})\left[\int_{\VS \in  \VorXone{A}} f_{\text{vMF}}(A \mid V, \kappa)\lvert R_{1}(\VS) \rvert\diff\VS\right] = 
         O(\frac{1}{n^{1+\frac{2}{d-1}}})
\end{split}
\end{equation}
Finally, adding up Equations \eqref{eq_app:zero_order_any_d}, \eqref{eq_app:first_order_any_d}, and \eqref{eq_app:remainder_any_d}, we conclude the proof of Proposition~\ref{prop:vMF-exp} for $d \geq 3$ and (via the first-order term) simultaneously justify the approximate probability $P_{1}(a \mid n, V, \kappa)$ introduced in Proposition~\ref{prop:vMF-exp_term_one}.
 \end{proof}

\clearpage
\section{Similar Asymptotic Behavior of B-exp and vMF-exp for Large Action Sets (Proof of  Proposition \ref{prop:limit})}
\label{dernierepreuve}
Finally, Propositions~\ref{prop:B-exp} and \ref{prop:vMF-exp} allow us to derive Proposition \ref{prop:limit}, i.e., that in the setting of Section~\ref{sec:hyp}, we have:
\begin{equation}
\label{eq:limit-2}
\lim_{n\to +\infty} \frac{P_{\text{B-exp}}(a \mid n, d, V, \kappa)}{P_{\text{vMF-exp}}(a \mid n, d, V, \kappa)} = 1. 
\end{equation}

\begin{proof}
    Acording to Proposition \ref{prop:B-exp}, we have:
    \begin{equation}        
    \begin{split}
     P_{\text{B-exp}}(a \mid n, d, V, \kappa) = \frac{f_{\text{vMF}}(A\mid V,\kappa) \mathcal{A}(\mathcal{S}^{d-1}) }{n} + o(\frac{1}{n\sqrt{n}}).
    \end{split}
    \end{equation}
    Moreover, according to Proposition \ref{prop:vMF-exp}, we have:
    \begin{equation}
\begin{split}
 P_{\text{vMF-exp}}(a \mid n, d, V, \kappa) =  \frac{f_{\text{vMF}}(A\mid V,\kappa) \mathcal{A}(\mathcal{S}^{d-1}) }{n}  + {\begin{cases} \mathcal{O}(\frac{1}{n^{2}})&{\text{if }d=2},\\ \mathcal{O}(\frac{1}{n^{1+\frac{2}{d-1}}})&{\text{if }}d>2.\end{cases}}
\end{split}
\end{equation}
Therefore:
\begin{equation}
\begin{split}
\lim_{n\to +\infty} \frac{P_{\text{B-exp}}(a \mid n, d, V, \kappa)}{P_{\text{vMF-exp}}(a \mid n, d, V, \kappa)} &= \lim_{n\to +\infty} \frac{n}{n}\frac{P_{\text{B-exp}}(a \mid n, d, V, \kappa)}{P_{\text{vMF-exp}}(a \mid n, d, V, \kappa)}\\
&= \frac{f_{\text{vMF}}(A\mid V,\kappa) \mathcal{A}(\mathcal{S}^{d-1})  + 0}{f_{\text{vMF}}(A\mid V,\kappa) \mathcal{A}(\mathcal{S}^{d-1}) + 0}\\
&=1.
    \end{split}
\end{equation}
\end{proof}

$$ $$

\section{Link with Thompson Sampling}
\label{annex-TS}
At first glance, one might draw some similarities between vMF-exp and Thompson Sampling (TS) with Gaussian prior for contextual bandits~\citep{chapelle2011empirical}. Admittedly, vMF-exp shares a common spirit with TS, where action selection is preceded by sampling individual weights according to a Normal distribution centered on an observed context/state vector. However, vMF-exp also presents two major differences:
\begin{itemize}
    \item Firstly, in vMF-exp, vector sampling is performed according to a vMF hyperspherical distribution, centered on the state embedding vector $V$. This choice of distribution ensures that vectors with the same inner product with the state vector have the same probability of being sampled, as illustrated in Figure \ref{figure:vmf_sphere}. This aligns better with the similarity used to retrieve nearest neighbors and, as emphasized in this paper, leads to probabilities of exploring actions asymptotically comparable to Boltzmann Exploration (with better scalability) under the theoretical assumptions of Section~\ref{sec:hyp}.
    \item Secondly, vMF-exp is not designed to maximize the expected reward of a policy in an RL or contextual bandit environment and does not impose any parameter update strategy. Instead, it serves as an action selection tool for any scenario where policy updates cannot be performed regularly (as in the batch RL setting commonly found in industrial applications), yet broad exploration must still be guaranteed between consecutive updates. 
\end{itemize}

\clearpage

\section{Sampling from the von Mises-Fisher Distribution}
\label{annexeVMFinpractice}
\subsection{Radial-tangent decomposition}

Given a vector $\VS \in \mathcal{S}^{d-1}$ and a concentration $\kappa \in \mathbb{R}^{+}$, the algorithm described in \cite{pinzon2023fast} sample from a $\text{vMF}(V,\kappa)$ by leveraging the radial-tangent decomposition of the elements of $\mathcal{S}^{d-1}$. For any $\VS \in \mathcal{S}^{d-1}$, let us call $\tilde{t} = \dotproduct{V}{\VS}$. Then we have:
\begin{equation}
     \VS = \tilde{t}V + \sqrt{1-\tilde{t}^{2}}\tilde{V}_{O} 
\end{equation}
where the vector $\tilde{V}_{O}$ has a unit norm and is orthogonal to $V$.

\subsection{vMF distribution}
If, $\VS \sim \text{vMF}(V, \kappa)$, then :
\begin{itemize}
    \item $\tilde{t}$ is a scalar valued random variable. 
    \item $\tilde{V}_{O}$ is a random vector uniformly distributed on the (d-2) dimensional sub-sphere that is centered at and perpendicular to $V$. For instance, for $d=3$, this would mean a circle centered around $V$.
    \item $\tilde{t}$ and $\tilde{V}_{O}$ are independent.
\end{itemize}
Since the reciprocal is also true, $t$ and $\tilde{V}_{O}$ can thus be separately sampled to obtain $\VS$.

\subsection{Sampling $\tilde{t} = \dotproduct{V}{\VS}$}
The PDF of $\tilde{t}$ is known \citep{fisher1953dispersion} and follows:
\begin{equation}
\label{eq:sampling_vmf}
    f_\text{radial}(t;\kappa,d)=\frac{(\kappa/2)^{\frac{d}{2} - 1}}{\Gamma(\frac12)\Gamma(\frac{d-1}{2})I_{\frac{d}{2} - 1}(\kappa)}e^{t\kappa}(1-t^2)^{\frac{d-3}{2}}
\end{equation}

This PDF can be used to sample r through rejection sampling \citep{gentle2009computational}.

\subsection{Sampling $\tilde{V}_{O}$}
$\tilde{V}_{O}$ can be obtained by following the steps of algorithm \ref{algo:sample_vmf}.

\begin{algorithm}
\caption{Sample $\tilde{V}_{O}$}
\label{algo:sample_vmf}
\BlankLine
\textbf{1 -} Sample vector $U$ uniformly from $\mathcal{S}^{d-1}$; \\
\textbf{2 -} Compute projection of $U$ on $V$: $W = \dotproduct{U}{V}V$; \\
\textbf{3 -} Subtract projection and normalize: $\tilde{V}_{O}= \frac{U - W}{||U - W||}$; \\
\textbf{4 -} \Return $\tilde{V}_{O}$
\end{algorithm}

Note that a simple way to sample U uniformly on $\mathcal{S}^{d-1}$ is to sample $d$ standard Gaussians independently (one for each dimension) and then normalize the resulting vector \citep{gentle2009computational}.

\subsection{Wrapping up}
The vector $\VS$ can now be computed by summing the right term of equation \ref{eq:sampling_vmf}.
Overall, we see that sampling $\VS$ from $\text{vMF}(V, \kappa)$ is \textbf{data-independent}, hence the scalability of the approach.

\clearpage

\section{Additional Monte Carlo Simulations}
\label{annex-MC}

\begin{figure*}[h!]
\begin{center}
\subfigure[]
{\includegraphics[width=0.3\linewidth]{figures/small_plots/simu_4_01_05-cropped.pdf}\label{figure:mc1a}}\hfill
\subfigure[]
{\includegraphics[width=0.3\linewidth]{figures/small_plots/simu_8_01_05-cropped.pdf}\label{figure:mc2a}}\hfill
\subfigure[]
{\includegraphics[width=0.3\linewidth]{figures/small_plots/simu_16_01_05-cropped.pdf}\label{figure:mc3a}}\hfill
\subfigure[]
{\includegraphics[width=0.3\linewidth]{figures/small_plots/simu_32_01_05-cropped.pdf}\label{figure:mc4a}}\hfill
\subfigure[]
{\includegraphics[width=0.3\linewidth]{figures/small_plots/simu_64_01_05-cropped.pdf}\label{figure:mc5a}}\hfill
\subfigure[]
{\includegraphics[width=0.3\linewidth]{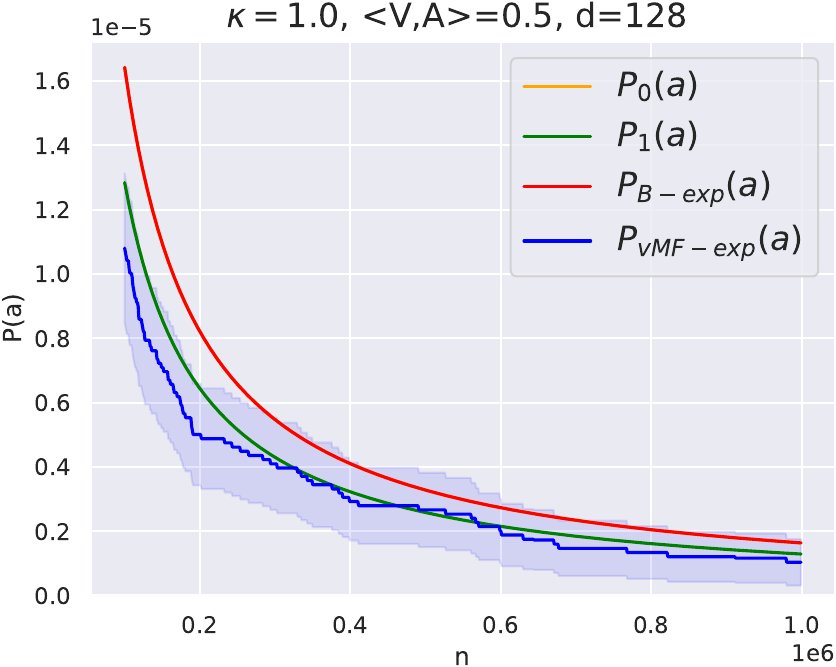}\label{figure:mc6a}}\hfill

\subfigure[]
{\includegraphics[width=0.3\linewidth]{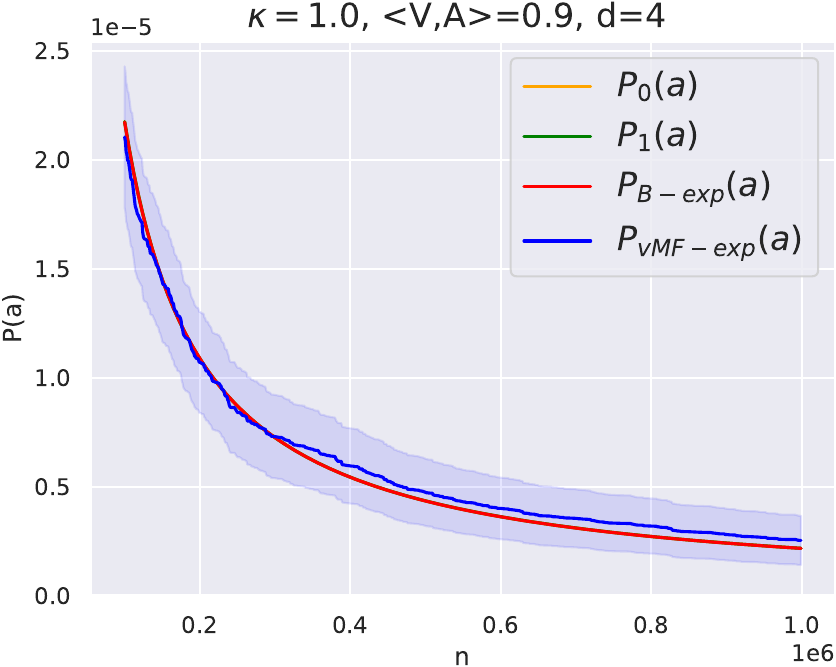}\label{figure:mc7}}\hfill
\subfigure[]
{\includegraphics[width=0.3\linewidth]{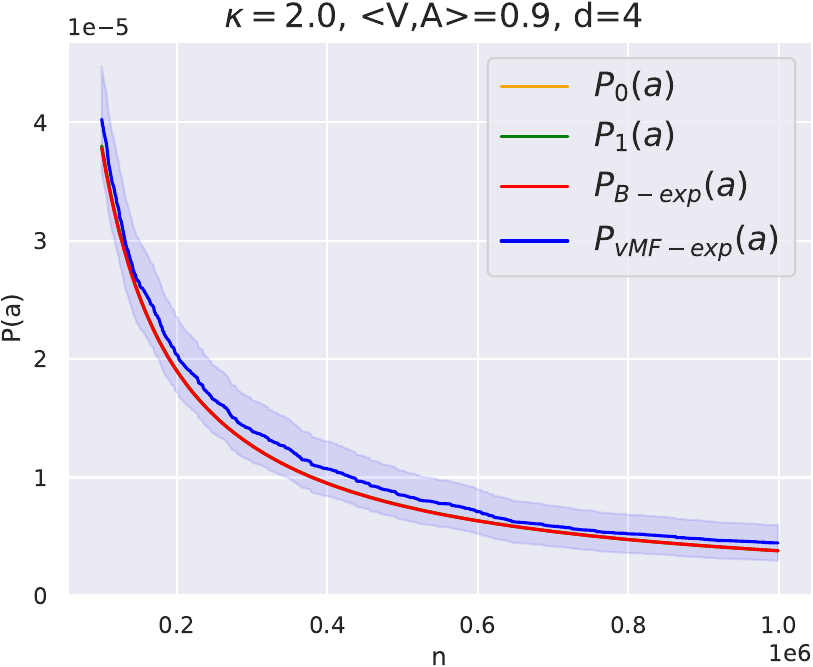}\label{figure:mc8}}\hfill
\subfigure[]
{\includegraphics[width=0.3\linewidth]{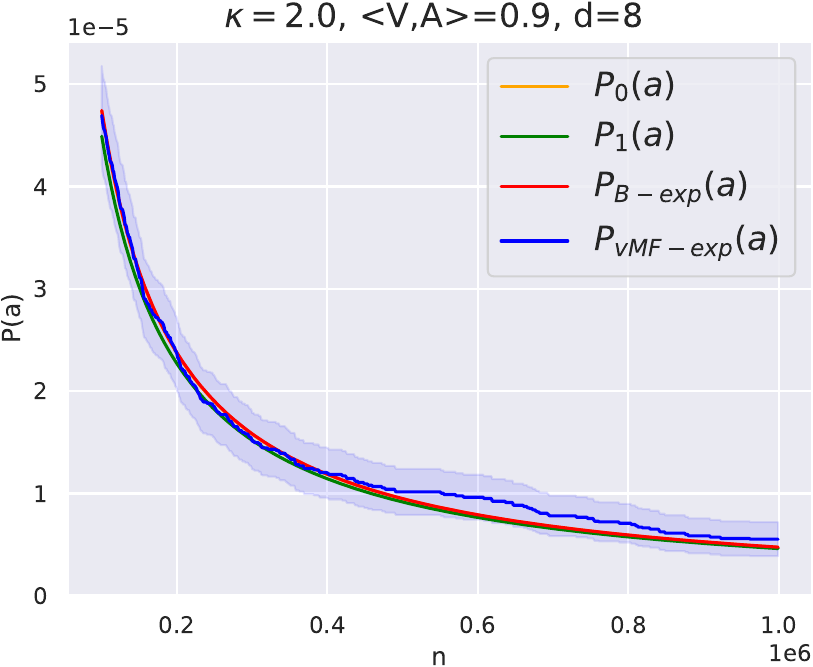}\label{figure:mc9}}\hfill
\subfigure[]
{\includegraphics[width=0.3\linewidth]{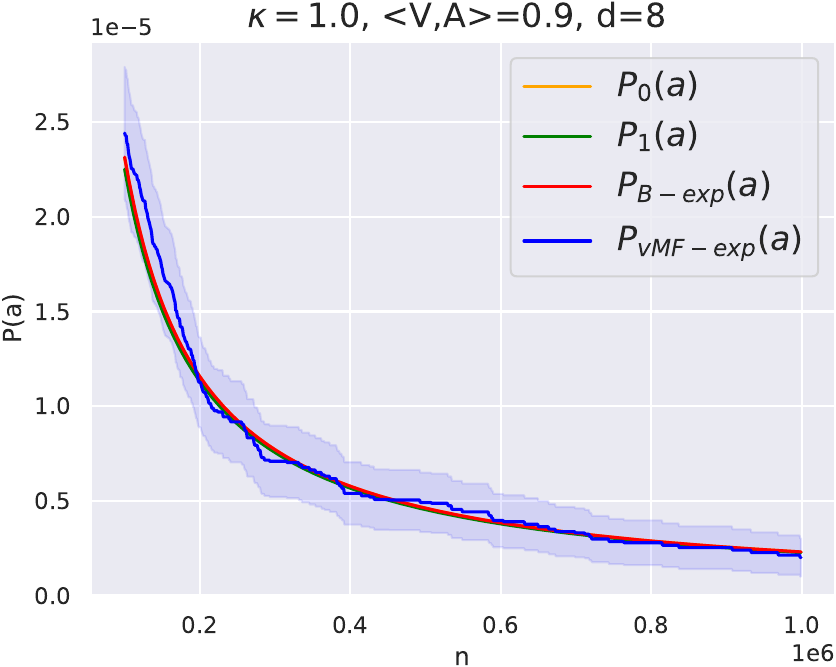}\label{figure:mc10}}\hfill
\subfigure[]
{\includegraphics[width=0.3\linewidth]{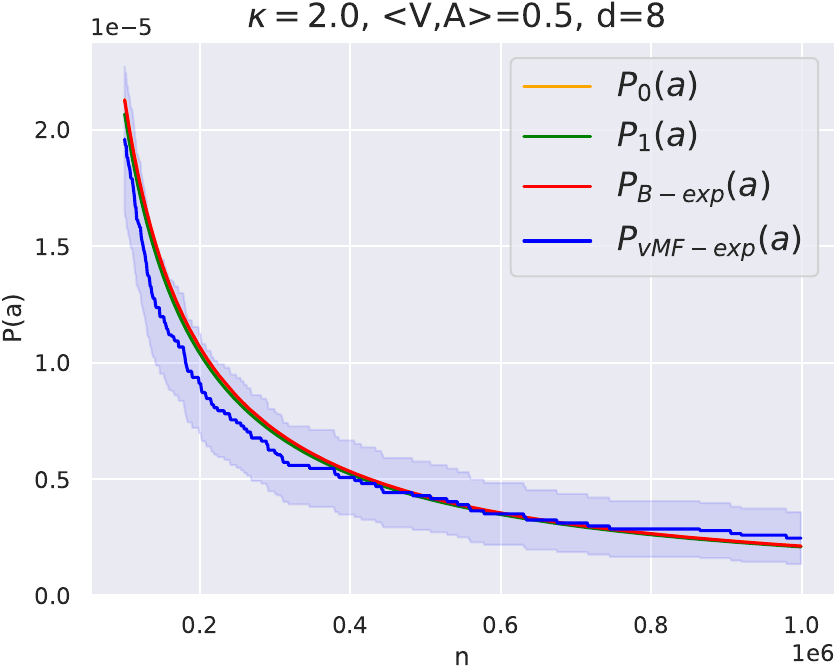}\label{figure:mc11}}\hfill
\subfigure[]
{\includegraphics[width=0.3\linewidth]{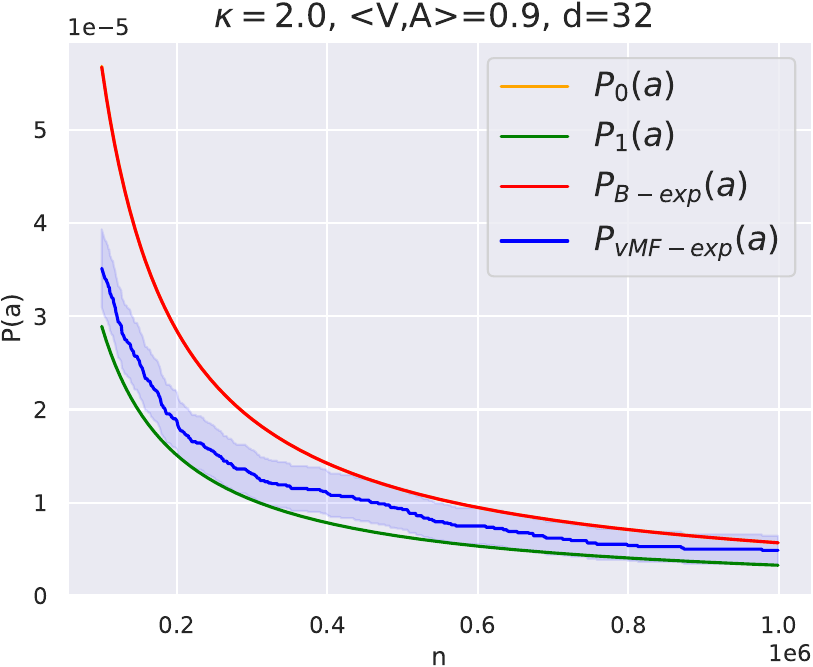}\label{figure:mc12}}
\caption{We report complete results for the Monte Carlo simulations presented and discussed in Section~\ref{discussion}, involving more combinations of $d$, $\kappa$, and $\dotproduct{V}{A})$. We recall that $P_{\text{B-exp}}(a)$ and $P_{0}(a)$ are indistinguishable for this range of $n$ values. We emphasize that the y-axis is on a 1e-5 scale; hence, all probabilities are extremely close.}
\label{fig:mc-BIG}
\end{center}
\end{figure*}

\clearpage

 \section{Additional Experiments on a Real-World Dataset of GloVe Word Embedding Vectors}
 \label{glove-experiments}

While our main contributions in this work are theoretical, we aimed in the main paper to validate our key findings with Monte Carlo simulations, which involved synthetic data. Understanding that some readers may wish to further explore our topic through reproducible experiments on real-world data, we present an additional study in this Appendix~\ref{glove-experiments}. This study experimentally validates the main properties of vMF-exp on a large-scale, publicly available real-world dataset.

\subsection{Experimental setting}
We present vMF-exp experiments on real-world, publicly available data. Specifically,  we compare the behaviors of B-exp and vMF-exp on the GloVe-25 dataset of 1 million GloVe word embedding vectors with dimension $d=25$~\citep{pennington2014glove}. Each vector, learned using word2vec~\citep{mikolov2013efficient} from 2 billion tweets, represents a word token. We subtract the set's average from each vector and divide them by their norms. We obtain a vector set, denoted~$\mathcal{G}$, with all vectors lying on the unit hypersphere, making GloVe-25 a relevant large-scale dataset for~our~study.

Our experiments follow the protocol outlined in Section~\ref{discussion} of the main paper. In this section, we compared the empirical probabilities $P_{\text{B-exp}}(a) $ and $ P_{\text{vMF-exp}}(a) $ of sampling an action $a$ represented by a vector $A$ given a state vector $V$, for varying action numbers $n$ and inner products $\langle V, A \rangle$. While we relied on Monte Carlo simulations with uniformly drawn vectors $\mathcal{X}_n \sim \mathcal{U}(S^{d-1})$, in this Appendix~\ref{glove-experiments}, vectors are sampled from $\mathcal{G}$, with $V$ and $A$ also drawn from~$\mathcal{G}$ such that $\langle V, A \rangle$ matches the pre-selected values. 
Our goal is to empirically compare $P_{\text{B-exp}}(a)$ and $P_{\text{vMF-exp}}(a)$, while verifying the claims that \textbf{P1}, \textbf{P2}, and \textbf{P3} simultaneously hold for vMF-exp.

Finally, in our Sections~\ref{results} and \ref{discussion}, we provided analytical approximations of $P_{\text{B-exp}}(a)$ and $P_{\text{vMF-exp}}(a)$ in the presence of independent and identically distributed (i.i.d.) uniform embedding vectors. We will assess the usefulness of these approximations on these GloVe vectors, which do not strictly satisfy these strong assumptions.

Finally, regarding these experiments on GloVe-25, we note that:
\begin{itemize}
    \item The GloVe-25 dataset is available for download at: \url{https://nlp.stanford.edu/projects/glove/}.
    \item In our experiments, we use the Python vMF sampler from \citet{pinzon2023fast} to efficiently explore large action sets. 
    \item All results are reproducible using our source code: \url{https://github.com/deezer/vMF-exploration}.
\end{itemize}

\subsection{Results and Discussion}

\paragraph{On P1} We now discuss our results.
We first focus on \textbf{P1}. While B-exp requires computing $\langle V, X_i \rangle$ and softmax values for all $n$ vectors $X_i \in \mathcal{X}_n$, vMF-exp only involves sampling a $d$-dimensional vector (in constant time with respect to $n$) and finding its approximate nearest neighbor (ANN) in $\mathcal{X}_n$. 

Table \ref{tab:qps} compares the performance of four popular ANN algorithms on GloVe-25. Following standard ANN literature \citep{simhadri2024results}, our performance metric is the maximum throughput, measured in queries per second (QPS), for which the average recall of the exact top-10 neighbors exceeds 90\%. We also report the throughput of exhaustive search, as an indicator of B-exp's inefficiency. 

Table \ref{tab:qps} shows that exhaustive search yields throughput 2 to 3 orders of magnitude lower than ANN methods. This confirms the significantly better scalability (\textbf{P1}) of vMF-exp compared to B-exp.

\begin{table}[th]
    \centering
        \caption{Performance of popular ANN algorithms on GloVe-25, extracted from the benchmark of \citet{aumuller2017ann}. Following  \citet{simhadri2024results}, our performance metric is the maximum throughput, measured in Queries Per Second (QPS), for which the average recall of the exact top-10 neighbors exceeds 90\%. The evaluated algorithms include two implementations of HNSW \citep{MalkovY16_hnsw}, one from the Faiss library \citep{douze2024faiss} and the other from NMSLIB \citep{BoytsovN13_nmslb}, as well as ScaNN \citep{avq_2020} and NGT-QG \citep{iwasaki2018optimization}. Exhaustive search is 2 to 3 orders of magnitude slower than~ANN~methods.}
        \vspace{0.25cm}
     \resizebox{0.9\textwidth}{!}{
    \begin{tabular}{c||c|c|c|c|c}
    \toprule
         \textbf{Algorithm} & Exhaustive Search & HNSW (Faiss) & HNSW (NMSLIB) & ScaNN & NGT-QG \\ \midrule
         \textbf{Maximum Throughput} & \multirow{2}{*}{34} & \multirow{2}{*}{6197} & \multirow{2}{*}{14080} & \multirow{2}{*}{\textbf{23436}} & \multirow{2}{*}{\textbf{22733}} \\
                  \textbf{in Queries Per Second (QPS)} & &  &  &  &  \\ \bottomrule
    \end{tabular}
    }
    \label{tab:qps}
\end{table}

\paragraph{On P2 and P3}
 Figure \ref{fig:glove-25_boltzmann_vs_vmf} compares $P_{\text{B-exp}}(a)$ and $P_{\text{vMF-exp}}(a)$ for increasing values of $n$ and $\langle V, A \rangle$. Figure \ref{figure:glove-25_boltzmann} highlights the two properties that make B-exp popular in RL: the ability to sample actions with unrestricted radius (\textbf{P2}) and the ordering of sampling probabilities based on action similarity to $V$ (\textbf{P3}). 
 
 Importantly, Figure~\ref{figure:glove-25_vmf} confirms that vMF-exp also satisfies both properties. In our tests, $A$ always has a positive sampling probability, which strictly increases with $\langle V, A \rangle$. Thus, vMF-exp also satisfies \textbf{P2} and \textbf{P3} on GloVe-25.

 \paragraph{On Theoretical Approximations}
Finally, Figure~\ref{fig:glove-25} shows that, although GloVe vectors are not i.i.d. and uniform, the analytical approximations of the main paper for $P_{\text{B-exp}}(a)$ and $P_{\text{vMF-exp}}(a)$ often remain accurate, particularly for B-exp.
Also, vMF-exp closely matches B-exp's probabilities for low absolute values of $\langle V, A \rangle$.

However, as $\lvert\langle V, A \rangle\rvert$ increases, the gap between $P_{\text{vMF-exp}}(a)$ and $P_{\text{B-exp}}(a)$ grows more rapidly than predicted by approximations, 
highlighting the limitations of the i.i.d. and uniform assumptions and opening the way for future research on more general expressions. 

\paragraph{Conclusion}
This additional study confirmed the key theoretical and scalability properties of vMF-exp on a large-scale and publicly available real-world dataset. Our results highlight its potential as a practical solution for exploring large action sets when hyperspherical embedding vectors represent these actions.


\begin{figure*}[h!]
\begin{center}
\subfigure[]
{\includegraphics[width=0.475\linewidth]{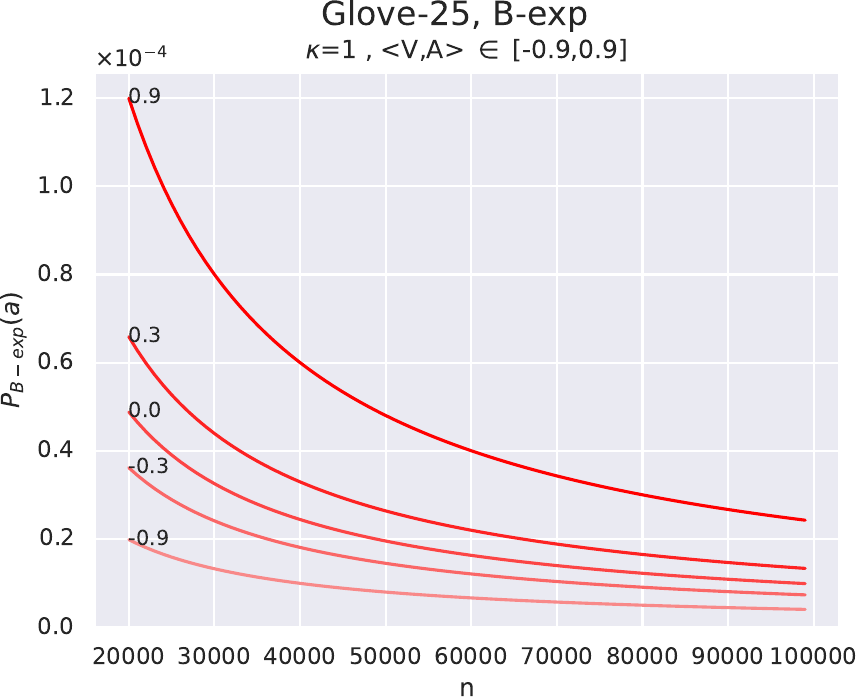}\label{figure:glove-25_boltzmann}}\hfill
\subfigure[]
{\includegraphics[width=0.475\linewidth]{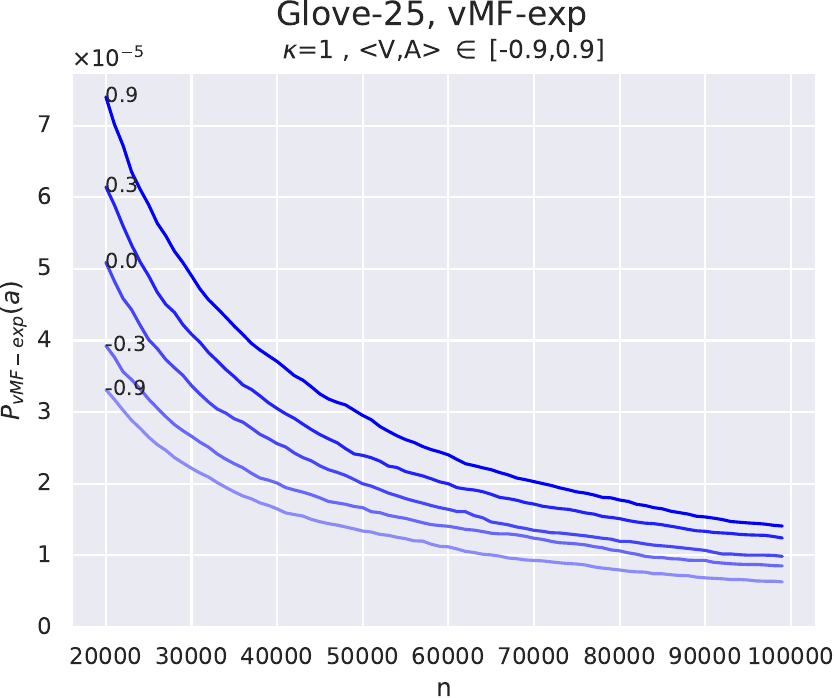}\label{figure:glove-25_vmf}}\hfill
\caption{We report the empirical probabilities $P_{\text{B-exp}}(a)$ and $P_{\text{vMF-exp}}(a)$ of sampling an action~$a$ represented by a GloVe embedding vector $A$, using B-exp and vMF-exp, respectively, given a state vector $V$, with $20000 \leq n \leq 100000$ and $\langle V, A \rangle \in \{0.9, 0.3, 0.0, -0.3, -0.9\}$, and with $d = 25$ and $\kappa = 1$. Sampling is repeated 30 million times and averaged to obtain precise estimates.  For both methods, the probability of sampling $a$ for exploration is strictly positive (\textbf{P2}) and is a strictly increasing function of the inner product similarity $\langle V, A \rangle$ (\textbf{P3}). Results remain consistent~when~$\kappa$~is~modified.}
\label{fig:glove-25_boltzmann_vs_vmf}
\end{center}
\end{figure*}


\begin{figure*}[h!]
\begin{center}
\subfigure[]
{\includegraphics[width=0.325\linewidth]{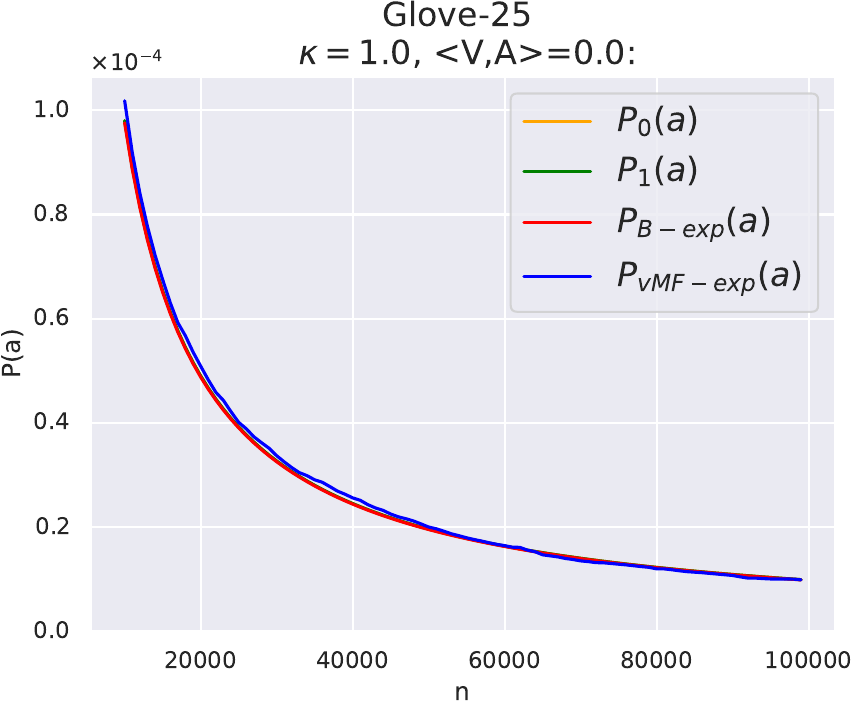}\label{figure:glove-25_0.00}}
\subfigure[]
{\includegraphics[width=0.325\linewidth]{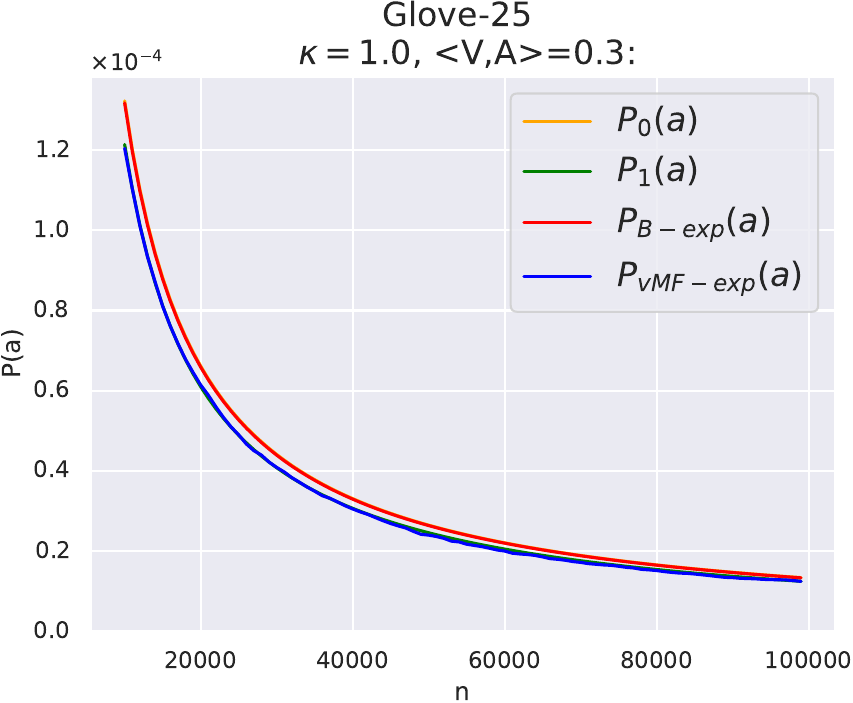}\label{figure:glove-25_0.30}}
\subfigure[]
{\includegraphics[width=0.325\linewidth]{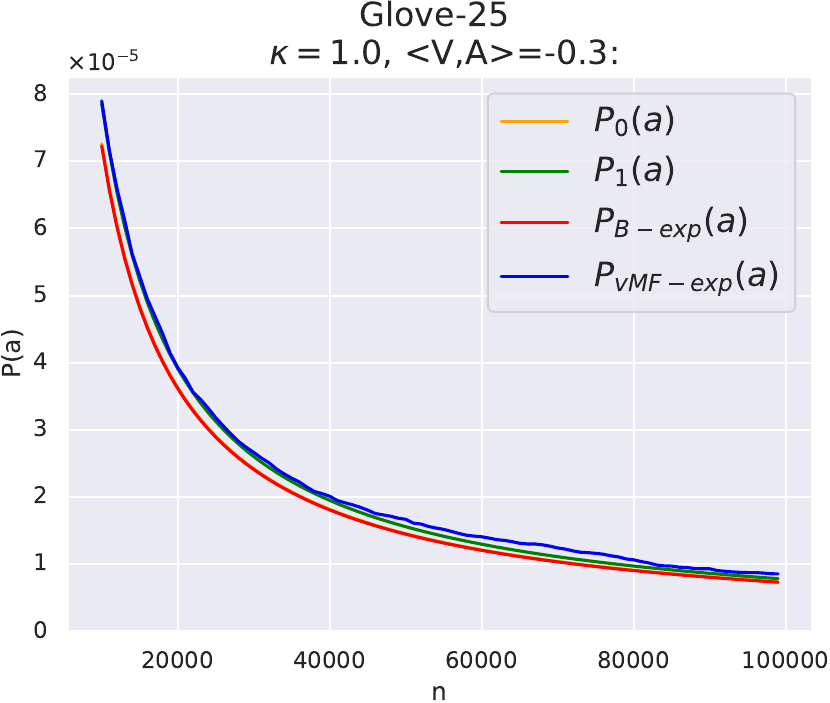}\label{figure:glove-25_-0.30}}
\subfigure[]
{\includegraphics[width=0.325\linewidth]{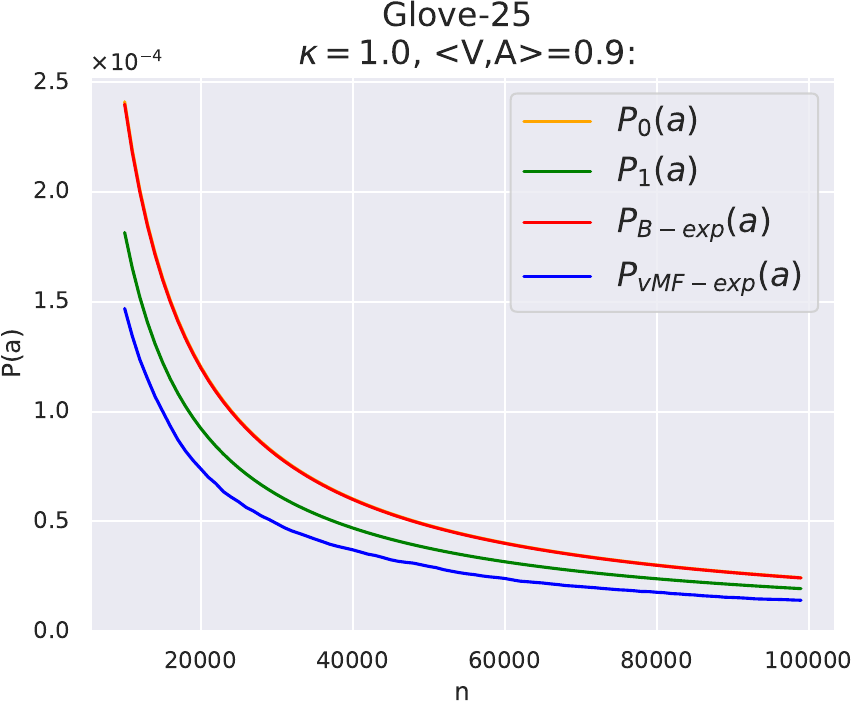}\label{figure:glove-25_0.90}}
\subfigure[]
{\includegraphics[width=0.325\linewidth]{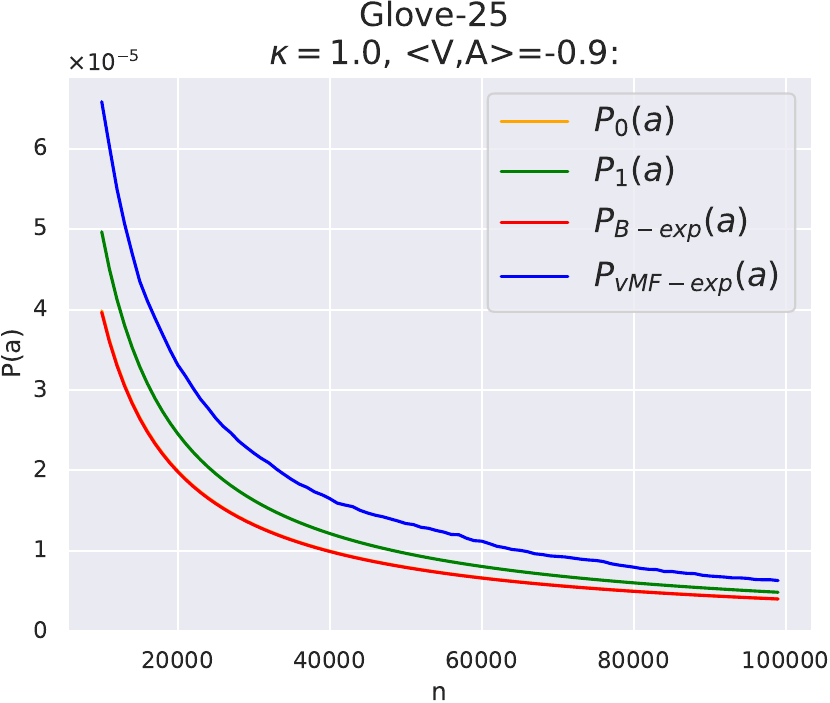}\label{figure:glove-25_-0.90}}

\caption{We compare $P_{\text{B-exp}}(a)$ and $P_{\text{vMF-exp}}(a)$ on GloVe-25 to the analytical approximations $P_0(a)$ and $P_1(a)$ stated in Propositions \ref{prop:limit} to \ref{prop:vMF-exp_term_one} of the main paper under the assumption of i.i.d. and uniformly distributed vectors. Our tests confirm the usefulness of these approximations~on~GloVe-25. The yellow curve ($P_0(a)$) is indistinguishable from the red curve ($P_{\text{B-exp}}(a)$), indicating that Proposition \ref{prop:B-exp} holds across all configurations. Furthermore, $P_{\text{vMF-exp}}(a)$ (blue) remains close to $P_{\text{B-exp}}(a)$ (red) for low absolute  values of $\langle V, A \rangle$ (Figures \ref{figure:glove-25_0.00}, \ref{figure:glove-25_0.30}, and \ref{figure:glove-25_-0.30}), as anticipated by Propositions~\ref{prop:limit} and \ref{prop:vMF-exp}. In Figures \ref{figure:glove-25_0.30} and \ref{figure:glove-25_-0.30}, the small difference between $P_{\text{vMF-exp}}(a)$ and $P_{\text{B-exp}}(a)$ aligns with the alternative expression $P_1(a)$ (green) derived in Proposition \ref{prop:vMF-exp_term_one}. However, as $\lvert\langle V, A \rangle\rvert$ increases (Figures \ref{figure:glove-25_0.90} and \ref{figure:glove-25_-0.90}), the difference between vMF-exp and B-exp grows more rapidly than predicted by Proposition \ref{prop:vMF-exp_term_one}, highlighting the limitations of the i.i.d. and~uniform~assumptions.
}
\label{fig:glove-25}
\end{center}
\vspace{-0.35cm}
\end{figure*}

\clearpage

\section{Application to Large-Scale Music Recommendation }
\label{sec:live_exp-annex}

Our analysis of vMF-exp in this paper was intentionally general, as the method can be applied to various problem settings. In this Appendix~\ref{sec:live_exp-annex}, we showcase a real-world application of vMF-exp.

\subsection{Experimental Setting}

We consider the ``Mixes inspired by'' feature of the global music streaming service Deezer\footnote{\url{https://www.deezer.com/en/}}. This recommender system is deployed at scale and available on the homepage of this service \cite{bendada2023track}. As shown in Figure~\ref{figure:trackmix-annex}, it displays a personalized shortlist of songs, selected from those previously liked by each user. A click on a song generates a playlist of 40 songs ``inspired by`` the initial one, with the aim of helping users discover new music within a catalog including several millions of recommendable~songs.

\begin{figure*}[h!]
\begin{center}
\includegraphics[width=\linewidth]{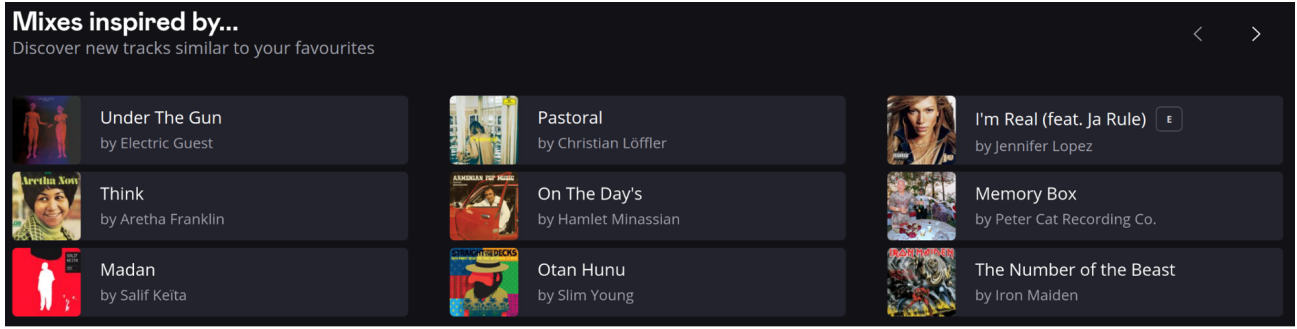}
\caption{Interface of the ``Mixes inspired by'' recommender system on the music streaming service Deezer.}
\label{figure:trackmix-annex}
\end{center}
\end{figure*}

To generate playlists, Deezer leverages a collaborative filtering model~\citep{koren2015advances}.
This model learns unit norm song embedding representations of dimension $d=128$ by factorizing a mutual information matrix based on song co-occurrences in various listening contexts, using singular value decomposition (SVD)~\citep{banerjee2014linear,briand2021semi,briand2024let}. Inner product proximity in the resulting embedding space aims to reflect user preferences. When a user selects an initial song, the model retrieves its embedding, then (approximately) identifies its neighbors in the embedding space using the efficient Faiss library~\citep{johnson2019billion} for ANN.
Currently, Deezer generates the entire playlist at once in production. 

The service is considering RL approaches to, instead, recommend songs one by one while adapting to user feedback on previous songs of the playlist (likes, skips, etc.). However, as explained in Section~\ref{s1}, adopting such approaches would require exploring millions of possible actions/songs, significantly increasing the complexity of this task.

In this  Appendix~\ref{sec:live_exp-annex}, we continue generating ``Mixes inspired by'' playlists all at once, but take a step towards RL by comparing three methods for exploring large action sets of millions of songs:    
\begin{itemize}
 \item vMF-exp: we use the embedding of the user's selected song as the initial state $V$. We sample a random state embedding $\VS$ according to the vMF distribution, using the estimator of \citet{banerjee2005clustering} to tune $\kappa$
  (see Equation~(4) of \citet{sra2012short}). Finally, we recommend the 40 nearest neighbors of $\VS$ in the embedding space according to the~ANN~engine.
  \item TB-exp: comparing vMF-exp to full B-exp is practically intractable at this scale. We compare vMF-exp to TB-exp with a similar $\kappa$. We first retrieve the $m = 500$ nearest neighbors  of the initial song in the embedding space, according to the ANN engine. Then, we generate the playlist by sampling 40 songs from these 500 using a truncated Boltzmann~distribution.
    \item Reference: we also compare vMF-exp to a baseline that retrieves the 500 nearest neighbors of the initial song using ANN, then shuffles them randomly to generate a playlist of~40~songs.
\end{itemize}

In early 2024, we conducted an industrial-scale online A/B test on the music streaming service Deezer to compare these exploration strategies in real conditions. The test involved millions of users worldwide, randomly split and unaware~of~the~test.

\subsection{Results and Discussion}

Firstly, it is important to highlight that we were able to successfully deploy vMF-exp in Deezer's production environment, achieving a sampling latency of just a few milliseconds, comparable to the other methods. This industrial deployment on a service used by millions of users on a daily basis confirms the claimed scalability of vMF-exp and its practical relevance for large-scale applications.

Using vMF-exp or TB-exp for exploration improved the daily number of recommended songs ``liked'' by users through ``Mixes inspired by'' (liking a song adds it to their list of favorites), compared to the reference baseline. For confidentiality, we do not report exact numbers of likes or users in each cohort, but present relative rates with respect to the reference. On average, users exposed to vMF-exp or TB-exp added 11\% more recommended songs to their playlists than the reference cohort. These differences were statistically significant at the 1\% level (p-value $<$ 0.01). No apparent differences were observed between vMF-exp and TB-exp, showing that vMF-exp is competitive with TB-exp.

In addition, vMF-exp, which does not suffer from the restricted radius of TB-exp, recommended more diverse playlists. We measured the average Jaccard similarity~\citep{tan2016introduction} of playlists generated from the same initial selection, to assess how similar the songs sampled from the same state embedding were, for each method. Results reveal that TB-exp had an average Jaccard similarity 35\% higher (less diverse playlists) than vMF-exp, a statistically significant difference at the 1\% level (p-value $<$ 0.01). Therefore, vMF-exp allowed for a more substantial exploration, without compromising~performance.

At the time of writing, Deezer continues to use vMF-exp for ``Mixes inspired by'' recommendations. Playlists are still generated at once, but our work equips this service with an effective strategy to explore their large and embedded action set of millions of songs. This opens interesting avenues for further investigation of RL for recommendation. In the near future, Deezer will launch tests involving actor-critic RL models~\citep{konda1999actor,sutton2018reinforcement} to explore and generate songs sequentially based on~user~feedback.

\end{document}